%\PassOptionsToPackage{table}{xcolor}

\documentclass{article} % For LaTeX2e
\usepackage{iclr2025_conference,times}

% Optional math commands from https://github.com/goodfeli/dlbook_notation.
%%%%% NEW MATH DEFINITIONS %%%%%

\usepackage{amsmath,amsfonts,bm}

% Mark sections of captions for referring to divisions of figures

% Highlight a newly defined term

% Figure reference, lower-case.

% Figure reference, capital. For start of sentence

% Section reference, lower-case.

% Section reference, capital.

% Reference to two sections.

% Reference to three sections.

% Reference to an equation, lower-case.
\def\eqref#1{equation~\ref{#1}}
% Reference to an equation, upper case

% A raw reference to an equation---avoid using if possible

% Reference to a chapter, lower-case.

% Reference to an equation, upper case.

% Reference to a range of chapters

% Reference to an algorithm, lower-case.

% Reference to an algorithm, upper case.

% Reference to a part, lower case

% Reference to a part, upper case

\def\1{\bm{1}}

% Random variables

% rm is already a command, just don't name any random variables m

% Random vectors

% Elements of random vectors

% Random matrices

% Elements of random matrices

% Vectors

% Elements of vectors

% Matrix

% Tensor
\DeclareMathAlphabet{\mathsfit}{\encodingdefault}{\sfdefault}{m}{sl}
\SetMathAlphabet{\mathsfit}{bold}{\encodingdefault}{\sfdefault}{bx}{n}

% Graph

% Sets

% Don't use a set called E, because this would be the same as our symbol
% for expectation.

% Entries of a matrix

% entries of a tensor
% Same font as tensor, without \bm wrapper

% The true underlying data generating distribution

% The empirical distribution defined by the training set

% The model distribution

% Stochastic autoencoder distributions

 % Laplace distribution

% Wolfram Mathworld says $L^2$ is for function spaces and $\ell^2$ is for vectors
% But then they seem to use $L^2$ for vectors throughout the site, and so does
% wikipedia.

 % See usage in notation.tex. Chosen to match Daphne's book.

%\usepackage{xcolor}         % colors

\usepackage{tcolorbox}
\usepackage{colortbl}

\usepackage{thmtools}
\usepackage{thm-restate}

\usepackage{multirow}
\usepackage{animate}
\usepackage{makecell}
\usepackage{thmtools}
\usepackage{thm-restate}
\usepackage{multirow}
\usepackage[utf8]{inputenc} % allow utf-8 input
\usepackage[T1]{fontenc}    % use 8-bit T1 fonts
\usepackage{hyperref}       % hyperlinks
\usepackage{url}            % simple URL typesetting
\usepackage{booktabs}       % professional-quality tables
\usepackage{amsfonts}       % blackboard math symbols
\usepackage{nicefrac}       % compact symbols for 1/2, etc.
\usepackage{microtype}      % microtypography
\usepackage{graphicx}

\usepackage{enumerate}

\usepackage{tikz}
\usepackage{graphics}
\usepackage{color}
\usepackage{float}
\usepackage{caption}
\usepackage{subcaption}
\captionsetup{compatibility=false}
\usepackage{mathtools, cases}
\usepackage{annotate-equations}
\usepackage{enumitem,kantlipsum}

\usepackage{lipsum}
\usepackage{amsmath}
\usepackage{array, caption, tabularx, makecell, booktabs}%
\captionsetup{labelfont = sc}
\usepackage[framemethod=TikZ]{mdframed}
\usepackage{amsthm}
\usepackage{multirow}
\usepackage{wrapfig}
\usepackage{lipsum}  % For generating dummy text

\usepackage{enumerate}

\usepackage{enumitem}

\usepackage{algorithm}
\usepackage{algpseudocode}

\usepackage{listings}
\usepackage{tikz}
\usetikzlibrary{positioning, arrows.meta}
\usepackage{ifthen}
\usepackage{xspace}

\usepackage[T1]{fontenc}
\usepackage[utf8]{inputenc}
\usepackage{mathtools}
\usepackage{amssymb}

\usepackage{epigraph}

\setlength\epigraphwidth{7.5cm}
\setlength\epigraphrule{0pt}

\newcommand{\bW}{\boldsymbol{W}}
\newcommand{\bw}{\boldsymbol{w}}
\newcommand{\bX}{\boldsymbol{X}}
\newcommand{\bY}{\boldsymbol{Y}}

\newcommand{\bU}{\boldsymbol{U}}

\newcommand{\bI}{\boldsymbol{I}}
\newcommand{\bx}{\boldsymbol{x}}
\newcommand{\by}{\boldsymbol{y}}
\newcommand{\bz}{\boldsymbol{z}}

\newcommand{\bh}{\boldsymbol{h}}

\newcommand{\bq}{\boldsymbol{q}}
\newcommand{\bk}{\boldsymbol{k}}

\newcommand{\bi}{\boldsymbol{i}}

\newcommand{\bg}{\boldsymbol{g}}
\newcommand{\bmm}{\boldsymbol{m}}

\newcommand{\bQ}{\boldsymbol{Q}}
\newcommand{\bK}{\boldsymbol{K}}
\newcommand{\bV}{\boldsymbol{V}}
\newcommand{\bP}{\boldsymbol{P}}
\newcommand{\bA}{\boldsymbol{A}}

\newcommand{\bC}{\boldsymbol{C}}
\newcommand{\bJ}{\boldsymbol{J}}

% Letter Characters

%\newcommand{\C}{\mathbb{C}}

\newcommand{\ie}{\textit{i}.\textit{e}.}
\newcommand{\eg}{\textit{e}.\textit{g}.}

\newlength{\maxwidth}
\newcommand{\algalign}[2]% 
{\makebox[\maxwidth][r]{$#1{}$}${}#2$}

\definecolor{blue}{rgb}{0, 0, .7}
\definecolor{darkred}{rgb}{.5, 0, 0}
\definecolor{darkgreen}{rgb}{0, .5, 0}
\definecolor{blue}{rgb}{0, 0, .7}
\definecolor{darkred}{rgb}{.5, 0, 0}
\definecolor{darkgreen}{rgb}{0, .5, 0}
\definecolor{cred}{rgb}{0.8, 0.0, 0.0}
\definecolor{cpink}{rgb}{0.98, 0.38, 0.5}
\definecolor{YellowDot}{RGB}{253, 198, 10}
\definecolor{BlueDot}{RGB}{56, 108, 176}
\definecolor{GreenDot}{RGB}{77, 175, 74}
\definecolor{green2}{rgb}{0.21,0.74,0.49}
\definecolor{commentcolor}{RGB}{110,154,155}   % define comment color
  % add a "#" before the input text "#1"
 % \ttfamily is the code font

\DeclareMathAlphabet{\mathcal}{OMS}{cmsy}{m}{n}
\SetMathAlphabet{\mathcal}{bold}{OMS}{cmsy}{b}{n}

\theoremstyle{break}
\newtheorem{theorem}{Theorem}%[section]
%[section] 
%[section] 

\newtheorem{proposition}{Proposition}

\newtheorem{remark}{Remark}

%[section]

\title{\centering DNT: a Deeply Normalized Transformer \\that can be trained by Momentum SGD}

%\dagger

\author{Xianbiao Qi$^{1}$, Marco Chen$^{2}$, Wenjie Xiao$^{3}$, Jiaquan Ye$^{1}$, Yelin He$^{1}$,
\textbf{Chun-Guang Li}$^{4}$,\\
\textbf{Zhouchen Lin$^{5}$}\\
\textsuperscript{1}Intellifusion Inc.,
\textsuperscript{2}Tsinghua University, 
\textsuperscript{3}Johns Hopkins University, \\
\textsuperscript{4}Beijing University of Posts and Telecommunications, 
\textsuperscript{5}Peking University\\
}

% \textsuperscript{2}THU \quad 
% \textsuperscript{3}JHU \quad 
% \textsuperscript{4}BUPT \quad 
% \textsuperscript{5}PKU

\iclrfinalcopy % Uncomment for camera-ready version, but NOT for submission.
\begin{document}

% \maketitle
\maketitle
\renewcommand{\thefootnote}{}

\begin{abstract}
{
Transformers have become the de facto backbone of modern deep learning, yet their training typically demands an advanced optimizer with adaptive learning rate like AdamW, rather than a momentum SGDW (mSGDW). 
Previous works show that it is mainly due to a heavy-tailed distribution of the gradients. 
In this paper, we introduce a Deeply Normalized Transformer (DNT), which is meticulously engineered to overcome this limitation enabling seamless training with vanilla mSGDW while yielding comparable performance to the Transformers trained via AdamW. 
To be specific, in DNT, we strategically integrate normalization techniques at proper positions in the Transformers to effectively modulate the Jacobian matrices of each layer, balance the influence of weights, activations, and their interactions, and thus enable the distributions of gradients concentrated. 
We provide both theoretical justifications of the normalization technique used in our DNT and extensive empirical evaluation on two popular Transformer architectures to validate that: a) DNT outperforms its counterparts (\ie, ViT and GPT), and b) DNT can be effectively trained with vanilla mSGDW.
}
\end{abstract}

\section{Introduction}
\label{sec_introduction}
Transformer~\citep{transformer_vaswani2017attention} %architecture 
has revolutionized numerous domains in artificial intelligence and 
as the de facto backbone of modern deep learning has demonstrated remarkable capabilities across natural language processing \citep{gpt1_radford2018improving, gpt2_radford2019language, gpt3_brown2020language, llama3_dubey2024llama, qwen_team2023qwen,  deepseek_v3_liu2024deepseek}, computer vision \citep{vit_dosovitskiy2020image, swinv2_liu2022swin,scaling22b_dehghani2023scaling}, AIGC~\citep{dalle1_ramesh2021zero, dit_peebles2023scalable}, and multi-modal applications~\citep{blip_li2022blip, llava_liu2023visual}.

Nowadays it is widely accepted that Adam~\citep{adam_kingma2014adam} or its descendant AdamW~\citep{adamw_IlyaLoshchilov2018FixingWD} are %adopted as a 
the %de facto 
standard optimizer for training Transformers; whereas the classical SGD~\citep{sgd_robbins1951stochastic} and its variants~\citep{nesterov1983method, nesterov1998introductory, svrg_johnson2013accelerating}, \eg, momentum SGD (mSGD), usually under-perform when training Transformers.  Indeed, Adam is used as the optimizer in most recent studies on Large Language Models (LLMs)~\citep{llama3_dubey2024llama, qwen_team2023qwen,  deepseek_v3_liu2024deepseek} and multi-modal models~\citep{blip_li2022blip, llava_liu2023visual}, 
%of choice - 
despite of its heavier load 
on GPU memory than mSGD.
\begin{figure}[htbp]
	\centering
	\begin{minipage}{0.32\linewidth}
		\centering
		\includegraphics[width=0.98\linewidth,height=0.80\linewidth]{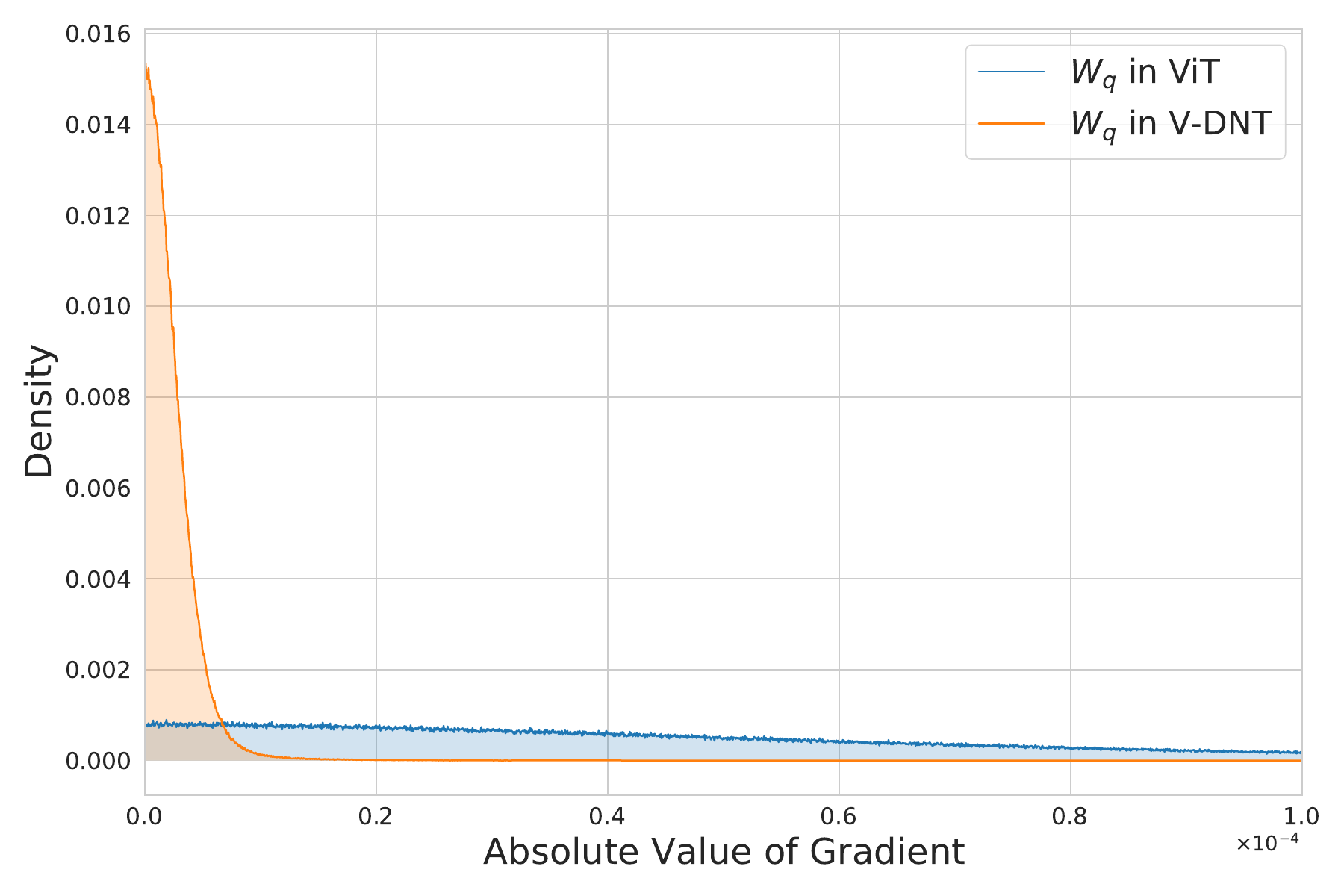}
		%\label{chutian1}%文中引用该图片代号
	\end{minipage}
	%\qquad
	\begin{minipage}{0.32\linewidth}
		\centering
		\includegraphics[width=0.98\linewidth,height=0.80\linewidth]{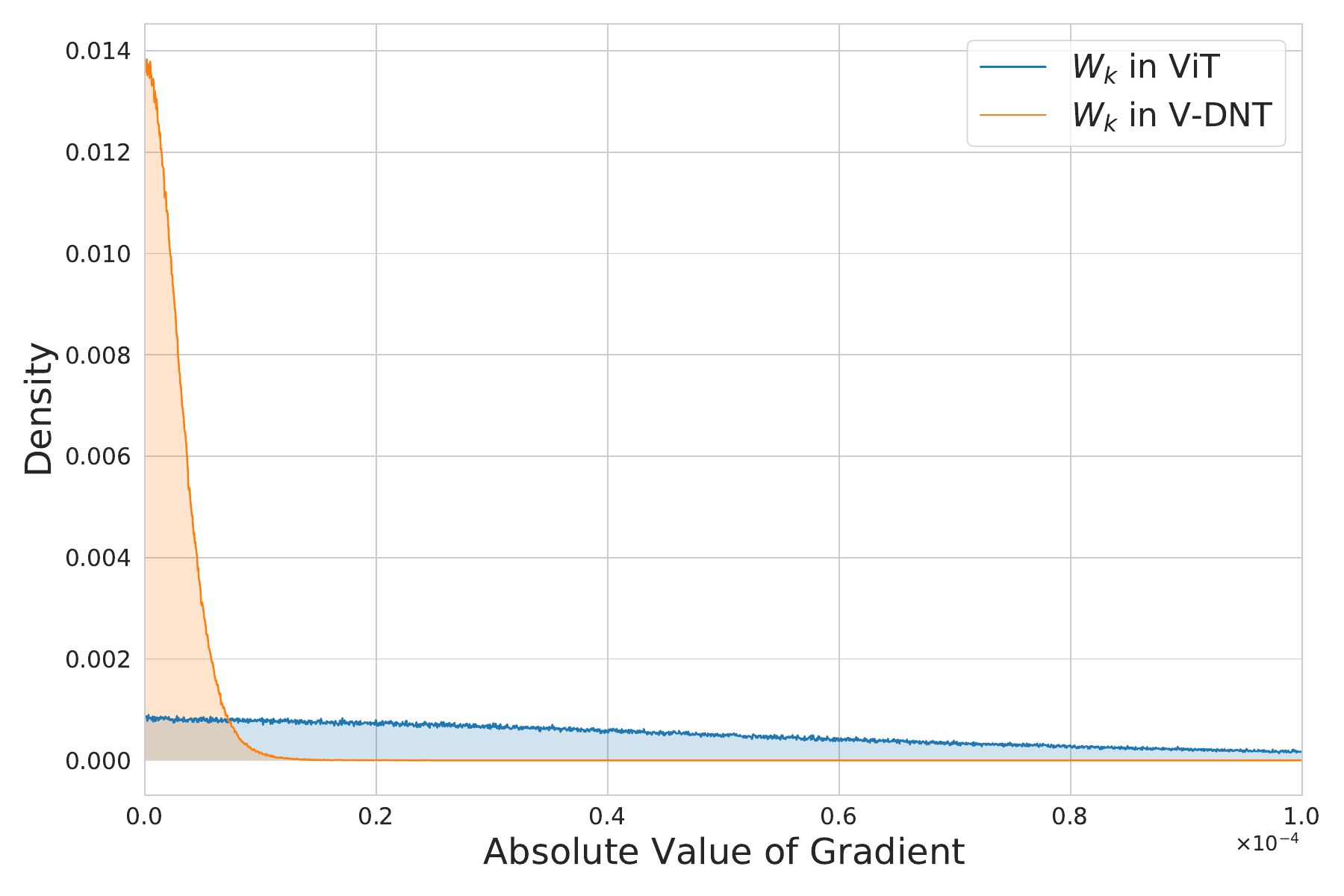}
		%\label{chutian2}%文中引用该图片代号
	\end{minipage}
	\begin{minipage}{0.32\linewidth}
		\centering
		\includegraphics[width=0.98\linewidth,height=0.80\linewidth]{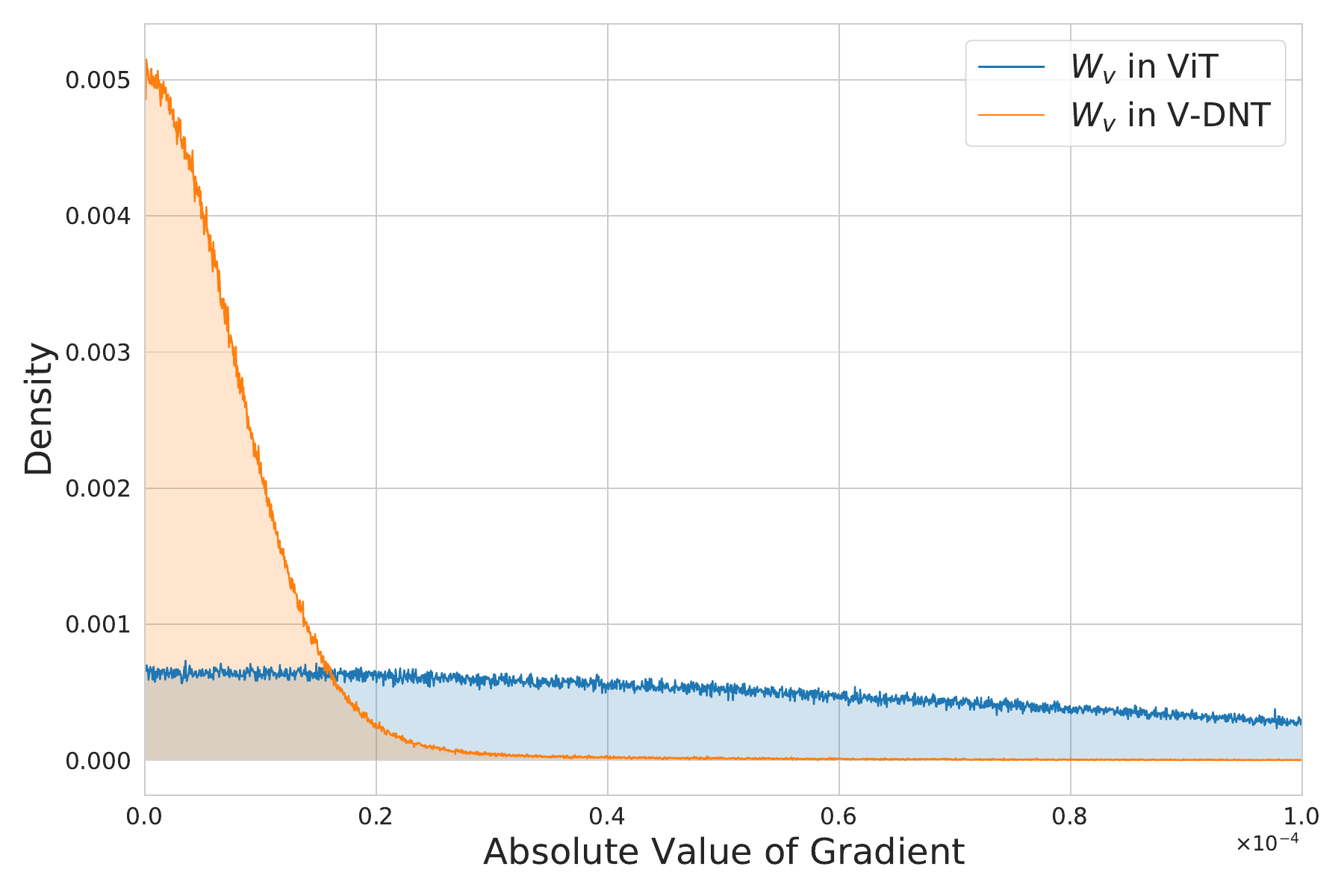}
		%\label{chutian2}%文中引用该图片代号
	\end{minipage}
	\begin{minipage}{0.32\linewidth}
		\centering
		\includegraphics[width=0.98\linewidth,height=0.80\linewidth]{figures/vHeat/visual/wv.pdf}
		%\label{chutian2}%文中引用该图片代号
	\end{minipage}
	\begin{minipage}{0.32\linewidth}
		\centering
		\includegraphics[width=0.98\linewidth,height=0.80\linewidth]{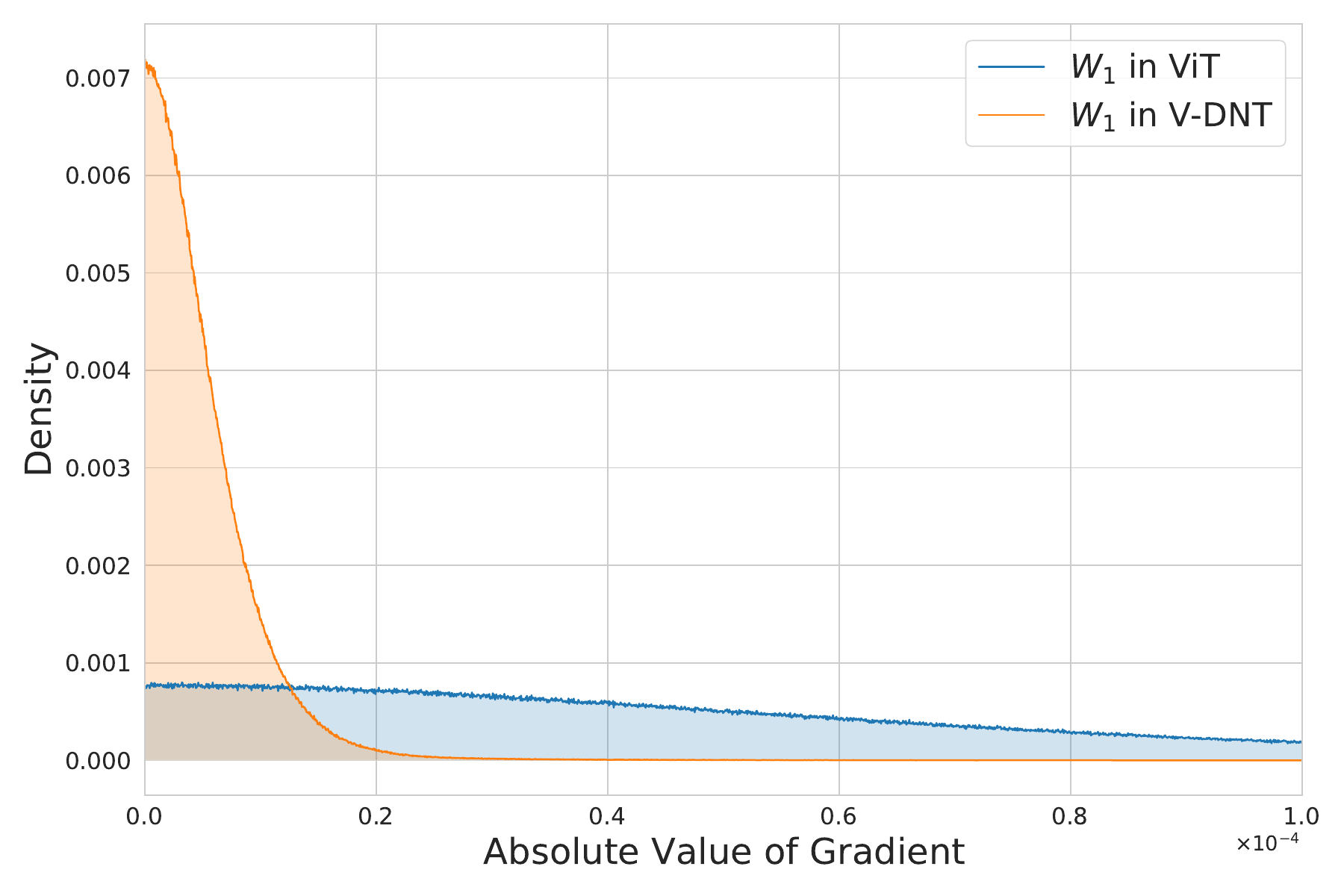}
		\label{chutian2}%文中引用该图片代号
	\end{minipage}
	\begin{minipage}{0.32\linewidth}
		\centering
		\includegraphics[width=0.98\linewidth,height=0.80\linewidth]{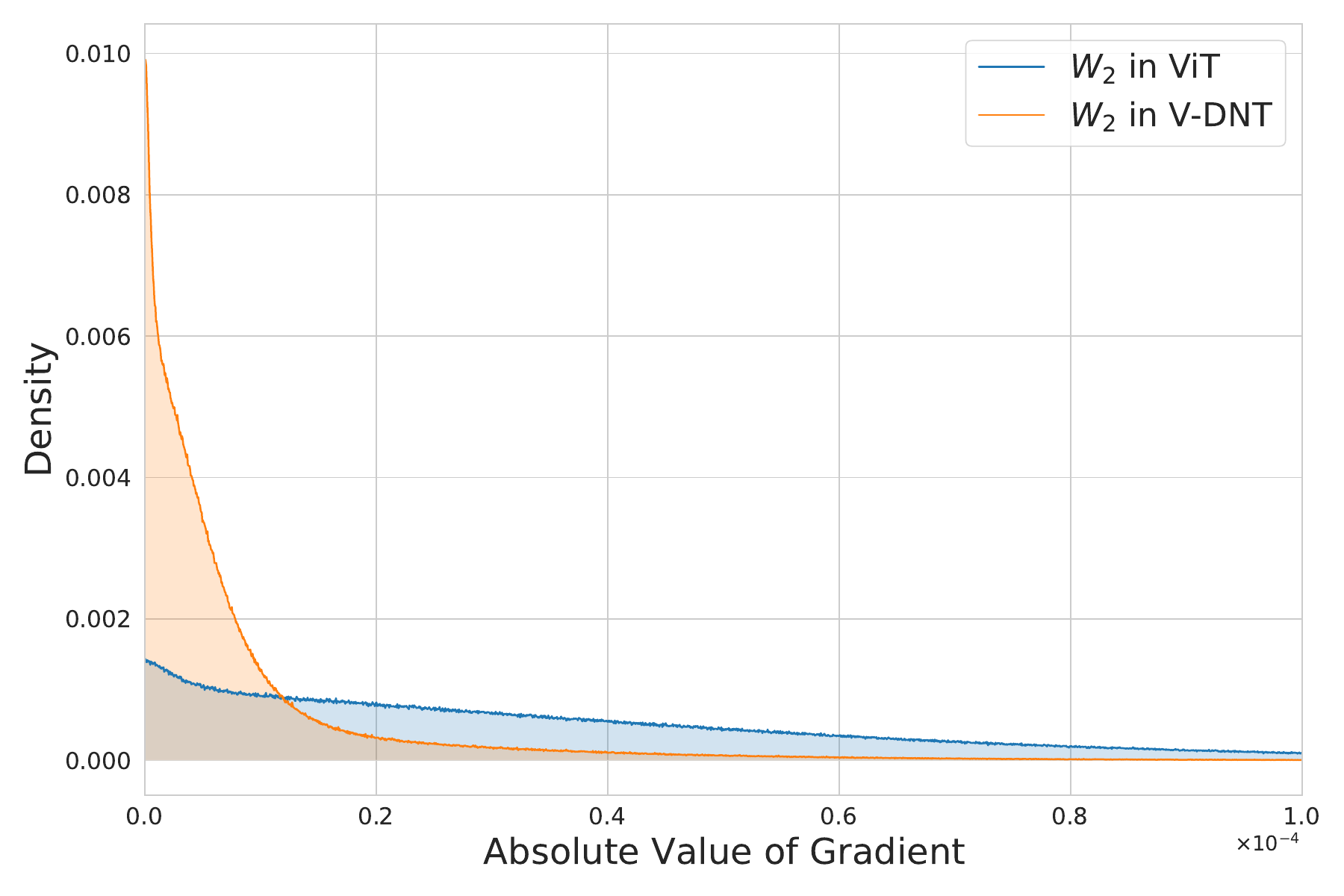}
		\label{chutian2}%文中引用该图片代号
	\end{minipage}
	\caption{%the 
    Distributions of the absolute values of the entries in gradients for ViT with PreNorm (marked in \textcolor{blue}{blue}) and our V-DNT (marked in \textcolor{orange}{orange}), where V-DNT denotes the vision variant of our DNT. We observe that the gradients in our V-DNT are typically quite small and well concentrated; whereas the gradient distributions of the standard ViT have a long tail.
    }
	\label{fig:heavy_tail_visualization}
\end{figure}
Naturally, an interesting question 
arises:  
\begin{quote}
\textit{Can Transformers be trained via mSGD to yield performance matched to that is trained via Adam? Or, under what conditions?}
\end{quote}

To answer these questions, 
we need to understand why mSGD typically underperforms Adam when training Transformer.
Previous studies~\citep{heavy_tail_simsekli2019tail,zhangjingzhao_zhang2020adaptive} reveal that the fundamental  
reason lies in the statistical property of the stochastic gradients in Transformer architectures.
Unlike Convolutional Neural Networks (CNNs)~\citep{cnn_lecun1998gradient, resnet_he2016deep} that are trained on tasks like ImageNet, where the entries of the gradients %norms 
are typically small and well-concentrated around their mean, 
the gradients of Transformer typically exhibit heavy-tailed distributions, as shown in blue in  Figure~\ref{fig:heavy_tail_visualization}. This heavy-tailed distribution  
means that  
the amplitudes of the gradient entries span a wide range and thus it is hard to keep step with each others.
Thus, Adam uses a normalized term between the first-order term (i.e. gradients) and the square-root of the second-order term. Owing  
the normalization, Adam is robust to the heavy-tail distribution of the gradients.  
On contrary, 
mSGD directly uses the first-order gradient with momentum to update the weight, and thus it is hard to keep step with each others when updating the weights. 
This explains why Adam has become the 
standard optimizer 
for training Transformer in practice. 
Consequently, an interesting question turns out to be: can we help mSGD to relieve the heavy-tail distribution problem of the gradients in training Transformers? How?

It turns out that \textit{we need to incorporate an effective strategy to mitigate the heavy-tail gradients problem in training the Transformer.} 
Motivated by 
analyzing the Jacobian matrix of different modules, we propose to add or adjust the positions of normalization operations in the Transformer to relieve the heavy-tail gradient problem.  
Roughly speaking, we need to use the properly positioned normalization
operator to amend the Jacobian matrix of $\frac{\partial \by}{\partial \bx}$  largely 
less affected by the weights, the activations, or the joint influence of the weights and the activations.

Visualized results show that our designed architecture, a Deeply Normalized Transformer, termed as DNT, exhibits a more concentrated gradient distribution 
than its counterpart which %that 
has a heavy-tailed gradient distribution, as illustrated in orange in Figure~\ref{fig:heavy_tail_visualization}. 
In this paper, we provide 
not only the theoretical justification of the properly positioned normalization operator in our DNT but also %
empirical evaluations to further validate that our DNT outperforms its counterparts, \ie, ViT and GPT, on ImageNet classification and OpenWebText tasks. 
Since that the distributions of DNT are concentrated, training it with the vanilla mSGD can yield performance on par with that with Adam optimizer.

To the best of our knowledge, this is the first work to show that using a vanilla mSGD can train a Transformer to achieve performance comparable to that of using Adam---provided that the Transformer architecture is properly modified to mitigates the heavy-tail gradient problem.

\section{Preliminaries}
\label{sec_pre}

%\textbf
This section will provide some preliminaries on high-dimensional random vectors, which enjoy many 
nice properties that are different from their low-dimensional counterparts. Two simple yet useful 
theorems are introduced below. Proofs can be found in Lemma 3.2.4 
of ~\citep{vershynin2018high}.

\begin{theorem}[Concentration of norm]
 Let $\bx$ be an isotropic random vector in $\mathbb{R}^d$. Then, we have
$\mathbb{E} \|\bx\|_2^2 = d.$
Moreover, if $\bx$ and $\by$ are two independent isotropic random vectors, then
$\mathbb{E} \langle \bx, \by \rangle^2 = d.$
\label{theorem:norm_concerntraion} 
\end{theorem}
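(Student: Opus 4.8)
The plan is to unpack the definition of isotropy—namely that a random vector $\bx$ in $\mathbb{R}^d$ is isotropic precisely when $\mathbb{E}[\bx\bx^\top] = I_d$, equivalently $\mathbb{E}[x_i x_j] = \delta_{ij}$ for all coordinate indices $i,j$—and then reduce both claims to linearity of expectation, together with independence for the second claim.

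For the first identity, I would write the squared norm as a trace and pull the expectation inside:
$$\mathbb{E}\|\bx\|_2^2 = \mathbb{E}\,\Tr(\bx\bx^\top) = \Tr\big(\mathbb{E}[\bx\bx^\top]\big) = \Tr(I_d) = d.$$
Equivalently, one expands coordinatewise as $\mathbb{E}\sum_{i} x_i^2 = \sum_{i} \mathbb{E}[x_i^2] = \sum_{i} 1 = d$, using that the diagonal entries of $\mathbb{E}[\bx\bx^\top]$ are all $1$. Both routes are immediate.

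For the second identity, the cleanest approach is to condition on $\by$. Writing the inner product as a quadratic form in $\bx$ and using independence to replace $\mathbb{E}[\bx\bx^\top]$ by $I_d$, I get
$$\mathbb{E}\big[\langle \bx,\by\rangle^2 \mid \by\big] = \mathbb{E}\big[\by^\top \bx\bx^\top \by \mid \by\big] = \by^\top \mathbb{E}[\bx\bx^\top]\,\by = \by^\top I_d\, \by = \|\by\|_2^2.$$
Taking the outer expectation over $\by$ and invoking the first identity then yields $\mathbb{E}\langle\bx,\by\rangle^2 = \mathbb{E}\|\by\|_2^2 = d$. Alternatively one expands $\langle\bx,\by\rangle^2 = \sum_{i,j} x_i x_j\, y_i y_j$, factors each term as $\mathbb{E}[x_i x_j]\,\mathbb{E}[y_i y_j] = \delta_{ij}\,\delta_{ij}$ by independence, and sums over the diagonal.

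I do not expect a genuine obstacle here: the content is entirely the bookkeeping of second moments under the isotropy hypothesis. The only point demanding care is the second identity, where independence of $\bx$ and $\by$ must be invoked to factor the joint expectation (or to pull $\mathbb{E}[\bx\bx^\top]$ out of the conditional expectation); without it the cross-terms need not collapse to the diagonal and the stated value $d$ would generally fail.
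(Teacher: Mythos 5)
Your proof is correct and follows essentially the same route as the proof the paper defers to (Lemma 3.2.4 of Vershynin's \emph{High-Dimensional Probability}): the trace identity $\mathbb{E}\|\bx\|_2^2 = \Tr\big(\mathbb{E}[\bx\bx^\top]\big) = d$ for the first claim, and conditioning on $\by$ plus independence to reduce the second claim to the first. Nothing is missing; the point you flag about independence being essential is exactly the right one.
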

\begin{theorem}[Almost orthogonality of high-dimensional independent vectors]
Let us normalize the random vectors $\bx$ and $\by$ in Theorem 1, setting
%\begin{equation*}
$\overline{\bx} := \frac{\bx}{\|\bx\|_2}$ and $\overline{\by} := \frac{\by}{\|\by\|_2}$, in a high-dimensional space, the independent and isotropic random vectors $\overline{\bx}$  and $\overline{\by}$,  tend to be almost orthogonal,
%\end{equation*}
\label{remark:high_dimension}
\end{theorem}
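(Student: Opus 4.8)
The plan is to make the informal phrase ``tend to be almost orthogonal'' precise by showing that the squared cosine of the angle between the two vectors is of order $1/d$, so that $\langle \overline{\bx}, \overline{\by}\rangle \to 0$ in probability as $d \to \infty$ (equivalently, the angle concentrates at $\pi/2$). Writing
\[
\langle \overline{\bx}, \overline{\by}\rangle = \frac{\langle \bx, \by\rangle}{\|\bx\|_2\,\|\by\|_2},
\]
the target reduces to bounding $\mathbb{E}\,\langle \overline{\bx}, \overline{\by}\rangle^2$, and the two ingredients are already available: Theorem~\ref{theorem:norm_concerntraion} controls the numerator through $\mathbb{E}\,\langle \bx, \by\rangle^2 = d$, while the ``concentration of norm'' part of the same theorem, $\mathbb{E}\|\bx\|_2^2 = \mathbb{E}\|\by\|_2^2 = d$, suggests that both denominators are close to $\sqrt{d}$. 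Heuristically this gives $\mathbb{E}\,\langle \overline{\bx}, \overline{\by}\rangle^2 \approx d / d^2 = 1/d$, which is the estimate I aim to establish.

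First I would isolate the difficulty: because the norms sit in the denominator, the expectation of the ratio is not the ratio of expectations, so I cannot simply divide $\mathbb{E}\,\langle\bx,\by\rangle^2$ by $\mathbb{E}[\|\bx\|_2^2\|\by\|_2^2]$. To get around this I would split on the high-probability ``good'' event
\[
E := \Big\{ \|\bx\|_2^2 \ge \tfrac{d}{2} \Big\} \cap \Big\{ \|\by\|_2^2 \ge \tfrac{d}{2} \Big\}.
\]
On $E$ the cosine is controlled, $\langle \overline{\bx}, \overline{\by}\rangle^2 \le 4\,\langle \bx, \by\rangle^2 / d^2$, while on the complement I would use the trivial bound $\langle \overline{\bx}, \overline{\by}\rangle^2 \le 1$ from Cauchy--Schwarz. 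Taking expectations, bounding the indicator of $E$ by one, and using $\mathbb{E}\,\langle\bx,\by\rangle^2 = d$ from Theorem~\ref{theorem:norm_concerntraion} then yields
\[
\mathbb{E}\,\langle \overline{\bx}, \overline{\by}\rangle^2 \;\le\; \frac{4}{d^2}\,\mathbb{E}\,\langle \bx, \by\rangle^2 + \mathbb{P}(E^c) \;=\; \frac{4}{d} + \mathbb{P}(E^c).
\]
It then remains to show $\mathbb{P}(E^c) \to 0$, after which Markov's inequality upgrades the second-moment bound to convergence in probability of the inner product to $0$.

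The main obstacle is exactly this last step, controlling $\mathbb{P}(E^c) = \mathbb{P}(\|\bx\|_2^2 < d/2 \ \text{or}\ \|\by\|_2^2 < d/2)$: Theorem~\ref{theorem:norm_concerntraion} supplies only the first two moments of the squared norm, which by themselves do not bound a lower tail. I would close this gap by invoking the standard concentration of the norm for isotropic vectors (the sub-gaussian version of~\citep{vershynin2018high}, or, under a finite fourth-moment assumption, Chebyshev applied to $\|\bx\|_2^2$ once its variance is controlled), either of which gives $\mathbb{P}\big(|\,\|\bx\|_2^2 - d\,| \ge d/2\big) = o(1)$ and hence $\mathbb{P}(E^c) \to 0$ by a union bound. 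For the special but common case of a rotationally invariant distribution (e.g.\ Gaussian or uniform on the sphere), I would additionally note that a cleaner exact computation is available: conditioning on $\bx$ makes $\overline{\bx}$ a fixed unit vector $\bu$, and by symmetry $\mathbb{E}\,\langle \bu, \overline{\by}\rangle^2 = 1/d$ exactly, since $\sum_{i=1}^d \langle \be_i, \overline{\by}\rangle^2 = 1$ with each summand equal in expectation. This sidesteps the denominator issue entirely and confirms the $1/d$ rate.
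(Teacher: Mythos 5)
Your proposal is correct and, at its core, follows the same moment-based route as the paper: the paper does not prove this statement itself, but defers to Lemma 3.2.4 of \citep{vershynin2018high} (which only establishes the moment identities of Theorem~\ref{theorem:norm_concerntraion}) and then argues informally, in the paragraph following the theorem, that $\|\bx\|_2 \asymp \sqrt{d}$, $\|\by\|_2 \asymp \sqrt{d}$ and $\langle \bx,\by\rangle \asymp \sqrt{d}$ ``with high probability,'' so that $\cos(\theta)\asymp 1/\sqrt{d}$. What you add is precisely what is needed to turn that heuristic into a proof, and your diagnosis of the weak point is accurate: isotropy supplies second moments only, and moments of $\|\bx\|_2^2$ do not control its lower tail, so the step from $\mathbb{E}\|\bx\|_2^2 = d$ to ``$\|\bx\|_2 \asymp \sqrt{d}$ with high probability'' is a genuine logical gap, not a formality. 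Your truncation on the good event $E$ isolates that gap cleanly, and the resulting bound $\mathbb{E}\langle\overline{\bx},\overline{\by}\rangle^2 \le 4/d + \mathbb{P}(E^c)$ followed by Markov is sound. Moreover, the extra hypothesis you request really is necessary, not merely convenient: the statement as literally phrased fails for general isotropic vectors. For instance, let $\bx = \pm\sqrt{2}\,\be_1$ with probability $1/2$ and $\bx = \sqrt{2}\,\bz$ otherwise, where $\bz$ is standard Gaussian in $\mathrm{span}(\be_2,\dots,\be_d)$; this distribution is mean-zero and isotropic, yet two independent copies satisfy $|\langle \overline{\bx},\overline{\by}\rangle| = 1$ with probability at least $1/4$ for every $d$, so no almost-orthogonality can hold. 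Hence some additional assumption---sub-gaussian norm concentration, a fourth-moment/variance condition, or rotational invariance (under which your exact computation $\mathbb{E}\langle\bu,\overline{\by}\rangle^2 = 1/d$ via $\sum_{i=1}^d\langle\be_i,\overline{\by}\rangle^2 = 1$ is the cleanest argument)---must be imposed, exactly as you say. Under any of these, your proof is complete, and it is strictly more rigorous than the justification the paper itself provides.
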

Theorem~\ref{theorem:norm_concerntraion} establishes that $\|\bx\|_2 \asymp \sqrt{d}$, $\|\by\|_2 \asymp \sqrt{d}$ and $\langle \bx, \by \rangle \asymp \sqrt{d}$ with high probability, which implies that cosine of the angle $\theta$ between two random vectors $\bx$ and $\by$ satisfies 
$\cos(\theta) \asymp \frac{1}{\sqrt{d}}$.
Theorem~\ref{remark:high_dimension} implies that in high-dimensional space (\ie, $d$ is very large), two random vectors are almost orthogonal. 
Thus, given $\bz = \bx + \by$ where $\bx$ and $\by$ are two high-dimensional random vectors, we have $\| \bz \|_2 \asymp  \sqrt{\| \bx \|_2^2 + \| \by \|_2^2}.$

\textbf{Jacobian of normalization.}
Normalization~\citep{batch_normalization_ioffe2015batch, layernorm_ba2016layer,rmsnorm_zhang2019root} is a technique widely used in deep learning. 
It is used to stabilize and accelerate the training process. 
For example, 
LayerNorm is defined as 
$\operatorname{LN}(\bx) = \boldsymbol{\gamma} \odot \frac{\sqrt{d} \by}{ \sqrt{{\| \by \|}_2^2 + \epsilon}}  + \boldsymbol{\beta}, \ \ \text{and} \ \by = \left( \bI - \frac{1}{d} \boldsymbol{1} \boldsymbol{1}^{\top} \right)\bx,$
where $\epsilon >0$ is a smoothing factor,   $d$ is the feature dimension of $\bx$, $\boldsymbol{\gamma}$ and $\boldsymbol{\beta}$ are two learnable  $\mathbb{R}^{d}$ vectors, $\boldsymbol{\gamma}$ and $\boldsymbol{\beta}$ are usually initialized to 1 and 0.
Most recently, some recent LLMs %large language models~
\citep{llama2_touvron2023llama, palm_chowdhery2023palm, qwen_team2023qwen, deepseek_v3_liu2024deepseek} uses RMSNorm~\citep{rmsnorm_zhang2019root} to replace LayerNorm, where RMSNorm is defined as:
$\operatorname{RMSN}(\bx) = \boldsymbol{\gamma} \odot  \frac{ \sqrt{d} \bx}{ \sqrt{{\| \bx \|}_2^2 + \epsilon}}.$
Compared to LayerNorm, RMSNorm does not use the centering term and the bias term.

The Jacobian matrix of RMSNorm with respect to $\bx$ is calculated as follows
\begin{small}
\begin{equation*}
\begin{aligned} 
\frac{\partial \operatorname{RMSN}(\bx)}{\partial \bx} &=   \frac{\sqrt{d}}{{ \sqrt{{\| \bx \|}_2^2 + \epsilon}}} \operatorname{diag}(\boldsymbol{\gamma}) \left(\boldsymbol{I}-\frac{\bx \bx^{\top}}{\|\bx\|_{2}^{2} + \epsilon }\right). 
\end{aligned}
\label{equ:jocabian}
\end{equation*}
\end{small}
We use RMSNorm as our default normalization technique when we discuss normalization, but our analysis can be generalized to the other normalization methods. Here we use a numerator layout for all our derivation throughout this paper.

\textbf{Stochastic Gradient Descent (SGD)}~\citep{sgd_robbins1951stochastic} is a classical and fundamental optimization algorithm in machine learning for training models by minimizing their cost functions. However, the vanilla SGD often suffers from slow convergence, especially in complex optimization landscapes with ravines, saddle points, or local minima. To address these limitations, momentum SGD~\citep{nesterov1983method, nestorov_nesterov2013introductory, sutskever2013importance} was introduced as an extension of the basic SGD algorithm.

Momentum SGD~\citep{nesterov1983method, nestorov_nesterov2013introductory, sutskever2013importance} introduces a velocity term $\bmm$ that accumulates gradients over time, \ie, 
\begin{equation*}
\begin{aligned}
\bmm_{t+1} = \mu \bmm_t + \nabla L(\bw_t), \ \ 
\bw_{t+1} = \bw_t - \alpha_t \bmm_{t+1}
\end{aligned}
\end{equation*}
where $\mu \in [0, 1)$ is the momentum coefficient that determines how much of the previous velocity is retained and $\alpha_t$ is the learning rate for the time step $t$.\footnote{Typical values for $\mu$ range from 0.9 to 0.99. In default, for all our experiments, we set %0.9 for 
$\mu$ to 0.90. }
Unlike in the vanilla SGD, mSGD allows the optimization to build up a ``momentum'' in direction of persistent gradient descent, which can effectively dampen the oscillations in high-curvature directions.

\section{Theoretical Justification on Why DNT Can Be Trained with Momentum SGD}
\subsection{Problem 1: What is the root cause of heavy-tail distribution of gradients?}
\label{sec:sgd}

Previous works~\citep{zhangjingzhao_zhang2020adaptive, heavy_tail_simsekli2019tail} have pointed out that a heavy-tailed distribution of the stochastic gradients is a root cause of SGD's poor performance. Here, we would like to 
investigate this issue 
by analyzing the backpropagation in the backward process of the Transformer.

Suppose $\bx^{l+1} = f(\bx^{l})$ and we have obtained $	\frac{\partial \mathcal{L}}{\partial \bx^{l+1}} $ in a backpropagation process,  then we can calculate $	\frac{\partial \mathcal{L}}{\partial \bx^{l}} $ using a numerator layout as
\begin{align*} 
	\frac{\partial \mathcal{L}}{\partial \bx^{l}} = \frac{\partial \mathcal{L}}{\partial \bx^{l+1}}  \frac{\partial {\bx^{l+1}}}{\partial \bx^{l}}, 
\end{align*}
\noindent where $\frac{\partial {\bx^{l+1}}}{\partial \bx^{l}} $ is usually called as the Jacobian matrix.

Having had $\frac{\partial \mathcal{L}}{\partial \bx^{l}}$, %give 
for any a forward layer with $\bx^{l} = \bW^{l} \bx^{l-1}$, we can compute $\frac{\partial \mathcal{L}}{\partial \bW^{l}}$ as
\begin{align} 
\frac{\partial \mathcal{L}}{\partial \bW^{l}} = \frac{\partial \mathcal{L}}{\partial \bx^{l}} {\bx^{l-1}}^{\top} =  \frac{\partial \mathcal{L}}{\partial \bx^{l+1}} \frac{\partial {\bx^{l+1}}}{\partial \bx^{l}} {\bx^{l-1}}^{\top} .
\label{eq:jacobian_motivation}
\end{align}
From Equation~\ref{eq:jacobian_motivation},
we 
observe that 
the heavy-tail problem in gradients is indeed closely related to the large diversity of the singular values in the Jacobian matrix $\frac{\partial {\bx^{l+1}}}{\partial \bx^{l}}$.
The Jacobian matrix can have highly diverse singular values for several reasons: 1) the weight matrix contains very diverse singular values; 2) the activations span widely, leading to Jacobians with very uneven singular value distributions.
When a matrix has a wide range of singular values (\ie, a very large condition number), it means that the transformation stretches the input %space 
very differently along different directions. During backpropagation, %this 
it will cause a heavy-tail problem in the gradients.

\textit{Therefore, one reasonable solution to relieve the heavy-tail problem is to constrain the uneven singular value distribution of the Jacobian matrix via controlling the weight matrix and activations. This is the basic idea 
in our paper.}

\subsection{Problem 2: Mitigate the heavy-tail gradient problem by analyzing the Jacobian matrix}

In this subsection, we will describe how we use different normalizations---adding or adjusting the position of the normalizations---to constrain the Jacobian matrix to relieve the heavy-tail gradient problem. 
\textit{Note that we do not claim that we discover any new normalization methods, instead, we give our understanding on how each normalization affects the Jacobian matrix.} 
We refer the readers 
to~\citep{ngpt_loshchilov2024ngpt, tears_zhu2025transformers, stable_transformer_qi2025stabletransformer, lipsformer_qilipsformer} for more discussions about normalization. 
We will use {{\textcolor{cred}{red},  \textcolor{green}{green},  \textcolor{blue}{blue},  \textcolor{purple}{purple},  \textcolor{magenta}{magenta}}} to denote its relationship with InputNorm, PreNorm, MidNorm, PostNorm and QKNorm, individually.  

\begin{figure}[h]
\tiny
     \centering
     \begin{subfigure}[b]{0.3\textwidth}
        \begin{tikzpicture}[
            box/.style={draw, rounded corners, minimum width=0.6cm, minimum height=0.6cm, thick},
            bloc/.style={draw, 
                minimum width=0.6cm, minimum height=0.6cm, 
                text=red, font=\sffamily, align=center,   
                outer sep=0pt},
            widebox/.style={draw, rounded corners, minimum width=1.0cm, minimum height=0.6cm, thick},
            >=Stealth
          ]
          % Create the three boxes
          \node[box] (we) {WE/PE};
          \node[bloc, right=0.4cm of we] (n) {InputNorm};
          \node[box, right=0.4cm of n] (transformer) {Blocks};

          \coordinate (end) at ($(transformer) + (0.8,0)$);
          
          \draw[->] ($(we.west)-(0.5,0)$) -- (we);
          \draw[->] (we) -- (n);
          \draw[->] (n) -- (transformer);
          \draw[->] (transformer) -- (end);
        \end{tikzpicture}
        \caption{InputNorm}
     \end{subfigure}
     \hfill
     %\begin{subfigure}{0.15\textwidth}
     \begin{subfigure}[b]{0.3\textwidth}
        \centering
        \begin{tikzpicture}[
            box/.style={draw, minimum width=0.6cm, minimum height=0.6cm, thick},
            bloc/.style={draw, 
                minimum width=0.6cm, minimum height=0.6cm, 
                text=darkgreen, font=\sffamily, align=center,   
                outer sep=0pt},
            circ/.style={draw, circle, minimum size=0.6cm, thick},
            >=Stealth
          ]
          
          % Create the boxes
          \node[bloc] (ln) at (0,0) {PreNorm};
          \node[box, right=0.4cm of ln] (sa) {SA/FFN};
          \node[circ, right=0.4cm of sa] (plus) {$+$};
          
          % Starting and ending points
          \coordinate (start) at ($(ln) + (-1.0,0)$);
          \coordinate (end) at ($(plus) + (0.8,0)$);
          
          % Draw the main path arrows in red
          \draw[->] (start) -- (ln);
          \draw[->] (ln) -- (sa);
          \draw[->] (sa) -- (plus);
          \draw[->] (plus) -- (end);
          
          % Draw the feedback loop in red
          % The loop goes from the plus circle, up, left and then down with an arrow pointing to the main line
          \path (start) -- (ln) coordinate[pos=0.5] (midpoint);
          \draw[<-] 
            (plus) -- ++(0,0.5) -| (midpoint);
        \end{tikzpicture}
        \caption{PreNorm}
     \end{subfigure}
    \hfill
     %\begin{subfigure}{0.15\textwidth}
     	 \begin{subfigure}[{b}]{0.3\textwidth}
        \centering
        \begin{tikzpicture}[
            box/.style={draw, minimum width=0.6cm, minimum height=0.6cm, thick},
            bloc/.style={draw, 
                minimum width=0.6cm, minimum height=0.6cm, 
                text=blue, font=\sffamily, align=center,   
                outer sep=0pt},
            circ/.style={draw, circle, minimum size=0.6cm, thick},
            >=Stealth
          ]
          % Create the boxes
          \node[box] (ln) at (0,0) {SA/FFN};
          \node[bloc, right=0.4cm of ln] (sa) {MidNorm};
          \node[circ, right=0.4cm of sa] (plus) {$+$};
          % Starting and ending points
          \coordinate (start) at ($(ln) + (-1.0,0)$);
          \coordinate (end) at ($(plus) + (0.8,0)$);
          % Draw the main path arrows in red
          \draw[->] (start) -- (ln);
          \draw[->] (ln) -- (sa);
          \draw[->] (sa) -- (plus);
          \draw[->] (plus) -- (end);
          % Draw the feedback loop in red
          % The loop goes from the plus circle, up, left and then down with an arrow pointing to the main line
          \path (start) -- (ln) coordinate[pos=0.5] (midpoint);
          \draw[thick, red, <-] 
            (plus) -- ++(0,0.5) -| (midpoint);
        \end{tikzpicture}
        \caption{MidNorm}
    \end{subfigure}
\vskip\baselineskip
    \begin{subfigure}[b]{0.49\textwidth}
      \centering
        \begin{tikzpicture}[
            box/.style={draw, minimum width=0.6cm, minimum height=0.6cm, thick},
            bloc/.style={draw, 
                minimum width=0.6cm, minimum height=0.6cm, 
                text=purple, font=\sffamily, align=center,   
                outer sep=0pt},
            circ/.style={draw, circle, minimum size=0.6cm, thick},
            >=Stealth
          ]
          
          % Create the boxes
          \node[box] (ln) at (0,0) {SA/FFN};
          \node[circ, right=0.4cm of ln] (plus) {$+$};
          \node[bloc, right=0.4cm of plus] (sa) {PostNorm};
          
          % Starting and ending points
          \coordinate (start) at ($(ln) + (-1.2,0)$);
          \coordinate (end) at ($(sa) + (1.0,0)$);
          
          % Draw the main path arrows in red
          \draw[->] (start) -- (ln);
          \draw[->] (ln) -- (plus);
          \draw[->] (plus) -- (sa);
          \draw[->] (sa) -- (end);
          
          % Draw the feedback loop in red
          % The loop goes from the plus circle, up, left and then down with an arrow pointing to the main line
          \path (start) -- (ln) coordinate[pos=0.5] (midpoint);
          \draw[<-] 
            (plus) -- ++(0,0.5) -| (midpoint);
        \end{tikzpicture}
      \caption{PostNorm}
    \end{subfigure}
    \hfill
    \begin{subfigure}[b]{0.49\textwidth}
    	\begin{tikzpicture}[
    		box/.style={draw, minimum width=0.6cm, minimum height=0.6cm, thick},
    		bloc/.style={draw, 
    			minimum width=0.6cm, minimum height=0.6cm, 
    			text=magenta, font=\sffamily, align=center,   
    			outer sep=0pt},
    		circ/.style={draw, circle, minimum size=0.6cm, thick},
    		>=Stealth
    		]
    		% Input and branching point
    		\coordinate (start) at (0,0);
    		\coordinate (branch) at (0.7,0);
    		
    		% Top branch (Wq)
    		\node[box] (wq) at (1.5,0.7) {$\bW_q$};
    		%\node[box] (nq) at (2.5,0.7) {QKNorm};
    		\node[bloc](nq) at (2.7,0.7) {QKNorm};
    		
    		% Middle branch (Wk)
    		\node[box] (wk) at (1.5,0) {$\bW_k$};
    		%\node[box] (nk) at (2.5,0) {QKNorm};
    		\node[bloc](nk) at (2.7,0.0) {QKNorm};
    		
    		% Bottom branch (Wv)
    		\node[box] (wv) at (1.5,-0.7) {$\bW_v$};
    		
    		% Attention mechanism and output
    		\node[circ] (att) at (4.0,0.35) {$\bA$};
    		\node[circ] (mat) at (4.5,-0.7) {MatMul};
    		\coordinate (end) at (5.4,-0.7);
    		
    		% Draw connections
    		\draw[->] (start) -- (branch);
    		
    		% Draw branches from the main input
    		\draw[->] (branch) |- (wq);
    		\draw[->] (branch) -- (wk);
    		\draw[->] (branch) |- (wv);
    		
    		% Top branch connections
    		\draw[->] (wq) -- (nq);
    		\draw[->] (nq) -- (att);
    		
    		% Middle branch connections
    		\draw[->] (wk) -- (nk);
    		\draw[->] (nk) -- (att);
    		
    		% Bottom branch connection directly to matrix multiplication
    		\draw[->] (wv) -- ($(wv)+(1.5,0)$) |- (mat);
    		
    		% Attention to matrix connection
    		\draw[->] (att) -- (mat);
    		
    		% Output
    		\draw[->] (mat) -- (end);
    	\end{tikzpicture}
    	\caption{QKNorm}
    \end{subfigure}
    \caption{Five different normalization methods.
    	The only difference between them is the position of normalization.
    	In (A), WE/PE indicates word embedding and patch embedding.}
    \label{fig:five_norm}
\end{figure}
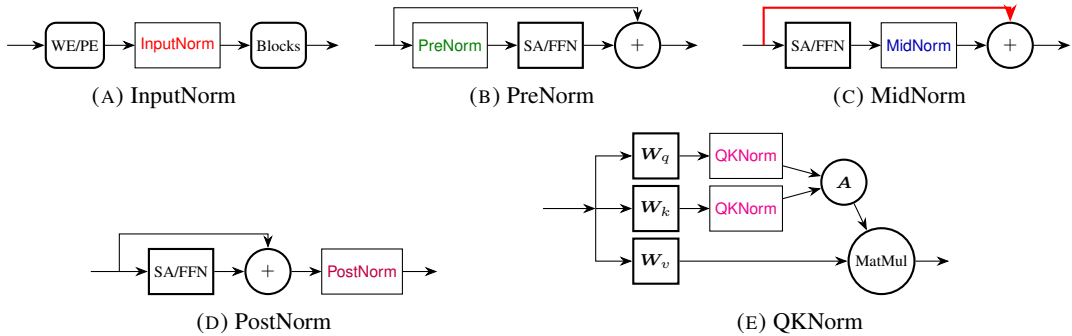

\subsubsection{InputNorm}

\textbf{Definition of InputNorm.} InputNorm in Transformer is defined as the normalization that is applied 
after the first word embedding in NLP or the first patch embedding in vision Transformer. As shown in Figure~\ref{fig:five_norm} (d), InputNorm %can be 
is defined as
\begin{equation}
	\bx^0 = {\Large \textcolor{cred}{ \operatorname{InputNorm}}} (\bh), \ \text{where} \ \  \bh = \operatorname{Embedding}(\bi), \\ 
\end{equation}
where $\bi$ is the input and $\operatorname{Embedding}(\cdot)$ denotes word embedding or patch embedding. 
For a standard residual block in Transformer,  we have the following equation
\begin{align*}
        \bx^{l+1} &=  \bx^{l} + f(\bx^{l} ) = \bx^{l-1} + f(\bx^{l-1}) + f(\bx^{l} ) = \bx^{0} + f(\bx^0) + f(\bx^1) + \dots + f(\bx^{l-1}) + f(\bx^{l}).
\end{align*}
Each $\bx^{l}$ will be the input into some modules, such as normalization, self-attention and feed-forward layers. The Jacobian matrices of some modules are sensitive to the norm of the input, such as LayerNorm and the dot-product self-attention.

Under the assumption that random vectors are almost orthogonal in high dimension, we have that
\begin{align}
  \| \bx^{l+1} \|_2 \asymp \sqrt{( \eqnmarkbox[red]{Psi2}{\| \bx^{0} \|_2^2} + \| f(\bx^{0}) \|_2^2 + \cdots +\| f(\bx^{l}) \|_2^2)}.
 	\label{eq:inputnorm}
\end{align}

\begin{proposition}[Effect of norm of input embedding on gradients]
\label{proposition_inputnorm}
In a high-dimensional settings when all parameters and activations are high-dimensional, if the norm of $\bx^{0}$ is very large, it will lead to gradient vanishing problem in all subsequent layers, provided that InputNorm is not used. 
\end{proposition}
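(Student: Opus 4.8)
The plan is to show that an oversized input norm $\|\bx^{0}\|_2$ travels undamped down the residual stream and then forces every normalization Jacobian encountered on the backward pass to contract by a factor $\asymp \sqrt{d}/\|\bx^{0}\|_2$, which suppresses the gradient reaching every block. First I would control the forward norms. Because each residual increment $f(\bx^{j})$ is produced by a normalized sub-block, the concentration and almost-orthogonality results (Theorems~\ref{theorem:norm_concerntraion} and~\ref{remark:high_dimension}) give $\|f(\bx^{j})\|_2 \asymp \sqrt{d}$, and Equation~\ref{eq:inputnorm} then yields $\|\bx^{l}\|_2 \asymp \sqrt{\|\bx^{0}\|_2^{2} + l\,d}$. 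Hence in the hypothesized regime $\|\bx^{0}\|_2 \gg \sqrt{Ld}$ we have $\|\bx^{l}\|_2 \asymp \|\bx^{0}\|_2$ at every depth: the large input dominates the residual stream throughout.

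Next I would extract the contraction factor directly from the RMSNorm Jacobian recalled in the preliminaries. Its spectral norm is $\tfrac{\sqrt{d}}{\sqrt{\|\bx\|_2^{2}+\epsilon}}\,\|\boldsymbol{\gamma}\|_\infty$ times the spectral norm of the projection $\bI - \bx\bx^{\top}/(\|\bx\|_2^{2}+\epsilon)$, which equals $1$; with $\boldsymbol{\gamma}\approx\boldsymbol{1}$ this is $\asymp \sqrt{d}/\|\bx\|_2$. Combined with the forward estimate, every normalization fed by some $\bx^{l}$---the PreNorm opening each residual branch and the normalization in the prediction head acting on $\bx^{L}$---has Jacobian of spectral norm $\asymp \sqrt{d}/\|\bx^{0}\|_2 \to 0$.

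I would then assemble the backward pass. For a block $l$, the chain rule expresses $\partial\mathcal{L}/\partial\bx^{l}$ as the loss-layer gradient times the head-normalization Jacobian times $\prod_{j\ge l}(\bI + J_f^{j})$ accumulated along the skip connections. Since each $f$ opens with a PreNorm, $\|J_f^{j}\| = O(\sqrt{d}/\|\bx^{0}\|_2)$, so the skip product stays $\asymp\bI$---its spectral norm is at most $(1+O(\sqrt{d}/\|\bx^{0}\|_2))^{L}\to 1$---and cannot undo the single head contraction of order $\sqrt{d}/\|\bx^{0}\|_2$. Feeding this into the weight-gradient identity of Equation~\ref{eq:jacobian_motivation} gives $\|\partial\mathcal{L}/\partial\bW^{l}\| = O(\sqrt{d}/\|\bx^{0}\|_2)$ for all $l$, i.e.\ vanishing gradients throughout the network. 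The same computation explains the InputNorm proviso: applying it pins $\|\bx^{0}\|_2\asymp\sqrt{d}$, so the contraction is only $\asymp 1/\sqrt{L}$ and no vanishing occurs.

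The main obstacle I anticipate is making ``gradient vanishing in all subsequent layers'' rigorous network-wide rather than block-by-block. Two points need care: first, bounding the product $\prod_{j\ge l}(\bI+J_f^{j})$ so that accumulated residual Jacobians provably cannot amplify enough to cancel the head contraction (the estimate above is clean only because $\|J_f^{j}\|$ is itself $O(\sqrt{d}/\|\bx^{0}\|_2)$, which must be justified for attention as well as the FFN); and second, handling the softmax sub-Jacobian inside self-attention, whose sensitivity to $\|\bx\|_2$ arises through score saturation and scales differently from the clean $1/\|\bx\|_2$ law of normalization, so it requires a separate bound rather than the projection argument used above.
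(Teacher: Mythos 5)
Your proposal is correct and follows essentially the same route as the paper: use the residual-stream expansion $\bx^{l+1}=\bx^{0}+\sum_{j}f(\bx^{j})$ with the almost-orthogonality estimate of Equation~\ref{eq:inputnorm} to show every layer's activation norm is dominated by $\|\bx^{0}\|_2$, then invoke the RMSNorm Jacobian's prefactor $\sqrt{d}/\sqrt{\|\bx\|_2^2+\epsilon}$ to conclude that every normalization fed by the residual stream contracts the gradient by $\asymp\sqrt{d}/\|\bx^{0}\|_2$. The paper's own proof is only the brief sketch in the main body; your quantitative regime $\|\bx^{0}\|_2\gg\sqrt{Ld}$ and the explicit backward-pass assembly through $\prod_{j}(\bI+J_f^{j})$ are a sharper rendering of the same argument, not a different one.
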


It means that if the norm of $\bx^{0}$ is large, then the norm  $\| \bx^{l+1} \|_2$ in each layer will also be large. 
If $\| \bx^{l+1} \|_2$ is the input into a normalization layer,  according to the Jacobian equation of normalization $\frac{\partial \operatorname{RMSN}(\bx^{l+1})}{\partial \bx^{l+1}} =   \frac{\sqrt{d}}{{ \sqrt{{\| \bx^{l+1} \|}_2^2 + \epsilon}}} \operatorname{diag}(\boldsymbol{\gamma}) \left(\boldsymbol{I}-\frac{\bx^{l+1} {\bx^{l+1}}^{\top}}{\|\bx^{l+1}\|_{2}^{2} + \epsilon }\right)$, the gradient flow in each layer will be significantly %largely 
affected by the norm of $\bx^{0}$. 
Thus, we need to constrain the norm of $\bx^{0}$ before it is used as the input into the following layer.

\begin{remark}
 The norm of $\bx^0$ has a large influence of the gradient flow of the subsequent layers. If it is very large, it will lead to gradient vanishing, and if it is very small, it may lead to gradient exploding. Meanwhile, the network is also sensitive to the change of the norm of $\bx^0$.
\end{remark}

\subsubsection{PreNorm}

\textbf{Definition of PreNorm.}  A PreNorm in Transformer is defined as the normalization that is applied  
before the self-attention or the feed-forward components. As shown in Figure~\ref{fig:five_norm} (b), PreNorm is 
defined as
\begin{align}
\bY = \operatorname{Self-Attention}(\bX'), \text{where} \ \bx' = \operatorname{PreNorm}(\bx).
\label{eq:def_pn}
\end{align}
A  single-head self-attention is 
defined as
\begin{equation*}
	\bY = \bW_v \bX \bA,
\end{equation*}
\noindent where $\bP = \bX^{\top} {\bW_q}^{\top} {\bW_k} {\bX},  \quad \bA = \operatorname{softmax}(\frac{\bP}{\sqrt{d_q}})$, 
$\bA$ is called as the attention matrix and $\frac{\bP}{\sqrt{d_q}}$ is called as the logit,  $\bA \in \mathcal{R}^{n\times n}, \bX \in \mathcal{R}^{d\times n}, \bW_q \in \mathcal{R}^{d_q\times d}, \bW_k \in \mathcal{R}^{d_q\times d}, \bW_v \in \mathcal{R}^{d_v\times d}$. 
Here, our goal is to calculate $\frac{\partial \text{vec}(\bY)}{\partial \text{vec}(\bX)} $.

By vectorization of $ \bY = \bW_v \bX \bA$, we have 
\begin{equation*}
	\partial {\text{vec}(\bY)} =  (\bA^\top \otimes \bW_v) \partial  \text{vec}(\bX)  + (\bI_n \otimes \bW_v\bX) \partial  \text{vec}(\bA).
\end{equation*}
Bringing together all these terms, we %get the following formula
have that
{\footnotesize
	\begin{equation}
		\frac{\partial \text{vec}(\bY)}{\partial \text{vec}(\bX)} = (\bA^\top \otimes \bW_v) + (\bI_n \otimes \bW_v\eqnmarkbox[darkgreen]{Psi2}{\bX})  \frac{\bJ}{\sqrt{d_q}}  \left( 
		(\eqnmarkbox[darkgreen]{Psi2}{\bX^{\top}}{\bW_k}^{\top}{\bW_q} \otimes \bI_n)\bC + (\bI_n \otimes \eqnmarkbox[darkgreen]{Psi2}{\bX^{\top}}{\bW_q}^{\top}{\bW_k})
		\right).
	\label{eq:prenorm}
	\end{equation}
}
For 
simplicity, we 
denote
$$
\bJ  = \text{blockdiag}\left(\text{diag}(\bA_{:,1}) - \bA_{:,1} \bA_{:,1}^\top, \dots, \text{diag}(\bA_{:,n}) - \bA_{:,n} \bA_{:,n}^\top\right).
$$
A detailed derivation process can be found in~\citep{qi_taming_transformer_qitaming}. 
\textit{Different from~\citep{qi_taming_transformer_qitaming}, This paper is 
to analyze the influence of $\bX$, rather than %instead of
$\bW_q^{\top} \bW_k$} in the self-attention module. 
According to the Jacobian matrix in Equation~\ref{eq:prenorm}, we have the following proposition.

\begin{proposition}[PreNorm can stabilize the gradient in self-attention module.]
\label{proposition_prenorm}
If $\bX' = \bg \odot \bX$ with each column in $\bx'_i$ having a magnitude $\bg_i$, according to Equation~\ref{eq:def_pn}, with the same $\bW_q, \bW_k, \bW_v$, %we have $\bY = \bY'$ given 
$\bY = \operatorname{Self-Attention}(\bX)$ and 
$\bY'= \operatorname{Self-Attention}(\bX')$. 
Then we have
$\frac{\partial \text{vec}(\bY)}{\partial \text{vec}(\bX)} = 	\frac{\partial \text{vec}(\bY')}{\partial \text{vec}(\bX')}$.
\end{proposition}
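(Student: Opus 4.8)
The plan is to prove the identity by direct substitution of the rescaled input into the closed-form Jacobian of Equation~\ref{eq:prenorm} and then exhibiting the cancellation of the per-column magnitudes. First I would record the column rescaling as a right multiplication, $\bX' = \bg \odot \bX = \bX\bD$ with $\bD = \mathrm{diag}(\bg)$ and each $g_i>0$, so that $\bx'_i = g_i \bx_i$. With this notation every occurrence of $\bX$ in the attention computation becomes $\bX\bD$, and the task reduces to checking that the right-hand side of Equation~\ref{eq:prenorm} is unchanged when $\bX$ is replaced by $\bX\bD$. I would organise the verification summand by summand, treating the value term $(\bA^\top\otimes\bW_v)$ and the softmax-derivative term separately.

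The crux of the argument is the behaviour of the attention matrix. I would first show that $\bA$ is invariant under the rescaling. Writing $\bP' = {\bX'}^\top \bW_q^\top\bW_k\bX' = \bD\bP\bD$ and invoking the normalization structure (the PreNorm that strips each token of its magnitude, cf. the Jacobian of RMSNorm derived in Section~\ref{sec_pre}), the logits fed to the column-wise softmax should depend only on the directions of the columns of $\bX$, so $\bA' = \bA$ and consequently $\bJ' = \bJ$, since $\bJ$ is assembled blockwise from the columns of $\bA$ alone. This would immediately settle the first summand, which is then literally identical in the two Jacobians. I expect this step --- pinning down precisely why the softmax output is insensitive to the per-column scales $g_i$ --- to be the main obstacle, because a naive column scaling multiplies the logits by the factors $g_ig_j$ and does \emph{not} leave a bare softmax invariant; the argument must therefore lean explicitly on the normalization guaranteeing that the arguments of the softmax are functions of the unit-magnitude columns only.

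With $\bA$ and $\bJ$ fixed, the remaining work is bookkeeping on the Kronecker-product factors of the second summand. I would factor the diagonal $\bD$ out of each rescaled block using the mixed-product property, e.g. $\bI_n\otimes\bW_v\bX\bD = (\bI_n\otimes\bW_v\bX)(\bI_n\otimes\bD)$ and likewise pulling $\bD$ to the front of $\bX'^{\top}\bW_k^\top\bW_q$ and $\bX'^{\top}\bW_q^\top\bW_k$, and then verify that these diagonal factors, together with $\bJ$ and the commutation matrix $\bC$, recombine so that the value-path and query/key-path contributions return to their unscaled form. Assembling the two summands would then give $\frac{\partial\text{vec}(\bY)}{\partial\text{vec}(\bX)} = \frac{\partial\text{vec}(\bY')}{\partial\text{vec}(\bX')}$, which is the claim. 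The delicate point in this last stage is checking that the interaction $(\bI_n\otimes\bD)\,\bJ\,(\bD\otimes\bI_n)$ and its analogue for the other term collapse correctly; I would establish this by passing to the blockwise/entrywise description of $\bJ$ and reusing the identity $\bA' = \bA$ proved above, which is what ties the whole cancellation back to the normalization.
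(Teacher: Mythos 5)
There is a genuine gap, and it sits exactly where you flagged the ``delicate point.'' Your plan is internally inconsistent about \emph{which} map the Jacobian in the proposition refers to. The paper's reading (made explicit in its proof) is that, because of Equation~\ref{eq:def_pn}, the self-attention block receives $\operatorname{PreNorm}(\bX)$ as its input; since PreNorm strips each column's magnitude, $\operatorname{PreNorm}(\bX) = \operatorname{PreNorm}(\bX')$, so the two forward passes are \emph{literally identical}: same input to SA, hence $\bY = \bY'$, and the Jacobian of Equation~\ref{eq:prenorm} is evaluated at the same point in both cases. The equality of Jacobians is then immediate --- no diagonal factor $\bD = \operatorname{diag}(\bg)$ ever enters any term, and there is nothing to cancel. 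Your plan instead substitutes $\bX' = \bX\bD$ \emph{directly} into the closed-form Jacobian of bare self-attention, while simultaneously invoking the normalization to claim $\bA' = \bA$. You cannot have both: if the normalization is applied, then the normalized matrix (not $\bX\bD$) must appear in \emph{every} factor of Equation~\ref{eq:prenorm}, including $(\bI_n \otimes \bW_v\bX)$ and the query/key blocks, at which point all $\bD$'s vanish and your entire bookkeeping stage is vacuous; if the normalization is not applied, then $\bA' = \operatorname{softmax}(\bD\bP\bD/\sqrt{d_q}) \neq \bA$ and your first summand already fails.

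Concretely, the final cancellation you defer to --- showing that $(\bI_n\otimes\bD)\,\bJ\,(\bD\otimes\bI_n) = \bJ$ and its analogue $(\bI_n\otimes\bD)\,\bJ\,(\bI_n\otimes\bD) = \bJ$ --- is false for generic positive $\bg$: $\bJ$ is block-diagonal with blocks $\operatorname{diag}(\bA_{:,i}) - \bA_{:,i}\bA_{:,i}^\top$ built solely from $\bA$, and conjugating or one-sidedly scaling it by a generic diagonal matrix changes it; the identity $\bA' = \bA$ gives you $\bJ' = \bJ$ but says nothing about these products. So the bare-SA Jacobian is genuinely \emph{not} invariant under $\bX \mapsto \bX\bD$ (indeed, that non-invariance is the whole motivation for PreNorm), and the proof as planned cannot close. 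The repair is to abandon the substitution-and-cancellation framing entirely and argue as the paper does: PreNorm equalizes the inputs, hence equalizes outputs and Jacobians pointwise.
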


According to Proposition~\ref{proposition_prenorm}, we have the following remark.
\begin{remark}
PreNorm will guarantee that the norms of vectors $\bX$ which is the input to the self-attention layers are %always 
in a relatively stable range of norms. %and 
According to Equation~\ref{eq:prenorm}, we %can 
see that if these norms of $\bX$ are relatively stable, then the Jacobian matrix will also be stable relative to $\bX$. 
Meanwhile, since the gradients of $\bW_q, \bW_k, \bW_v$ are directly relative to $\bX$, a stable $\bX$ will guarantee that $\bW_q, \bW_k, \bW_v$ obtains relatively stable gradients.
\end{remark}

\subsubsection{MidNorm}

\textbf{Definition of MidNorm.} A MidNorm in Transformer is defined as the normalization that is applied 
after the self-attention and feed-forward components and meanwhile before the residual shortcut. 
As shown in Figure~\ref{fig:five_norm} (b), MidNorm 
is defined as
\begin{equation} 
\by = \operatorname{MidNorm}(\bz), \text{where} \ \ \bz = \bW_2 \operatorname{ReLU}(\bW_1 \bx).
\label{eq:def_midnorm}
\end{equation}
In the self-attention, $\bW_v$ and $\bW_o$ can be seen as similar function as $\bW_1$ and $\bW_2$ in FFN. If we only use single-head attention, then we have $ \bz = \bW_o \bW_v \bx$.

The Jacobian matrix of an FFN can be computed as:
$
\boldsymbol{J}_{\bz}(\bx) = \frac{\partial \operatorname{FFN}(\bx; \bW_1, \bW_2)}{\partial \bx} =    {\bW_2} \operatorname{diag} { \left( \boldsymbol{1}\left({ \bW_1  \bx>\bold{0}}\right) \right)} {\bW_1} .
$
The Jacobian matrix of an RMSNorm layer is
$
		\frac{\partial\by}{\partial \bz} =   \frac{\sqrt{d}}{{{{\| \bz \|}_2 }}} \operatorname{diag}(\boldsymbol{\gamma})  \left(\boldsymbol{I}-\frac{\bz \bz^{\top}}{\|\bz\|_{2}^{2} }\right). 
$ 
The joint Jacobian matrix of an FFN and an RMSNorm is
\begin{equation}
	\begin{aligned} 
		\frac{\partial\by}{\partial \bx} &=  \frac{\partial\by}{\partial \bz}  \frac{\partial\bz}{\partial \bx} =  \sqrt{d}  \operatorname{diag}(\boldsymbol{\gamma})  \left(\boldsymbol{I}-\frac{\bz \bz^{\top}}{\|\bz\|_{2}^{2} }\right)   \eqnmarkbox[blue]{Psi2}{ \frac{{\bW_2} \operatorname{diag} { \left( \boldsymbol{1}\left({ \bW_1  \bx>\bold{0}}\right) \right)} {\bW_1}}{{\|  \bW_2 \operatorname{ReLU}(\bW_1 \bx) \|}_2} }. 
	\end{aligned}
	\label{eq:midnorm}
\end{equation}

\begin{proposition}[The effect 
of MidNorm]
\label{proposition_midnorm}
Let $\bW = \frac{{\bW_2} \operatorname{diag} { \left( \boldsymbol{1}\left({ \bW_1  \bx>\bold{0}}\right) \right)} {\bW_1}}{{\|  \bW_2 \operatorname{ReLU}(\bW_1 \bx) \|}_2},$ in a high-dimensional  settings when $\bW_1$, $\bW_2$ and $\bx$ are high-dimensional and random, the singular values of $\ \bW$ will be only related to the shape of $\bW_1$ and $\bW_2$, and will be independent to the magnitude of $\bW_1$ and $\bW_2$.
\end{proposition}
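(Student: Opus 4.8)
The plan is to separate the claim into an exact algebraic part (independence from the magnitudes of $\bW_1$ and $\bW_2$) and a concentration part (dependence only on the shapes). Write $\bW = \frac{1}{s}\,\bW_2 \bD \bW_1$, where $\bD = \operatorname{diag}(\boldsymbol{1}(\bW_1\bx>\boldsymbol{0}))$ is the ReLU activation mask and $s = \|\bW_2\operatorname{ReLU}(\bW_1\bx)\|_2$ is the scalar normalizer appearing in Equation~\ref{eq:midnorm}.

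First I would establish exact scale invariance. Fix any positive scalars $c_1,c_2>0$ and replace $\bW_1\mapsto c_1\bW_1$ and $\bW_2\mapsto c_2\bW_2$. Since ReLU is positively homogeneous and its sign pattern is unchanged under positive scaling, the mask $\bD$ is invariant and $\operatorname{ReLU}(c_1\bW_1\bx)=c_1\operatorname{ReLU}(\bW_1\bx)$. Hence the numerator $\bW_2\bD\bW_1$ scales by the factor $c_1c_2$ and the denominator $s$ scales by the same factor $c_1c_2$, so $\bW$, and therefore its entire singular spectrum, is literally unchanged. This already proves the ``independent of the magnitude of $\bW_1$ and $\bW_2$'' half of the statement, and it holds exactly, with no high-dimensionality assumption needed.

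Given this invariance, I would normalize the entries of $\bW_1$ and $\bW_2$ to unit variance without loss of generality, so that the only free parameters left are the dimensions ($d$, the hidden width, and $d_v$), i.e., the shapes. The remaining task is to show that, in the high-dimensional regime, the singular values of the normalized product concentrate around deterministic values fixed by these aspect ratios. For the denominator, conditioning on $\bu=\operatorname{ReLU}(\bW_1\bx)$ and applying Theorem~\ref{theorem:norm_concerntraion} (concentration of norm) to the Gaussian-like vector $\bW_2\bu$ gives $s\asymp \|\bu\|_2\sqrt{d_v}$, while a second application of the concentration of norm together with $\mathbb{E}[\operatorname{ReLU}(g)^2]=\tfrac12\mathbb{E}[g^2]$ shows $\|\bu\|_2$ concentrates at a value proportional to $\|\bx\|_2$ times a factor set by the hidden width and the active fraction ($\approx 1/2$). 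For the numerator, after restricting to the active coordinates $\bW_2\bD\bW_1$ is a product of two random matrices whose singular-value law concentrates, by standard high-dimensional (Marchenko--Pastur) behavior, to a distribution whose overall scale is the unit variance and whose shape depends only on the aspect ratios. Dividing by $s$ then yields a spectrum that is a deterministic function of the shapes alone.

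The main obstacle is the statistical coupling between the mask $\bD$ and $\bW_1$: each active coordinate corresponds to conditioning row $i$ of $\bW_1$ on $\langle (\bW_1)_{i,:},\bx\rangle>0$, which biases the active rows of $\bW_1$ along the $\bx$ direction. I would argue that this is a rank-one conditioning that perturbs only a vanishing fraction of the spectral mass in high dimension, so by the almost-orthogonality of Theorem~\ref{remark:high_dimension} the bulk singular-value distribution is unaffected to leading order; treating $\bD$ as an independent coordinate projection retaining a fraction $\approx 1/2$ of the rows is then legitimate for the concentration argument. Making this decoupling fully rigorous, rather than to leading order, is the delicate step; the scale-invariance part, by contrast, is exact and carries the core message of the proposition.
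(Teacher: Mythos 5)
Your proposal is correct in substance but takes a genuinely different route from the paper's own proof. The paper does not keep the structure $\bW_2\operatorname{diag}\left(\boldsymbol{1}(\bW_1\bx>\boldsymbol{0})\right)\bW_1$ at all: it silently replaces the whole expression with the single-matrix surrogate $\bW/\|\bW\bx\|_2$, assumes i.i.d.\ entries of variance $\sigma_W^2$ for $\bW$ and $\sigma_x^2$ for $\bx$, shows by concentration that $\|\bW\bx\|_2 \approx \sqrt{mn}\,\sigma_W\sigma_x$, then invokes the random-matrix asymptotic $\sigma_1(\bW)\approx(\sqrt{m}+\sqrt{n})\,\sigma_W$ so that $\sigma_W$ cancels, leaving $\sigma_1 \approx (\sqrt{m}+\sqrt{n})/(\sqrt{mn}\,\sigma_x)$. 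Magnitude-independence thus appears only as a cancellation inside an asymptotic formula, and neither the ReLU mask nor the two-matrix product is ever addressed. You, by contrast, first isolate an exact algebraic fact: under $\bW_1\mapsto c_1\bW_1$, $\bW_2\mapsto c_2\bW_2$ with $c_1,c_2>0$, the mask is invariant and the numerator and denominator both scale by $c_1c_2$, so the spectrum is literally unchanged --- no randomness, no high dimension, no distributional assumption needed. This is stronger and cleaner than what the paper establishes for that half of the claim, and it is the half that carries the proposition's message. For the shape half you keep the true object (product plus mask) and honestly flag the statistical coupling between the mask and $\bW_1$ as the unresolved delicate step; the paper never confronts this issue because its surrogate has no mask. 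What the paper's route buys is an explicit quantitative prediction, e.g.\ $\sigma_1 \approx 2/(\sqrt{m}\,\sigma_x)$ when $m=n$, which your sketch does not push through to; what your route buys is an exact, assumption-free proof of magnitude-independence and a treatment faithful to the matrix actually named in the statement.
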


According to Proposition~\ref{proposition_midnorm}, we have the following remark.
\begin{remark}
MidNorm can effectively guarantee that the norms of $\ \bW_1$, $\bW_2$, $\bW_v$, and $\bW_o$ will not affect the Jacobian matrix as shown in Equation~\ref{eq:midnorm}. It means that even the magnitudes of these weight matrices are very large, it will not magnify the gradient.
\end{remark}

\subsubsection{PostNorm}

\textbf{Definition of PostNorm.} A PostNorm in Transformer is defined as the normalization that is applied 
after the residual block. As shown in Figure~\ref{fig:five_norm} (d), PostNorm is %can be 
defined as
\begin{equation} 
	\bx^{l+1} = \operatorname{PostNorm}(\bz^{l+1}), \text{where} \ \ \bz^{l+1} = \bx^l + f(\bx^l; \bW^{l+1}).
\label{eq:def_postnorm}
\end{equation}

\begin{proposition}[PostNorm is sensitive to the vector norm of activation]
\label{proposition_postnorm}
If $\bz^{l+1}$ in Equation~\ref{eq:def_postnorm} is very large, then it will significantly 
decrease the gradient.
\end{proposition}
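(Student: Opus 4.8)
The plan is to make the qualitative claim quantitative by computing the operator norm of the PostNorm Jacobian and showing it decays like the reciprocal of $\|\bz^{l+1}\|_2$. Since PostNorm is RMSNorm applied to $\bz^{l+1}$, I would begin from the Jacobian formula recorded in the preliminaries,
\[
\frac{\partial \bx^{l+1}}{\partial \bz^{l+1}} = \frac{\sqrt{d}}{\sqrt{\|\bz^{l+1}\|_2^2 + \epsilon}}\, \operatorname{diag}(\boldsymbol{\gamma}) \left(\boldsymbol{I} - \frac{\bz^{l+1} {\bz^{l+1}}^{\top}}{\|\bz^{l+1}\|_2^2 + \epsilon}\right),
\]
and read it as a product of three factors: a scalar prefactor, the diagonal gain $\operatorname{diag}(\boldsymbol{\gamma})$, and a projection-type matrix $\boldsymbol{P} := \boldsymbol{I} - \frac{\bz^{l+1}{\bz^{l+1}}^{\top}}{\|\bz^{l+1}\|_2^2+\epsilon}$.

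First I would bound the operator norm of each factor. The matrix $\boldsymbol{P}$ is symmetric, with eigenvalue $\frac{\epsilon}{\|\bz^{l+1}\|_2^2+\epsilon}\in[0,1)$ along the direction $\bz^{l+1}$ and eigenvalue $1$ on its orthogonal complement, so $\|\boldsymbol{P}\|_2 = 1$; in particular $\boldsymbol{P}$ never amplifies. Since $\boldsymbol{\gamma}$ is initialized to (and stays near) the all-ones vector, $\|\operatorname{diag}(\boldsymbol{\gamma})\|_2 = \max_i|\gamma_i| =: \gamma_{\max}$ is bounded. Combining these with submultiplicativity of the operator norm yields
\[
\left\| \frac{\partial \bx^{l+1}}{\partial \bz^{l+1}} \right\|_2 \le \frac{\sqrt{d}\,\gamma_{\max}}{\sqrt{\|\bz^{l+1}\|_2^2 + \epsilon}} \asymp \frac{\sqrt{d}\,\gamma_{\max}}{\|\bz^{l+1}\|_2},
\]
so the Jacobian's operator norm is inversely proportional to the activation norm once $\|\bz^{l+1}\|_2 \gg \sqrt{\epsilon}$. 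The chain rule then finishes the argument: the upstream gradient factors as $\frac{\partial \mathcal{L}}{\partial \bz^{l+1}} = \frac{\partial \mathcal{L}}{\partial \bx^{l+1}}\frac{\partial \bx^{l+1}}{\partial \bz^{l+1}}$, hence $\big\|\frac{\partial \mathcal{L}}{\partial \bz^{l+1}}\big\|_2 \le \big\|\frac{\partial \mathcal{L}}{\partial \bx^{l+1}}\big\|_2\cdot \frac{\sqrt{d}\,\gamma_{\max}}{\|\bz^{l+1}\|_2}$. A large $\|\bz^{l+1}\|_2$ therefore shrinks the gradient passed below PostNorm by a factor $\sim 1/\|\bz^{l+1}\|_2$, and because this shrunken $\frac{\partial \mathcal{L}}{\partial \bz^{l+1}}$ seeds all subsequent backpropagation into $\bx^l$, $f(\cdot)$, and the weights $\bW^{l+1}$ through Equation~\ref{eq:jacobian_motivation}, the suppression propagates to every earlier layer, establishing the claimed ``significant decrease.''

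I would flag the one point needing care: $\boldsymbol{P}$ is not a pure rescaling, so $\frac{\partial \mathcal{L}}{\partial \bz^{l+1}}$ is not merely $1/\|\bz^{l+1}\|_2$ times the upstream gradient—the component of $\frac{\partial \mathcal{L}}{\partial \bx^{l+1}}$ along $\bz^{l+1}$ is additionally annihilated. The main obstacle is thus not the scalar decay but ruling out any compensating amplification from $\boldsymbol{P}$; the eigenvalue computation settles this, since $\|\boldsymbol{P}\|_2=1$ gives a clean upper bound irrespective of how $\frac{\partial \mathcal{L}}{\partial \bx^{l+1}}$ aligns with $\bz^{l+1}$. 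A secondary subtlety is that the statement is asymptotic (``very large''), so I would phrase the conclusion as the operator-norm bound holding in the regime $\|\bz^{l+1}\|_2 \gg \sqrt{\epsilon}$ rather than as an exact identity.
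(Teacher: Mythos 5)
Your proposal is correct and follows essentially the same route as the paper's proof: write down the RMSNorm Jacobian $\frac{\partial \bx^{l+1}}{\partial \bz^{l+1}} = \frac{\sqrt{d}}{\|\bz^{l+1}\|}\left(\bI - \frac{\bz^{l+1}{\bz^{l+1}}^{\top}}{\|\bz^{l+1}\|_2^2}\right)$ and conclude via the chain rule $\frac{\partial \mathcal{L}}{\partial \bz^{l+1}} = \frac{\partial \mathcal{L}}{\partial \bx^{l+1}}\frac{\partial \bx^{l+1}}{\partial \bz^{l+1}}$ that a large $\|\bz^{l+1}\|_2$ suppresses the backpropagated gradient. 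Your version is in fact tighter than the paper's two-line argument, since you explicitly verify $\|\boldsymbol{P}\|_2 = 1$ (ruling out compensating amplification from the projection factor) and track the $\operatorname{diag}(\boldsymbol{\gamma})$ and $\epsilon$ terms, which the paper leaves implicit.
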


\begin{proof}
From Equation~\ref{eq:def_postnorm}, we have $\frac{\partial \bx^{l+1}}{\partial \bz^{l+1}} = \frac{\sqrt{d}}{\|\bz^{l+1} \|} (\bI - \frac{\bz^{l+1} {\bz^{l+1}}^{\top}}{\| \bz^{l+1} \|_2^2})$. 
If $\|\bz^{l+1} \|$ is very large, according to $\frac{\partial L}{\partial \bz^{l+1}}=\frac{\partial L}{\partial \bx^{l+1}}  \frac{\partial \bx^{l+1}}{\partial \bz^{l+1}}$, we have that %it means 
the gradient of $\frac{\partial L}{\partial \bz^{l+1}}$ will be largely decreased.
\end{proof}

In a classical Transformer~\citep{transformer_vaswani2017attention}, if $f(\bx; \bW_1,\bW_2) =  \bW_2 \operatorname{ReLU}(\bW_1 \bx)$, along with the training process, $\sigma(\bW_1)$ and $\sigma(\bW_2)$ will usually become too large. 
In this way, $\| f(\bx^l; \bW^{l+1}) \|_2$ will be very large,  it means $\bz^{l+1}$ in Equation~\ref{eq:def_postnorm} will be very large. 
Therefore, we have that PostNorm under this circumstance  will lead to gradient vanishing.

\begin{remark}
We need to be very careful when using PostNorm, we must ensure that the norm of the input vector to PostNorm is within a reasonable range, otherwise the 
network is likely to cause a 
gradient vanishing when $\bz^{l}$ being very large or a gradient exploding $\bz^{l}$ when $\bz^{l}$ being very small. 
\end{remark}

\subsubsection{QKNorm}

\textbf{Definition of QKNorm.} A QKNorm~\citep{qk_norm_henry2020query} in Transformer is defined as the normalization that is applied % normalization 
on queries and keys in the self-attention block. As shown in Figure~\ref{fig:five_norm} (e), 
self-attention with QKNorm~\citep{scaling22b_dehghani2023scaling} is defined as
\begin{equation}
	\bY = \bW_v \bX \bA', \text{where}\ \  \bA' = \operatorname{softmax}(\frac{\bP'}{\sqrt{d_h}}), \ \bP' = \bQ'^{\top} \bK',
\label{eq:def_qknorm}
\end{equation}
\noindent in which $\bq'_i$ and $\bk'_i$ are the $i$-th column and the $j$-th column in $\bQ'$ and $\bK'$, individually, and we have
\begin{equation}
\small
	\centering
	\begin{aligned} 
		\bq'_i &= \operatorname{QKNorm}({\bW_q} \bx_i ) = \boldsymbol{\gamma}_q \odot   \frac{ \sqrt{d_h} {\bW_q} \bx_i }{ {\| {\bW_q} \bx_i  \|}_2 } =  \sqrt{d_h} \operatorname{diag}( \boldsymbol{\gamma} _q) \eqnmarkbox[magenta]{Psi2}{\frac{ {\bW_q} \bx_i  }{ {\| {\bW_q} \bx_i  \|}_2}}, \\
		\bk'_j &= \operatorname{QKNorm}({\bW_k} \bx_j ) = \boldsymbol{\gamma}_k \odot   \frac{ \sqrt{d_h} {\bW_k} \bx_j }{ {\| {\bW_k} \bx_j  \|}_2 } =  \sqrt{d_h} \operatorname{diag}( \boldsymbol{\gamma} _k) \eqnmarkbox[magenta]{Psi2}{\frac{ {\bW_k} \bx_j  }{ {\| {\bW_k} \bx_j  \|}_2}},
	\end{aligned}
\label{eq:qknorm}
\end{equation}
\noindent where $d_h$ is the head dimension. To facilitate the %our 
derivation, we %will 
denote $\bQ = \bW_q \bX$ and $\bK = \bW_k \bX$ as before, and use $\bq_i$ and $\bk_j$ to denote the $i$-th column and the $j$-th column in $\bQ$ and $\bK$, individually. Thus, we %denote
have that $\bq'_i = \operatorname{QKNorm}(\bq_i)$ and $\bk'_j = \operatorname{QKNorm}(\bk_j)$.

Moreover, we have that $P'_{ij} = {\bq'}_i^{\top} \bk'_j$, where $P'_{ij}$ is a scalar, and the gradient is computed as follows,
\begin{equation}
	\begin{aligned}
	\frac{\partial P'_{ij}}{\partial \bx} &=  {\bk'}_j^{\top} \frac{\partial \bq'_{i}}{\partial \bx} +  {\bq'}_i^{\top} \frac{\partial \bk'_{j}}{\partial \bx} \\
	&= \sqrt{d_h} \operatorname{diag}( \boldsymbol{\gamma} _q)  {\bk'}_j^{\top}  (\bI - \frac{\bq'_i \bq'_i}{\|  \bq'_i\|_2^2}) \frac{\bW_q}{\| {\bW_q} \bx_i  \|_2}  +  \sqrt{d_h} \operatorname{diag}( \boldsymbol{\gamma} _k)  {\bq'}_i^{\top}  (\bI - \frac{\bk'_j \bk'_j}{\|  \bk'_j\|_2^2}) \frac{\bW_k}{\| {\bW_k} \bx_i  \|_2}  .
	\end{aligned} 
\label{eq:qknorm_deriviation}
\end{equation}

\begin{proposition}[Effect of QKNorm] 
\label{proposition_qknorm}
In a high-dimensional settings, \ie, when all $\bW_q$, $\bW_k$ and $\bx$ are high-dimensional and random, in  Equation~\ref{eq:qknorm_deriviation}, the gradient term of $\frac{\partial P'_{ij}}{\partial \bx}$ is independent of the magnitudes $\bW_q$ and $\bW_k$.
\end{proposition}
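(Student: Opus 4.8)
The plan is to exploit a degree-zero homogeneity of each summand in Equation~\ref{eq:qknorm_deriviation} with respect to both weight matrices. First I would fix a scalar rescaling $\bW_q \mapsto c\,\bW_q$ with $c>0$ and track each factor of the first summand. Writing $\bq_i = \bW_q \bx_i$, the pre-normalization query scales as $\bq_i \mapsto c\,\bq_i$, but since $\operatorname{QKNorm}$ divides by the norm, $\bq'_i = \operatorname{QKNorm}(\bq_i)$ is unchanged; hence the rank-one projector $\bI - \bq'_i {\bq'_i}^{\top}/\|\bq'_i\|_2^2$ is unchanged as well. The only other $\bW_q$-dependent factor is $\bW_q/\|\bW_q\bx_i\|_2$, whose numerator and denominator both acquire the factor $c$, leaving it invariant, while the remaining factor $\bk'_j$ does not involve $\bW_q$ at all. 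Thus the first summand is exactly invariant under $\bW_q \mapsto c\,\bW_q$; moreover, because $\bk'_j = \operatorname{QKNorm}(\bW_k\bx_j)$ is itself scale-invariant, it is also invariant under $\bW_k \mapsto c\,\bW_k$. By the symmetric argument ($q\leftrightarrow k$), the second summand enjoys the same two invariances, so the entire gradient $\partial P'_{ij}/\partial\bx$ is invariant under independent scalar rescalings of $\bW_q$ and $\bW_k$.

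To connect this to the high-dimensional claim, I would decompose $\bW_q = \sigma_q \tilde{\bW}_q$, where $\sigma_q$ captures the magnitude (e.g., the per-entry standard deviation) and $\tilde{\bW}_q$ is the unit-scale random matrix. Theorem~\ref{theorem:norm_concerntraion} then yields the concentration $\|\bW_q \bx_i\|_2 \asymp \sigma_q\sqrt{d_h}\,\|\bx_i\|_2$ with high probability, so that $\bW_q/\|\bW_q\bx_i\|_2 \asymp \tilde{\bW}_q/(\sqrt{d_h}\,\|\bx_i\|_2)$ is governed by $\tilde{\bW}_q$ alone; the same normalization shows $\bq'_i$ depends only on the direction of $\tilde{\bW}_q\bx_i$. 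Combining this concentration with the exact homogeneity above, the magnitude $\sigma_q$ cancels from every factor of the first summand, and symmetrically $\sigma_k$ cancels from the second. Hence $\partial P'_{ij}/\partial\bx$ reduces, to leading order and with high probability, to a function of the normalized weights and the inputs, independent of the magnitudes of $\bW_q$ and $\bW_k$.

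The main obstacle I anticipate is pinning down exactly what ``magnitude'' means and in what sense the independence holds. Read as a single positive scalar multiplier, the statement is an exact identity and no probabilistic input is needed; read as the full scale of a genuinely random $\bW_q$, the cancellation is only approximate, and I would have to quantify the fluctuation of $\|\bW_q \bx_i\|_2$ about its concentration value via Theorem~\ref{theorem:norm_concerntraion} (with Theorem~\ref{remark:high_dimension} available to control how the $\bq'_i$ and $\bk'_j$ factors combine across the two summands). A secondary subtlety is that $\bx_i$ appears inside $\bW_q \bx_i$, so when differentiating with respect to $\bx$ one should check that the homogeneity survives the derivative; this holds because the scaling $\bW_q \mapsto c\bW_q$ commutes with $\partial/\partial\bx$, but it is worth stating explicitly rather than leaving implicit.
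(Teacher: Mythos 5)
Your proposal is correct, but it takes a genuinely different route from the paper. The paper's proof re-derives the full Jacobian chain $\frac{\partial \text{vec}(\bY)}{\partial \text{vec}(\bX)}$ through $\bA'$, $\bP'$, $\bQ'$, $\bK'$, isolates the factor $\frac{\bW_q}{\|\bW_q \bx_i\|_2}$ in $\frac{\partial \bq'_i}{\partial \bx_i}$ (and symmetrically for $\bk'_j$), and then invokes Proposition~\ref{proposition_midnorm}: in high dimension the singular values of $\frac{\bW}{\|\bW\bx\|_2}$ concentrate, via random matrix theory ($\|\bW\bx\|_2 \approx \sqrt{mn}\,\sigma_W\sigma_x$ and $\sigma_1(\bW) \approx (\sqrt{m}+\sqrt{n})\sigma_W$), to values free of $\sigma_W$. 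Your argument replaces that probabilistic input with an exact degree-zero homogeneity: every factor of each summand in Equation~\ref{eq:qknorm_deriviation} is invariant under $\bW_q \mapsto c\,\bW_q$ and $\bW_k \mapsto c'\,\bW_k$, so the gradient is \emph{identically} unchanged — a deterministic statement valid in any dimension, with no randomness assumption. This is more elementary and, in a sense, sharper: as you note, once you write $\bW_q = \sigma_q \tilde{\bW}_q$, the magnitude $\sigma_q$ is itself a scalar multiplier, so your second (concentration) paragraph is strictly redundant for the independence claim — the exact invariance already disposes of it, which also means the ``high-dimensional and random'' hypothesis in the proposition is not actually needed under this reading. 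What the paper's route buys instead is quantitative information your argument does not provide: Proposition~\ref{proposition_midnorm} pins down what the singular values of $\frac{\bW}{\|\bW\bx\|_2}$ actually concentrate to (a deterministic, shape-dependent value), which supports the paper's broader narrative about the conditioning of the Jacobian, whereas pure scale-invariance only says the answer does not move when the magnitudes do. Your closing caveats (what ``magnitude'' means, and that the rescaling commutes with $\partial/\partial\bx$ because the invariance holds as an identity in $\bx$) are both handled correctly.
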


According to the Proposition, we have the following remark. 
\begin{remark}
QKNorm can mitigate the joint effect of $\bW_q^{\top} \bW_k$ to the gradient of the self-attention layer.  The fast increasing of the singular values of $\bW_q^{\top} \bW_k$ has been 
revealed to be a reason leading to 
model crash. 
Our analysis shows that QKNorm can effectively mitigate the 
reason to cause model crash brought by $\bW_q^{\top} \bW_k$.
\end{remark}

Though QKNorm can mitigate the problem brought by $\bW_q^{\top} \bW_k$, it cannot fully replace the %value 
role of PreNorm because PreNorm can jointly deal with the problem of $\bW_q, \bW_k$ and $\bW_v$ because the gradient of $\bW_v$ is also affected by the value of $\bX$.

\subsection{DNT: A network that can relieve the heavy-tail gradient problem}

\begin{figure}[h]
\centering
\tiny

\begin{tikzpicture}[
    box/.style={draw, minimum width=0.8cm, minimum height=0.6cm, thick},
    inputnorm/.style={draw, minimum width=0.8cm, minimum height=0.6cm, 
                text=red, font=\sffamily, align=center,   
                outer sep=0pt},
    prenorm/.style={draw, 
                minimum width=0.8cm, minimum height=0.6cm, 
                text=darkgreen, font=\sffamily, align=center,   
                outer sep=0.0pt},
    prenorm2/.style={draw, 
                minimum width=0.8cm, minimum height=0.6cm, 
                text=darkgreen, font=\sffamily, align=center,   
                outer sep=0.0pt},
    prenorm3/.style={dashed, draw,  rounded corners=3mm,
                minimum width=0.8cm, minimum height=0.6cm, 
                text=darkgreen, font=\sffamily, align=center,   
                outer sep=0.0pt},    
    midnorm/.style={draw, 
                minimum width=0.8cm, minimum height=0.6cm, 
                text=blue, font=\sffamily, align=center,   
                outer sep=0pt},
    qknorm/.style={draw, 
                minimum width=0.8cm, minimum height=0.6cm, 
                text=magenta, font=\sffamily, align=center,   
                outer sep=0pt},
    bigbox/.style={draw, minimum width=10.3cm, minimum height=1.7cm, thick},
    circ/.style={draw, circle, minimum size=0.6cm, thick},
    >=Stealth
  ]

  \node[box] (we) at (0,0) {WE/PE};
  \node[inputnorm, right=0.4cm of we] (n1) {InputNorm};
  
  % Large enclosing box positioned relative to first elements
  \node[bigbox, right=0.2cm of n1, anchor=west] (enclosure) {};
  \node[anchor=south east] at (enclosure.south east) {\large N};
  
  % First group inside large box
  \node[prenorm] (n2) at ($(enclosure.west) + (0.8,-0.0)$) {PreNorm};
  \node[qknorm, right=0.4cm of n2] (sa) {SA with\\QKNorm};
  \node[midnorm, right=0.4cm of sa] (n3) {MidNorm};
  \node[circ, right=0.4cm of n3] (plus1) {$+$};
  
  % Second group inside large box
  \node[prenorm3, right=0.4cm of plus1] (n4) {PreNorm};
  \node[box, right=0.4cm of n4] (ffn) {FFN};
  \node[midnorm, right=0.4cm of ffn] (n5) {MidNorm};
  \node[circ, right=0.4cm of n5] (plus2) {$+$};
  
  % Connect input to first group
  \draw[->] (n1) --  (n2);

  \draw[->] ++(-1.0,0)  -- (we);
  % Connect the components with arrows
  \draw[->] (we) -- (n1);

  \draw[->] (n2) -- (sa);
  \draw[->] (sa) -- (n3);
  \draw[->] (n3) -- (plus1);
  \draw[->] (plus1) -- (n4);
  \draw[->] (n4) -- (ffn);
  \draw[->] (ffn) -- (n5);
  \draw[->] (n5) -- (plus2);
  \draw[->] (plus2) -- ++(1.0,0);
  
  % Add the feedback loops
  % First feedback loop for first group
  \path (n1) -- (n2) coordinate[pos=0.7] (midpoint);
   \draw[thick, red, <-] 
            (plus1) -- ++(0,0.7) -| (midpoint);

  \path (plus1) -- (n4) coordinate[pos=0.5] (midpoint);
  \draw[thick, red, <-] 
            (plus2) -- ++(0,0.7) -| (midpoint);

\end{tikzpicture}

\caption{DNT architecture. The second PreNorm marked with dashed and rounded corners is optional. By default, we do not use the second PreNorm.}
\label{fig:heat_net}
\end{figure}
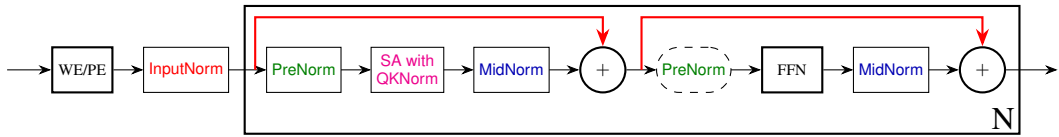

Having 
analyzed the 
effects of different normalizations, 
we use four types of normalizations, including InputNorm, PreNorm, MidNorm and QKNorm, except for PostNorm. 
\textit{The reason why we do not use PostNorm is 
that it may bring in some training problem.} Finally, we illustrate our DNT 
in Figure~\ref{fig:heat_net}.

Our DNT model commences with word embeddings or patch encodings (WE/PE). Then, the initial representations undergo the InputNorm processing, establishing the normalized embeddings for subsequent operations.
The core transformer block consists of $N$ blocks.
In each block, prior to self-attention, a PreNorm is applied, followed by a self-attention 
augmented with query-key normalization (\ie, QKNorm). Subsequently, a MidNorm  processes the attention outputs before integrating via a residual connection. 
In the second sub-block, %first, 
a \textit{selective PreNorm} precedes the feed-forward network (FFN), after FFN, there is a MidNorm, and a final residual connection completes the information flow. 
This entire structure is replicated $N$ times to form the complete network.

\begin{figure}[h]
	\centering
	\tiny
	\begin{subfigure}[b]{0.49\textwidth}
		\centering
		\begin{tikzpicture}[
			box/.style={draw, rounded corners, minimum width=0.8cm, minimum height=0.6cm, thick},
			inputnorm/.style={draw, 
				minimum width=1.2cm, minimum height=0.6cm, 
				text=red, font=\sffamily, align=center,   
				outer sep=0pt},
			>=Stealth
			]
			
			% Create the input and output boxes
			\node[inputnorm] (input) {InputNorm};
			\node[box, right=2.5cm of input] (weights) {$\bW_q, \bW_k, \bW_v, \bW_o, \bW_1, \bW_2$};

			\node[box] (features) at ($(input)!0.5!(weights) + (-0.5,0.6)$) {$X^0, X^1, \ldots, X^L$};
			
			\draw[->] (input) -- (weights);
			
		\end{tikzpicture}
		
		\caption{Influence of InputNorm.}
	\end{subfigure}
	\hfill
	\begin{subfigure}[b]{0.49\textwidth}
		\centering
		\begin{tikzpicture}[
			box/.style={draw, rounded corners, minimum width=0.8cm, minimum height=0.6cm, thick},
			prenorm/.style={draw, 
				minimum width=1.2cm, minimum height=0.6cm, 
				text=darkgreen, font=\sffamily, align=center,   
				outer sep=0pt},
			>=Stealth
			]
			% Create the input and output boxes
			\node[prenorm] (input) {PreNorm};
			\node[box, right=2.5cm of input] (weights) {$\bW_q, \bW_k, \bW_v$};
			
			% Create the features box above the arrow
			\node[box] (features) at ($(input)!0.5!(weights) + (-0.2,0.6)$) {$X^0, X^1, \ldots, X^L$};
			
			% Add the direct arrow from input to weights
			\draw[->] (input) -- (weights);
			
		\end{tikzpicture}
		\caption{Influence of PreNorm.}
	\end{subfigure}
	\vskip\baselineskip
	\begin{subfigure}[b]{0.49\textwidth}
		\centering
		\begin{tikzpicture}[
			box/.style={draw, rounded corners, minimum width=0.8cm, minimum height=0.6cm, thick},
			midnorm/.style={draw, 
				minimum width=1.2cm, minimum height=0.6cm, 
				text=blue, font=\sffamily, align=center,   
				outer sep=0pt},
			>=Stealth
			]
			
			% Create the input and output boxes
			\node[midnorm] (input) {MidNorm};
			\node[box, right=2.5cm of input] (weights) {$\bW_v, \bW_o, \bW_1, \bW_2$};
			
			% Create the features box above the arrow
			\node[box] (features) at ($(input)!0.5!(weights) + (-0.2,0.6)$) {$X^0, X^1, \ldots, X^L$};
			
			% Add the direct arrow from input to weights
			\draw[->] (input) -- (weights);
			
		\end{tikzpicture}
		\caption{Influence of MidNorm.}
	\end{subfigure}
	\hfill
	\begin{subfigure}[b]{0.49\textwidth}
		\centering
		\begin{tikzpicture}[
			box/.style={draw, rounded corners, minimum width=0.8cm, minimum height=0.6cm, thick},
			qknorm/.style={draw, 
				minimum width=1.2cm, minimum height=0.6cm, 
				text=magenta, font=\sffamily, align=center,   
				outer sep=0pt},
			>=Stealth
			]
			
			% Create the input and output boxes
			\node[qknorm] (input) {QKNorm};
			\node[box, right=2.5cm of input] (weights) {$\bW_q, \bW_k$};
			
			% Create the features box above the arrow
			\node[box] (features) at ($(input)!0.5!(weights) + (-0.0,0.6)$) {$\bQ,\bK$};
			
			% Add the direct arrow from input to weights
			\draw[->] (input) -- (weights);
			
		\end{tikzpicture}
		\caption{Influence of QKNorm.}
	\end{subfigure}
	\caption{Influence of different normalizations. For instance, InputNorm stabilizes $\bW_q, \bW_k, \bW_v, \bW_o, \bW_1, \bW_2$ by constraining $X^0, X^1, \ldots, X^L$.}
	\label{fig:four_diagrams}
\end{figure}
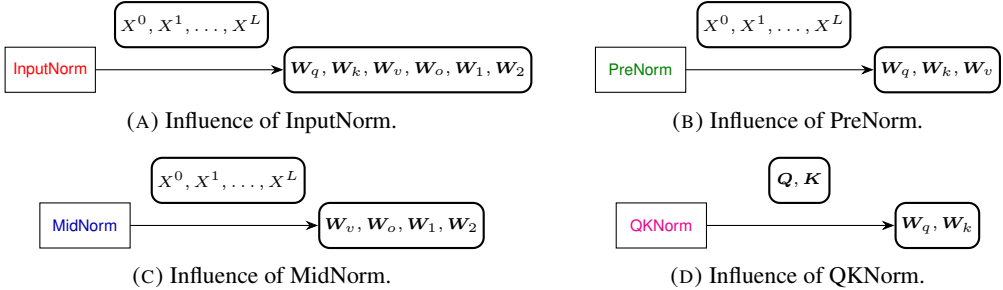

We visualize the effects 
of each normalization in our DNT network in Figure~\ref{fig:four_diagrams}. According to 
the analysis mentioned above, our DNT model has the following advantages: a) the magnitude of $\bx^0$ will 
significantly affect the gradient of each layer in the Transformer, but we introduce InputNorm %will 
to resolve the influence of $\bx^0$; b) PreNorm can constrain the norm of each column in activations $\bX$ in each timestep, and thus amend 
the Jacobian matrix of self-attention to 
not be significantly 
affected by the magnitude of $\bX$; c) MidNorm will amend %promise that 
the Jacobian matrix of each sub-block (\ie, the sub-block with self-attention and the sub-block with FFN) in our DNT %Transformer will 
to not be affected by the magnitude of $\bW_1$, $\bW_2$, $\bW_v$ and $\bW_o$; d) QKNorm  can relieve or even remove the influence of the magnitude of $\bW_q$  and $\bW_k$ on the Jacobian matrix of self-attention, and thus reduce the risk of problems, such as rank collapse~\citep{rank_collapse_noci2022signal}, entropy collapse~\citep{stabilizing_transformer_zhai2023stabilizing}, or spectral energy concentration~\citep{qi_taming_transformer_qitaming} caused by $\bW_q^{\top} \bW_k$.

In DNT, we use four different types of normalizations. We observe that nGPT \citep{ngpt_loshchilov2024ngpt} also uses  
some of the normalizations mentioned above. Here, we would like to emphasize the differences between DNT and nGPT that: a) DNT provides theoretical justifications for each normalization in different position; b) DNT uses InputNorm rather than PostNorm, whereas nGPT use many PostNorms but not InputNorm; c) nGPT normalizes the activations or the weights into spheres, whereas DNT only normalizes the activations but does not requires activations on spheres.

We term our model as Deeply Normalized Transformer (DNT for short), because it is designed by properly adding or positioning normalization operators in the conventional Transformer. 
For vision problem, we term it as V-DNT, and for language problem, we term it as L-DNT. The key difference between V-DNT and L-DNT is that V-DNT uses patch embedding, but L-DNT uses word embedding and mask for attention computation.

\begin{figure}[htbp]
	\centering
	\begin{minipage}{0.495\linewidth}
		\centering
		\includegraphics[width=0.99\linewidth,height=0.8\linewidth]{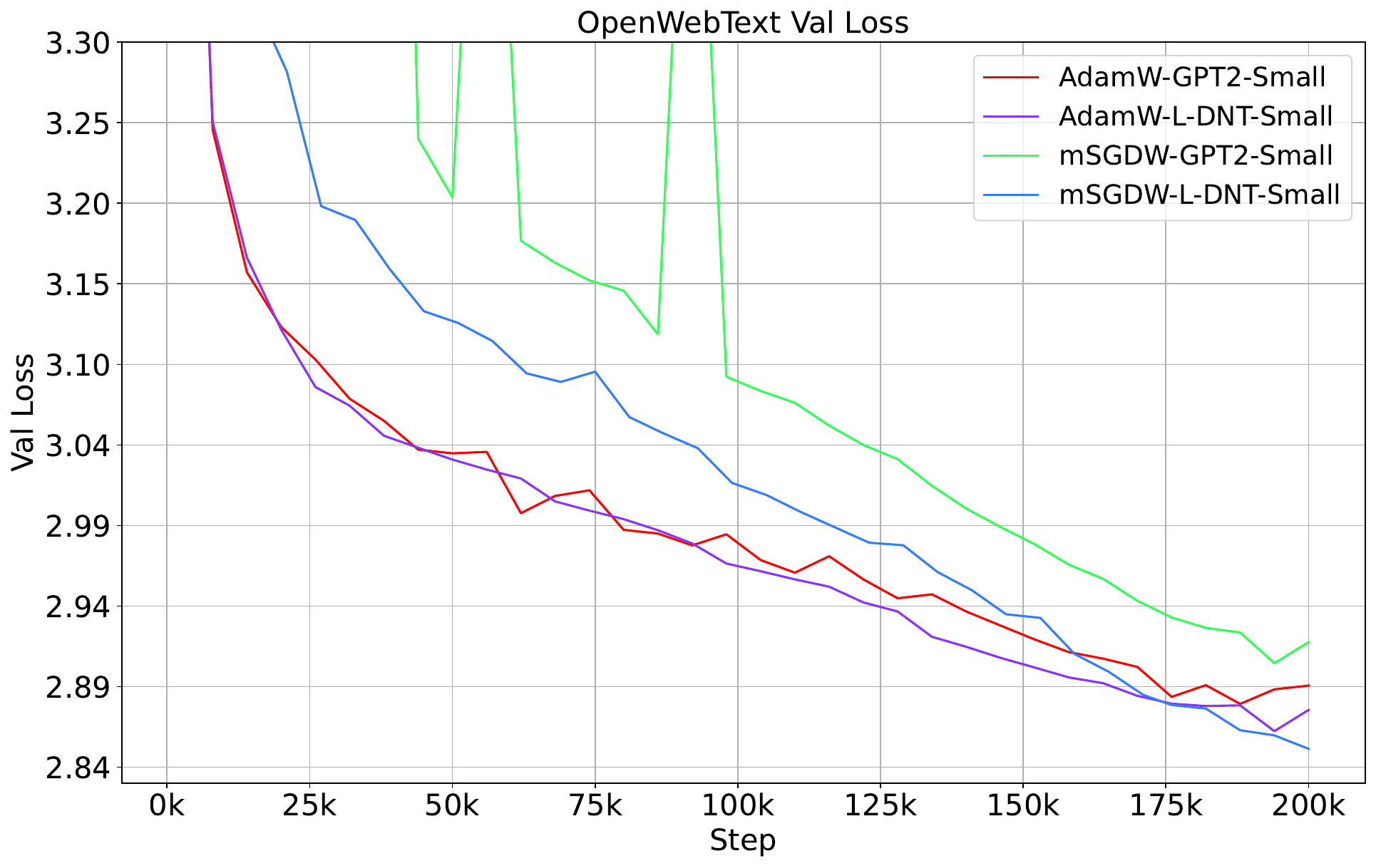}
		\label{chutian1}%文中引用该图片代号
	\end{minipage}
	%\qquad
	\begin{minipage}{0.495\linewidth}
		\centering
		\includegraphics[width=0.99\linewidth,height=0.8\linewidth]{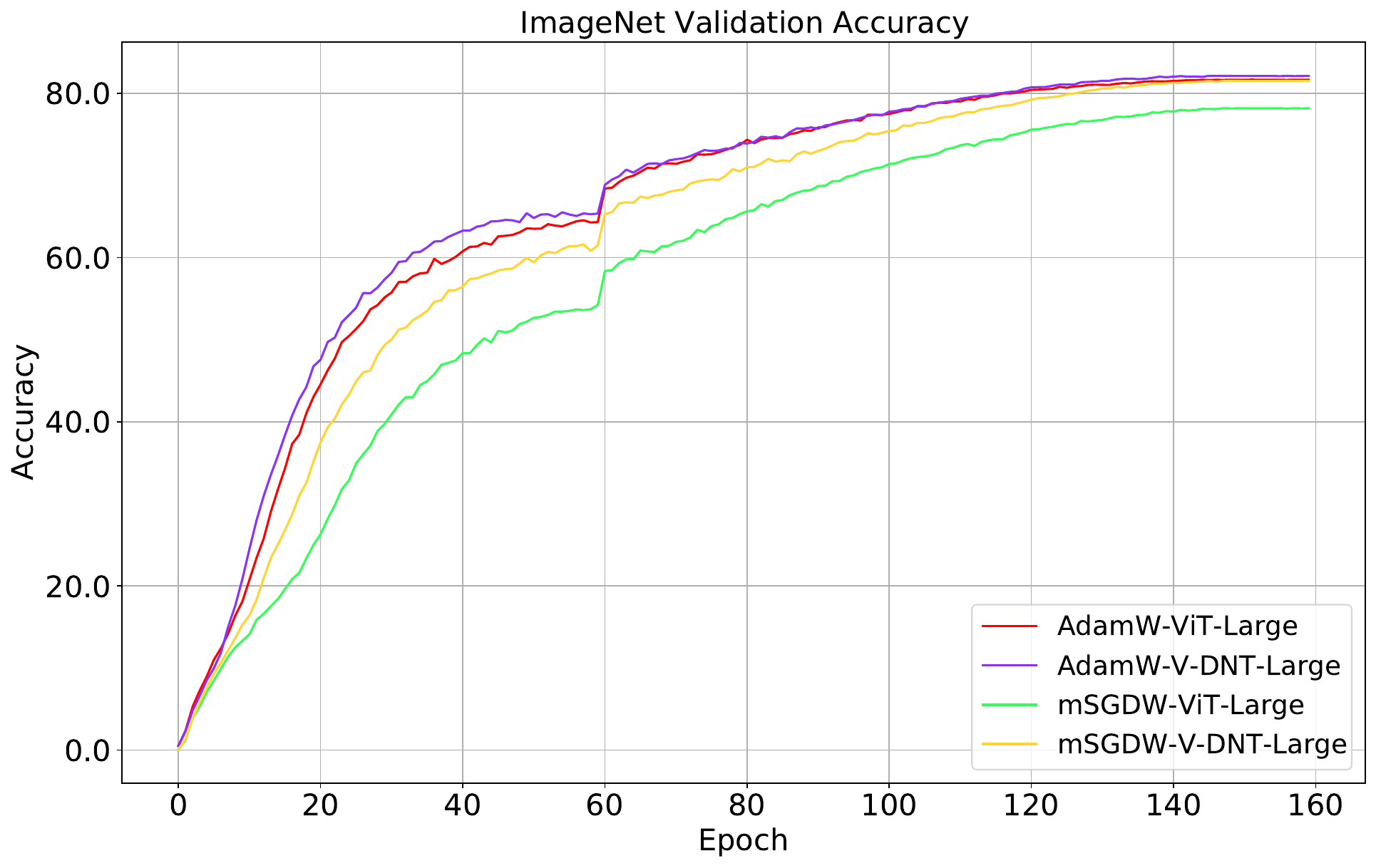}
		\label{chutian2}%文中引用该图片代号
	\end{minipage}
	\caption{Validation loss (Left ) on OpenWebText and recognition accuracy (Right) on ImageNet. We compare L-DNT-Small (124M) to GPT2-Small (124M), and V-DNT-Large (307M) to ViT-Large (307M).
    By effectively relieving the heavy-tail gradient problem, our DNT network trained with naive  mSGDW can achieve competitive performance to AdamW (Val loss \textit{ 2.849 vs. 2.863} on OpenWebText, Acc \textit{81. 5\% vs. 82. 1\%} on ImageNet). However, in classical Transformer with PreNorm, the performance of  mSGDW under-performs AdamW significantly  (Val loss \textit{2.906 vs 2.867} on OpenWebText, Acc \textit{78.2\% vs 81.7\%} on ImageNet). See Appendix H for the training parameters.}
\label{exp:lheat_and_vheat_sgd_adam}
\end{figure}

\section{Experiments}

We conducted experiments with two popular Transformer architectures: Vision Transformer (ViT) and Generative Pretrained Transformer (GPT). Our implementation leverages established repositories: timm~\citep{timm_rw2019timm} for ViT and nanoGPT~\citep{nanogpt_Karpathy2022} for GPT models.
 For ViT experiments, we utilized two model scales: ViT-Large (307M parameters) and ViT-Huge (632M), following the configurations described in \citep{vit_dosovitskiy2020image}. The data augmentation strategy aligns with \citep{adan_xie2024adan} to ensure fair comparison with previously reported results.
For GPT experiments, we employed the nanoGPT implementation focusing on GPT2-Small (124M) and GPT2-Large (774M) variants due to computational constraints. The results of
our baselines align with previous work, including Sophia~\citep{sophia_liu2023sophia} on OpenWebText and MAE~\citep{mae_he2022masked} on ImageNet.
Training was conducted using PyTorch~\citep{pytorch_paszke2019pytorch} with bfloat16 precision on A800 GPUs, employing a cosine learning rate schedule.

\subsection{Visualization of gradients of DNT and Transformer with PreNorm}

To visually compare the 
standard Transformer with our DNT, we visualized the gradients of different weights, 
including $\bW_q, \bW_k, \bW_v, \bW_o, \bW_1, \bW_2$.  We chose the early checkpoints of the model training for visualization, but we found that the same phenomenon is  
also presented in the middle and later stages of the model training.

The visualization is shown in Figure~\ref{fig:heavy_tail_visualization}, we can see that, 
DNT network can well relieve the problem of heavy tail of gradient distribution. For instance, in the Transformer, the absolute values of gradients almost distribute even across $[0, 10^{-4}]$, but the absolute values of gradients in DNT mainly concentrate around $[0, 10^{-5}]$.

\subsection{mSGDW achieves performance on par 
with AdamW.}
We also give a quantitative comparison of the standard Transformer and DNT trained with Adam and mSGD on OpenWebText and ImageNet in Table~\ref{tab:experimental_results}. 
We can see that training our 
DNT model 
via mSGDW achieves a similar result to that is 
trained with AdamW. 
We can also see that using mSGDW to 
train our DNT model greatly 
outperforms the performance of using mSGDW to train 
the standard Transformer. 
In Figure~\ref{exp:lheat_and_vheat_sgd_adam}, we display the validation loss on OpenWebText and the training accuracy on ImageNet along with the training process. Note that we didn't tune the learning rate too much. We just followed the learning rate settings of the previous works~\cite{nanogpt_Karpathy2022,sophia_liu2023sophia}.  
We believe tuning learning rate will bring in some differences. But overall, DNT network can enable mSGDW compete with AdamW.
\begin{table}[ht]
\small
	\centering
	\caption{Quantitative comparison of standard ViT/GPT2 and V-DNT/L-DNT trained with AdamW and mSGDW on OpenWebText and ImageNet. Results on ImageNet is based on 150 epochs, and results on OpenWebText is based on 200K steps.}
	\begin{tabular}{ccccccc}
		\toprule
		 &  & \multicolumn{2}{c}{\textbf{ImageNet (Acc. $\uparrow$)}} &  \multicolumn{3}{c}{\textbf{OpenWebText (Val Loss. $\downarrow$)}} \\
		Optimizer & Types of Model &  307M & 632M & 124M &  774M  & 1436M\\
		\midrule
            AdamW            & ViT/GPT2      & 81.7 & 80.8 & 2.867 & 2.492 & 2.435\\ 
            AdamW            & V-DNT/L-DNT   & \textbf{82.1} & \textbf{81.9} & \textbf{2.863} & \textbf{2.481} & \textbf{2.396}\\ \hline
            
		mSGDW             & ViT/GPT2      & 78.2 & 73.5 & 2.906 & 2.544 & 2.472\\
		
            mSGDW             & V-DNT/L-DNT   & \textbf{81.5} & \textbf{81.2} & \textbf{2.849} & \textbf{2.503} & \textbf{2.408}\\
		\bottomrule
	\end{tabular}
	\label{tab:experimental_results}
\end{table}
\subsection{Comparision of GPU memory used by mSGDW and AdamW}
We compare the memory usage by mSGDW and AdamW. The results are shown in Table~\ref{tab:memory_usage}.
\begin{table}[ht]
	\centering
	\caption{Comparision of GPU memory used by mSGDW and AdamW trained on 1.4B DNT model. DNT+AdamW means the network usage and the optimizer usage of GPU memory. $\dag$ denotes Theoretical calculated values, and $\ddag$ denotes practically  observed values.}
	\begin{tabular}{ccccc}
          \hline
                         &       AdamW   & mSGDW       & DNT+AdamW & DNT+mSGDW \\ 
		Memory       &      $11.5^{\dag}$ GB & $5.7^{\dag}$ GB & $\approx 67^{\ddag}$ GB & $\approx 61^{\ddag}$ GB \\
		
		\bottomrule
	\end{tabular}
	\label{tab:memory_usage}
\end{table}
\subsection{Ablation study}
We conduct ablation study of five different normalization methods. Figure~\ref{fig:five_settings} in the Appendix~\ref{appendix:five_network_setting} illustrates these five different network settings.
Let us brief introduce these five settings below:
\begin{itemize}[leftmargin=*]
\item Setting 1: Standard transformer with prenorm, which we abbreviate as S1; 
\item Setting 2: S1 + QKNorm; 
\item Setting 3: S2 + InputNorm; 
\item Setting 4: 2 PreNorms + MidNorm + QKNorm + InputNorm; 
\item Setting 5: only 1 PreNorm before self-attention + MidNorm + QKNorm + InputNorm. 
\end{itemize}
We use momentum mSGDW for all training in this subsection. All models were trained with the same hyper-parameters. The results are shown in Figure~\ref{exp:ablation_study}.

\begin{figure}[htbp]
	\centering
	\begin{minipage}{0.495\linewidth}
		\centering
		\includegraphics[width=0.99\linewidth,height=0.8\linewidth]{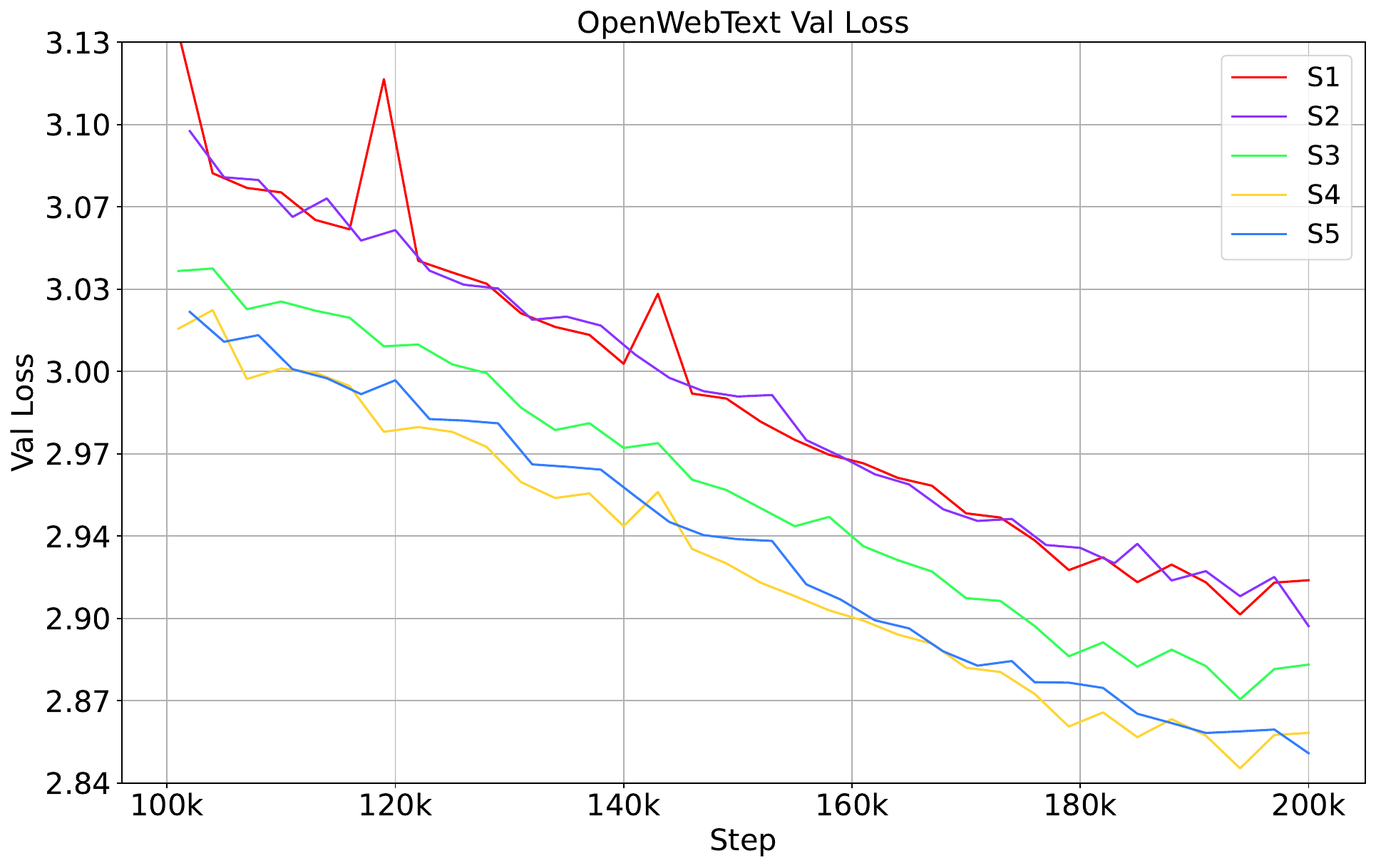}
		\label{chutian1}%文中引用该图片代号
	\end{minipage}
        \begin{minipage}{0.495\linewidth}
		\centering
		\includegraphics[width=0.99\linewidth,height=0.8\linewidth]{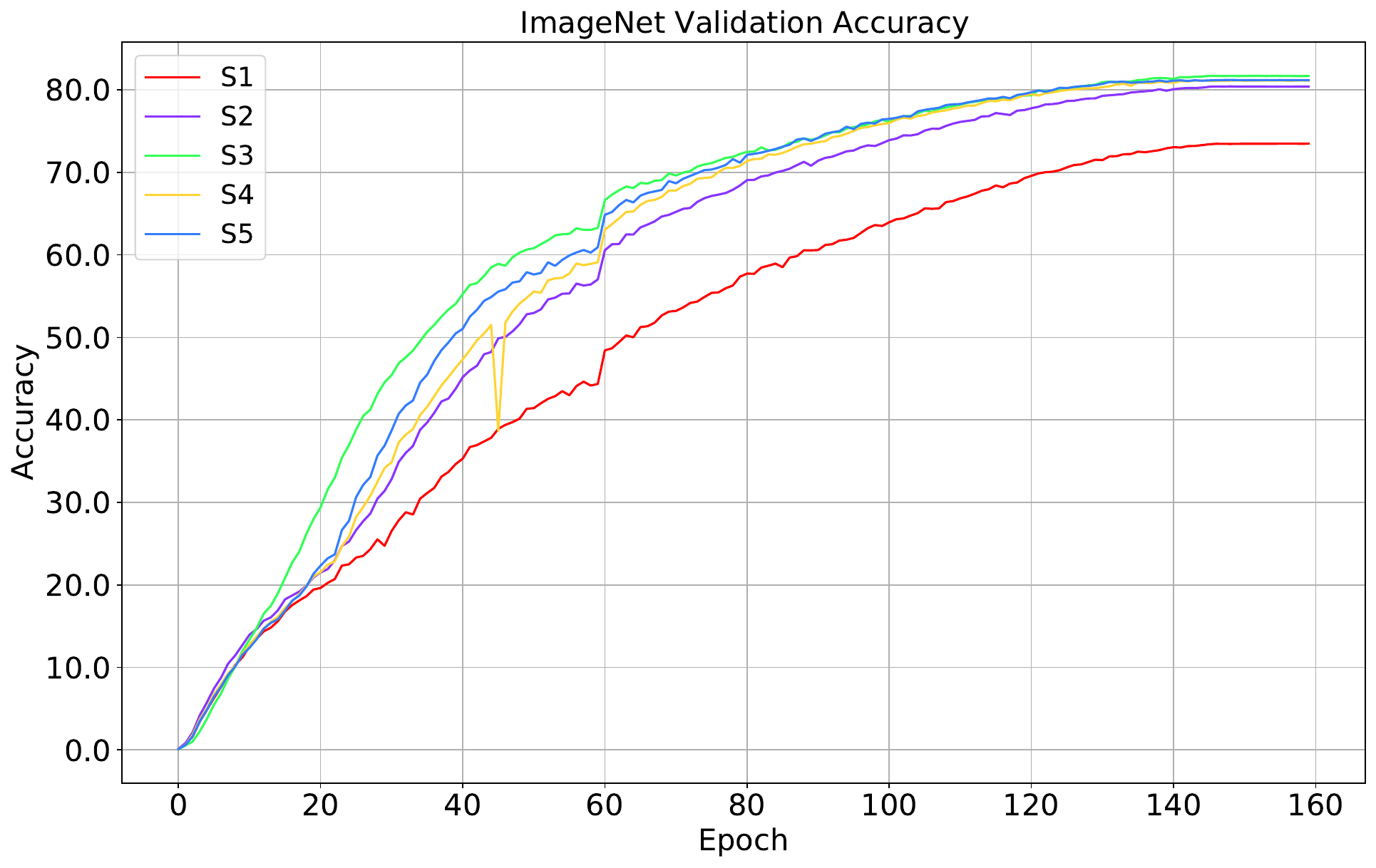}
		\label{chutian1}%文中引用该图片代号
	\end{minipage}
	\caption{Ablation study of different settings using  mSGD optimizer on ImageNet and OpenWebText. Left side shows accuracy curve of Huge vision model (632M) on ImageNet. Right side shows  the validation loss of language model (124M) on OpenWebText.}
\label{exp:ablation_study}
\end{figure}

We have the following observations,
\begin{itemize}[leftmargin=*]
	\item On the OpenWebText dataset, the original PreNorm setting (S1) shows the worst performance. The performance of S2 is similar to that of S1. S3 with input obtained a better performance. Finally, S4 and S5 obtained the best performance. Meanwhile, the performance of S4 and S5 is similar. 
	\item  On the ImageNet dataset, the original PreNorm setting (S1) is significantly worse than the other four Settings. S3 achieves the best setting, and S4 and S5 also obtain excellent performance.
\end{itemize}

We would like to point out that although in the main body, we recommend the Setttings 4 and 5, but we also observe that S3 sometimes achieves a good performance.

\section{Conclusion}
We have introduced a novel architecture, Deeply Normalized Transformer (DNT), which 
enables efficient 
training with vanilla momentum SGDW (mSGDW), achieving performance on par with AdamW-optimized Transformers. 
Unlike traditional approaches that rely on sophisticated optimizers to address the challenges of heavy-tailed gradient distributions, our DNT properly integrated normalization techniques into the architecture of Transformer to regulate the Jacobian matrices of each block, effectively balance the contributions of weights, activations, and their interactions, and thus make the gradient distribution concentrated. 
Our findings demonstrated that a properly designed architecture can make a simple optimizers like mSGDW just as effective as sophisticated ones 
and opened new opportunities for creating more efficient, scalable, and accessible Transformer models.

%\clearpage

% \bibliography{main}
% \bibliographystyle{plain}

\bibliography{main.bib}
\bibliographystyle{iclr2025_conference}

\newpage

\appendix

\section{Proof of Proposition 2, 3 and 5}
Propostion 1 and Proposition 4 are very easy to prove, we have given a brief proof in the main body. Therefore, in the appendix, we only provide the proof of Proposition 2, 3 and 5.

\subsection{Proof of Proposition 2 on PreNorm}
\begin{proof}

A  single-head self-attention can be defined as
\begin{equation*}
    \bY = \bW_v \bX \bA,
\end{equation*}

\noindent where $\bP = \bX^{\top} {\bW_q}^{\top} {\bW_k} {\bX},  \quad \bA = \operatorname{softmax}(\frac{\bP}{\sqrt{d_q}})$.
$\bA$ is called as the attention matrix and $\frac{\bP}{\sqrt{d_q}}$ is called as the logit,  $\bA \in \mathcal{R}^{n\times n}, \bX \in \mathcal{R}^{d\times n}, \bW_v \in \mathcal{R}^{d_v\times d}$ . Here, our goal is to calculate $\frac{\partial \text{vec}(\bY)}{\partial \text{vec}(\bX)} $.

By vectorization of $ \bY = \bW_v \bX \bA$, we have 
\begin{equation*}
     \partial \text{vec}(\bY) =  (\bA^\top \otimes \bW_v) \partial \text{vec}(\bX)  + (\bI_n \otimes \bW_v\bX) \partial \text{vec}(\bA).
\end{equation*}

Bringing in all the terms, we get the following formula

{\footnotesize
\begin{equation*}
  \frac{\partial \text{vec}(\bY)}{\partial \text{vec}(\bX)} = (\bA^\top \otimes \bW_v) + (\bI_n \otimes \bW_v\bX)  \frac{\bJ}{\sqrt{d_q}}  \left( 
  (\bX^{\top}{\bW_k}^{\top}{\bW_q} \otimes \bI_n)\bC_{dn} + (\bI_n \otimes \bX^{\top}{\bW_q}^{\top}{\bW_k}) 
 \right),
\end{equation*}
}
\noindent where $\bC_{dn}$ is the communication matrix.

where $\frac{\partial \text{vec}(\bY)}{\partial \text{vec}(\bA)} =  \bI_n \otimes \bW_v\bX$, $\frac{\partial \text{vec}(\bA)}{\partial \text{vec}(\bP)} = \frac{\bJ}{\sqrt{d_q}}$, and we have
$$
\bJ  = \text{blockdiag}(\text{diag}(\bA_{:,1}) - \bA_{:,1} \bA_{:,1}^\top, \dots, \text{diag}(\bA_{:,n}) - \bA_{:,n} \bA_{:,n}^\top).
$$

$\bJ$ is a function of $\bA$, and $\bA$ is a function of $\bX$ associated with softmax function. Obviously, $\frac{\partial \text{vec}(\bY)}{\partial \text{vec}(\bX)}$ is a high-order function of $\bX$ and $\bJ$ makes the analysis more complex. 

Here, we give an analysis of the Jacobian matrix of the linear attention module, where $\bA = \frac{\bP}{\sqrt{d_q}}$ and $\bP = \bX^{\top} {\bW_q}^{\top} {\bW_k} {\bX}$. For the linear attention, we have the Jacobian matrix as,

{\footnotesize
\begin{equation}
  \frac{\partial \text{vec}(\bY)}{\partial \text{vec}(\bX)} = (\bA^\top \otimes \bW_v) +   \frac{(\bI_n \otimes \bW_v\bX)}{\sqrt{d_q}}  \left( 
  (\bX^{\top}{\bW_k}^{\top}{\bW_q} \otimes \bI_n)\bC + (\bI_n \otimes \bX^{\top}{\bW_q}^{\top}{\bW_k}) \label{eq:dp_dx}
 \right).
\end{equation}
}

Obviously, if the norm of each feature vector for each token is large, the magnitude of each element in $\frac{\partial \text{vec}(\bY)}{\partial \text{vec}(\bX)}$ will have large probability to be large, and the singular value of $\frac{\partial \text{vec}(\bY)}{\partial \text{vec}(\bX)}$ may be magnified second-orderly by the norm of each column in $\bX$.

if $\bX' = \bg \odot \bX$ with each column in $\bx'_i$ having a magnitude $\bg_i$, with the same $\bW_q, \bW_k, \bW_v$, we have $\bY = \bY'$ given $\bY = \operatorname{Self-Attention}(\bX)$ and $\bY'= \operatorname{Self-Attention}(\bX')$ because we will obtain the same input as the self-attention after the PreNorm.

After PreNorm, we have $\operatorname{PreNorm}(\bX) = \operatorname{PreNorm}(\bX')$, according to Equation~\ref{eq:jacobian_of_dY_dX}, we have $\frac{\partial \text{vec}(\bY)}{\partial \text{vec}(\bX)} = 	\frac{\partial \text{vec}(\bY')}{\partial \text{vec}(\bX')}$.

Until here, we have prove the Proposition~\ref{proposition_inputnorm}.

%\begin{figure}
%    \centering
%    \includegraphics[width=0.7\linewidth]{image.png}
%    \caption{Enter Caption}
%    \label{fig:enter-label}
%\end{figure}

Furthermore, we would like to conduct a deeper analysis of the gradient of the loss with respect to the weights. In a backpropagation, since we have obtained $\frac{\partial \mathcal{L}}{\partial \text{vec}{(\bY)}}$, we would like to further analyze $\frac{\partial L}{\partial \text{vec}{(\bW_q)}}, \frac{\partial L}{\partial \text{vec}{(\bW_k)}}, \frac{\partial L}{\partial\text{vec}{(\bW_v)}}$.

For the weight matrix $\bW_q$,  we have
\begin{equation}
\begin{aligned}
	\frac{\partial \mathcal{L}}{\partial \text{vec}{(\bW_q)}} &=  \frac{\partial \mathcal{L}}{\partial \text{vec}{(\bY)}} \frac{\partial \text{vec}{(\bY)}}{\partial \text{vec}{(\bA)}}  \frac{\partial \text{vec}{(\bA)}}{\partial \text{vec}{(\bP)}}  \frac{\partial \text{vec}{(\bP)}}{\partial \text{vec}{(\bW_q)}}, \\
	&=  \frac{\partial \mathcal{L}}{\partial \text{vec}{(\bY)}}  \left(  \bI_n \otimes \bW_v\bX \right) \frac{\bJ}{\sqrt{d_q}} \left( (\bW_k\bX)^{\top} \otimes \bX^{\top} \right) \bC.
\end{aligned}
\label{eq:gradient_wq}
\end{equation}

For the weight matrix $\bW_k$,  we have
\begin{equation}
\begin{aligned}
	\frac{\partial \mathcal{L}}{\partial \text{vec}{(\bW_k)}} &=  \frac{\partial \mathcal{L}}{\partial \text{vec}{(\bY)}} \frac{\partial \text{vec}{(\bY)}}{\partial \text{vec}{(\bA)}}  \frac{\partial \text{vec}{(\bA)}}{\partial \text{vec}{(\bP)}}  \frac{\partial \text{vec}{(\bP)}}{\partial \text{vec}{(\bW_k)}}, \\
	&=  \frac{\partial \mathcal{L}}{\partial \text{vec}{(\bY)}}  \left(  \bI_n \otimes \bW_v\bX \right) \frac{\bJ}{\sqrt{d_q}} \left(\bX^{\top} \otimes (\bW_q \bX)^{\top} \right).
\end{aligned}
\label{eq:gradient_wk}
\end{equation}

For the weight matrix $\bW_v$, we know that $\text{vec}(\bY) =\left( (\bX \bA)^{\top} \otimes \bI \right) \text{vec}(\bW_v)$, thus we have
\begin{equation}
\begin{aligned}
	\frac{\partial \mathcal{L}}{\partial \text{vec}{(\bW_v)}} &=  \frac{\partial \mathcal{L}}{\partial \text{vec}{(\bY)}} \frac{\partial \text{vec}{(\bY)}}{\partial \text{vec}{(\bW_v)}}, \\
	&=  \frac{\partial \mathcal{L}}{\partial \text{vec}{(\bY)}} \left( (\bA^{\top} \bX^{\top}) \otimes \bI \right).
\end{aligned}
\label{eq:gradient_wv}
\end{equation}

We can see that in Equations~\ref{eq:gradient_wq}~\ref{eq:gradient_wk}~\ref{eq:gradient_wv}, the gradients of loss with respect to $\bW_q, \bW_k, \bW_v$ are all related to $\bX$. After PreNorm, $\bX$ is in a relatively stable range, thus, we can promise the range of value $\bX$ will not greatly affect the gradients of  $\bW_q, \bW_k, \bW_v$.

In conclusion, normalization of $\bX$ can help stablize the gradient of the loss function with respect to $\bW_q, \bW_k, \bW_v$, and meanwhile help make $\frac{\partial \text{vec}(\bY)}{\partial \text{vec}(\bX)}$ more stable and flat.
\end{proof}

\ 

\ 

\subsection{Proof of Proposition 3 on MidNorm}

\begin{proof}
Starting with the definition $\bW_1 =\frac{\bW}{\|\by\|}$ where $\by = \bW\bx$ and $\bW \in \mathcal{R}^{m \times n}$, let's derive the relationship between singular values:

For a random matrix $\bW$ with i.i.d. entries (mean 0, variance $\sigma^2_W$) and a random vector $\bx$ with i.i.d. entries (mean 0, variance $\sigma^2_x$):

$$\mathbb{E}[\|\by\|^2] = \mathbb{E}[\|\bW\bx\|^2] = \mathbb{E}[\bx^T \bW^T \bW \bx]$$

Using the trace property:
$$\mathbb{E}[\bx^T \bW^T \bW \bx] = \mathbb{E}[\text{tr}(\bx^T \bW^T \bW \bx)] = \mathbb{E}[\text{tr}(\bW \bx \bx^T \bW^T)]$$

With $\bx$ and $\bW$ independent, and $\mathbb{E}[\bx\bx^T] = \sigma_x^2 \bI_n$:
$$\mathbb{E}[\text{tr}(\bW \bx \bx^T \bW^T)] = \mathbb{E}[\text{tr}(\bW \sigma_x^2 \bI_n \bW^T)] = \sigma_x^2 \mathbb{E}[\text{tr}(\bW \bW^T)]$$

For $\bW$ with i.i.d. entries, $\mathbb{E}[\text{tr}(\bW \bW^T)] = m \cdot n \cdot \sigma_W^2$

Therefore:
$$\mathbb{E}[\|\by\|^2] = \sigma_x^2 \cdot m \cdot n \cdot \sigma_W^2 = m \cdot n \cdot \sigma_W^2 \cdot \sigma_x^2$$

By concentration of measure principles, $\|\by\|^2$ concentrates around its expectation with high probability:

$$\|\by\|^2 \approx \mathbb{E}[\|\by\|^2] = m \cdot n \cdot \sigma_W^2 \cdot \sigma_x^2$$

Taking the square root:
$$\|\by\| \approx \sqrt{m \cdot n} \cdot \sigma_W \cdot \sigma_x \text{ with high probability}$$

For the SVD of $\bW = \bU\Sigma \bV^T$, where $\Sigma$ contains singular values $\sigma_i(\bW)$, the singular values of $\bW_1$ are:
$$\sigma_i(\bW_1) = \sigma_i\left(\frac{\bW}{\|\by\|}\right) = \frac{\sigma_i(\bW)}{\|\by\|}$$

Substituting our concentration result:
$$\sigma_i(\bW_1) \approx \frac{\sigma_i(\bW)}{\sqrt{m \cdot n} \cdot \sigma_W \cdot \sigma_x}$$

For large random matrices with i.i.d. entries, random matrix theory~\citep{matrix_analysis_horn2012matrix, random_matrix_theory_tao2012topics} tells us that the largest singular value follows:
$$\sigma_1(\bW) \approx (\sqrt{m} + \sqrt{n}) \sigma_W$$

Substituting this into our expression:
\begin{equation}
\sigma_1(\bW_1) \approx \frac{(\sqrt{m} + \sqrt{n}) \sigma_W }{\sqrt{m \cdot n} \cdot \sigma_W \cdot \sigma_x} = \frac{\sqrt{m} + \sqrt{n}}{\sqrt{m \cdot n} \cdot \sigma_x}.
\label{eq:midnorm_deriviation_result}
\end{equation}

This derivation result in Equation~\ref{eq:midnorm_deriviation_result} shows that in high dimensions, the largest singular value of $\bW_1$ becomes essentially deterministic, depending only on the dimensions of $\bW$ and the statistical property $\sigma_x$ (standard variance of each entry in $\bx$) of the random vector $\bx$. 
If $m = n$, then we have, $\sigma_1(\bW_1) \approx \frac{2}{\sqrt{m} \cdot \sigma_x}$.
\end{proof}

\

\

\subsection{Proof of Proposition 5 on QKNorm}

\begin{proof}
Self-attention with QKNorm~\citep{scaling22b_dehghani2023scaling} is defined as:
\begin{equation*}
    \bY = \bW_v \bX \bA, 
\end{equation*}

\noindent where $\bA' = \operatorname{softmax}(\frac{\bP'}{\sqrt{d_h}})$, $\bP' = \bQ'^{\top} \bK'$, and $\bq'_i$ and $\bk'_i$ are the i-th column and the j-th column in $\bQ'$ and $\bK'$ individually, and we define
\begin{equation*}
\centering
\begin{aligned} 
\bq'_i &= \operatorname{RMSN}({\bW_q} \bx_i ) = \boldsymbol{\gamma}_q \odot   \frac{ \sqrt{d_h} {\bW_q} \bx_i }{ {\| {\bW_q} \bx_i  \|}_2 } =  \sqrt{d_h} \operatorname{diag}( \boldsymbol{\gamma} _q) \frac{ {\bW_q} \bx_i  }{ {\| {\bW_q} \bx_i  \|}_2} , \\
\bk'_j &= \operatorname{RMSN}({\bW_k} \bx_j ) = \boldsymbol{\gamma}_k \odot   \frac{ \sqrt{d_h} {\bW_k} \bx_j }{ {\| {\bW_k} \bx_j  \|}_2 } =  \sqrt{d_h} \operatorname{diag}( \boldsymbol{\gamma} _k) \frac{ {\bW_k} \bx_j  }{ {\| {\bW_k} \bx_j  \|}_2}.
\end{aligned}
\end{equation*}

To facilitate our derivation, we will use $\bQ = \bW_q \bX$ and $\bK = \bW_k \bX$ as before, we use $\bq_i$ and $\bk_j$ to denote the i-th column and the j-th column in $\bQ$ and $\bK$ individually. Thus, we can denote $\bq'_i = \operatorname{RMSN}(\bq_i)$ and $\bk'_j = \operatorname{RMSN}(\bk_j)$.

Therefore, according to the product rule and chain rule, we can denote the Jacobian matrix of $\bY$ with respect to $\bX$ as follows: 
\begin{scriptsize}
\begin{align*}
  \frac{\partial \text{vec}(\bY)}{\partial \text{vec}(\bX)} &= (\bA'^\top \otimes \bW_v) + (\bI_n \otimes \bW_v\bX) \frac{\partial \text{vec}(\bA')}{\partial \text{vec}(\bX)} \\
  &= (\bA'^\top \otimes \bW_v) + (\bI_n \otimes \bW_v\bX) \frac{\partial \text{vec}(\bA')}{\partial \text{vec}(\bP')} \frac{\partial \text{vec}(\bP')}{\partial \text{vec}(\bX)} \\
  &= (\bA'^\top \otimes \bW_v) + (\bI_n \otimes \bW_v\bX) \frac{\partial \text{vec}(\bA')}{\partial \text{vec}(\bP')} \left( \frac{\partial \text{vec}(\bP')}{\partial \text{vec}(\bQ')} \frac{\partial \text{vec}(\bQ')}{\partial \text{vec}(\bX')} + \frac{\partial \text{vec}(\bP')}{\partial \text{vec}(\bK')} \frac{\partial \text{vec}(\bK')}{\partial \text{vec}(\bX')} \right) \\
  &= (\bA'^\top \otimes \bW_v) + (\bI_n \otimes \bW_v\bX) \frac{\partial \text{vec}(\bA')}{\partial \text{vec}(\bP')} \left( \frac{\partial \text{vec}(\bP')}{\partial \text{vec}(\bQ')} \frac{\partial \text{vec}(\bQ')}{\partial \text{vec}(\bQ)}
  \frac{\partial \text{vec}(\bQ)}{\partial \text{vec}(\bX)} + \frac{\partial \text{vec}(\bP')}{\partial \text{vec}(\bK')} \frac{\partial \text{vec}(\bK')}{\partial \text{vec}(\bK)} \frac{\partial \text{vec}(\bK)}{\partial \text{vec}(\bX)} \right) \\
\end{align*}
\end{scriptsize}

To derive out $\frac{\partial \text{vec}(\bY)}{\partial \text{vec}(\bX)}$, we need to derive out each term in the above equation.

Since we know $\bP' = \bQ'^{\top} \bK'$, then we have,
\begin{equation*}
    \frac{\partial\text{vec}(\bP')}{\partial\text{vec}(\bK')} = \bI \otimes \bQ'^{\top}
\end{equation*}

Similarly, we have
\begin{equation*}
\frac{\partial\text{vec}(\bP')}{\partial\text{vec}(\bQ')} = (\bK'^{\top} \otimes \bI) \cdot \bC_{dN}
\end{equation*}
\noindent where $\bC_{dN}$ is the communication matrix.

We have, 
\begin{align*}
	\bJ_{\bQ}^{\bQ'}  &= \frac{\partial\text{vec}(\bQ')}{\partial\text{vec}(\bQ)} = \text{blockdiag}\left(\frac{\partial\bq'_1}{\partial\bq_1}, \frac{\partial\bq'_2}{\partial\bq_2}, \ldots, \frac{\partial\bq'_N}{\partial\bq_N}\right), \\
	\bJ_{\bK}^{\bK'}  &= \frac{\partial\text{vec}(\bK')}{\partial\text{vec}(\bK)} = \text{blockdiag}\left(\frac{\partial\bk'_1}{\partial\bk_1}, \frac{\partial\bk'_2}{\partial\bk_2}, \ldots, \frac{\partial\bk'_N}{\partial\bk_N}\right)
\end{align*}
where
\begin{equation*}
	\frac{\partial\bq'_i}{\partial\bq_i} = \frac{\sqrt{d_h}}{\|\bq_i\|} \operatorname{diag}( \boldsymbol{\gamma} _q)  \left(\bI - \frac{\bq_i\bq_i^{\top}}{\|\bq_i\|^2}\right).
\end{equation*}

%= \frac{\sqrt{d_1}}{\|\bq_i\|}  \operatorname{diag}( \boldsymbol{\gamma} _q) \left(\bI - \bq'_i \bq'_i^{\top}\right)

We have,
\begin{align*}
	\frac{\partial\text{vec}(\bQ)}{\partial\text{vec}(\bX)} = \bI \otimes \bW_q, \\
	\frac{\partial\text{vec}(\bK)}{\partial\text{vec}(\bX)} =  \bI \otimes \bW_k.
\end{align*}

we also have
\begin{align*}
	\frac{\partial\bq'_i}{\partial\bx_i} &= \frac{\partial\bq'_i}{\partial\bq_i}  \frac{\partial\bq_i}{\partial\bx_i}  = \frac{\sqrt{d_h}}{\|\bq_i\|} \operatorname{diag}( \boldsymbol{\gamma} _q)  \left(\bI - \frac{\bq_i\bq_i^{\top}}{\|\bq_i\|^2}\right) \bW_q = \sqrt{d_h} \operatorname{diag}( \boldsymbol{\gamma} _q)  \left(\bI - \frac{\bq_i\bq_i^{\top}}{\|\bq_i\|^2}\right) \eqnmarkbox[magenta]{Psi2}{ \frac{\bW_q}{\|\bW_q \bx_i\|}} , \\
	\frac{\partial\bk'_j}{\partial\bx_j} &= \frac{\partial\bk'_j}{\partial\bk_j}  \frac{\partial\bk_j}{\partial\bx_j}  = \frac{\sqrt{d_h}}{\|\bk_j\|} \operatorname{diag}( \boldsymbol{\gamma} _k)  \left(\bI - \frac{\bk_j\bk_j^{\top}}{\|\bk_j\|^2}\right) \bW_k = \sqrt{d_h} \operatorname{diag}( \boldsymbol{\gamma} _k)  \left(\bI - \frac{\bk_j\bk_j^{\top}}{\|\bk_j\|^2}\right) \eqnmarkbox[magenta]{Psi2}{\frac{\bW_k}{\|\bW_k \bx_i\|}}.
\end{align*}

In Proposition 3, we have proved that in a high-dimensional setting, the singular values of $\frac{\bW}{\|\bW \bx \|}$ is independent to the magnitude of $\bW$. Thus, until now, we have proved the Proposition 5.
\end{proof}

Further, we would like to discuss the Jacobian matrix $ \frac{\partial \text{vec}(\bY)}{\partial \text{vec}(\bX)} $ after QKNorm. We have,
{\footnotesize
\begin{equation}
  \frac{\partial \text{vec}(\bY)}{\partial \text{vec}(\bX)} = (\bA'^\top \otimes \bW_v) + (\bI_n \otimes \bW_v\bX)  \frac{\bJ}{\sqrt{d_h}}  \left(   
   (\bK'^{\top} \otimes \bI) \cdot \bC_{dN}  \bJ_{\bQ}^{\bQ'}    (\bI \otimes \bW_q) +  (\bI \otimes \bQ'^{\top})  \bJ_{\bK}^{\bK'}   (\bI \otimes \bW_k)
 \right).
 \label{eq:QK_jacobian}
\end{equation}
}

It should be noted that 
\begin{itemize}[leftmargin=*]
	\item $\bK'$ and $\bQ'$ are two normalized terms that have relatively stable range of values.
	\item the Jacobian matrix of $ \bJ_{\bQ}^{\bQ'}$ is relatively independent to the magnitude of $\bQ$ and $ \bJ_{\bK}^{\bK'}$ is relatively independent to the magnitude of $\bK$ in a high-dimensional setting.
	\item QKNorm cannot fully replace the value of PreNorm because $ \frac{\partial \text{vec}(\bY)}{\partial \text{vec}(\bX)} $ in Equation~\ref{eq:QK_jacobian} is directly affected by $\bX$.
\end{itemize}

QKNorm will elliviate the influence of the magnitude of $\bW_q$ and $\bW_k$ on the $\frac{\partial \text{vec}(\bY)}{\partial \text{vec}(\bX)}$. In the traditional self-attention, $\frac{\partial \text{vec}(\bY)}{\partial \text{vec}(\bX)}$ is largely affected by $\bW_q^{\top} \bW_k$. However, after QKNorm, $\frac{\partial \text{vec}(\bY)}{\partial \text{vec}(\bX)}$ will only be affected by independent $\bW_q$ or $\bW_k$ instead of the joint term $\bW_q^{\top} \bW_k$. This is important for the training stability because the singular values of $\bW_q^{\top} \bW_k$ will increase extremely fast when both singular values of $\bW_q$ and $\bW_k$ are increasing.

\ 

\

\section{Experimental details}
\begin{table}[H]
	\centering
	\small  % Reduce font size
	\caption{Model configurations, peak learning rate and weight decay for different optimizers.}
	\label{tab:model_configs}
	\begin{tabular}{lcccccccc}
		\toprule
		\multirow{2}{*}{Acronym} & \multirow{2}{*}{Size} & \multirow{2}{*}{d\_model} & \multirow{2}{*}{n\_head} & \multirow{2}{*}{depth} & \multicolumn{2}{c}{AdamW} & \multicolumn{2}{c}{mSGDW} \\
		\cmidrule(lr){6-7} \cmidrule(lr){8-9}
		&  &  &  &  & LR & WD & LR & WD \\
		\midrule
		L-DNT-Small & 124M & 768 & 12 & 12 & 6e-4 & 0.1    & 1.0 & 1e-4\\
		L-DNT-Large & 774M & 1280 & 20 & 36 & 6e-4 & 0.1 & 1.0 & 1e-4\\
            L-DNT-XL    & 1436M & 1536 & 24 & 48 & 6e-4 & 0.1 & 1.0 & 1e-4\\
		V-DNT-Large & 307M & 1024 & 16 & 24 & 1e-3 & 0.1 & 0.5 & 2e-4\\
		V-DNT-Huge &  632M & 1280 & 16 & 32 & 1e-3 & 0.1 & 0.1 & 1e-3\\
		\bottomrule
	\end{tabular}
\label{table:model_config}
\end{table}

We conducted experiments on two popular architectures: Vision Transformer (ViT) and Generative Pretrained Transformer (GPT). Our implementation leverages established repositories: timm~\citep{timm_rw2019timm} for ViT and nanoGPT~\citep{nanogpt_Karpathy2022} for GPT models.  We utilized five model configurations: L-DNT-Small (124M parameters), L-DNT-Large (774M parameters), L-DNT-XL (1436M parameters), V-DNT-Large (307M parameters), and V-DNT-Huge (632M parameters). Model specifications including hidden dimension (d\_model), number of attention heads (n\_head), and network depth are detailed in Table~\ref{table:model_config}. Training was conducted using PyTorch~\citep{pytorch_paszke2019pytorch} with bfloat16 precision GPUs, employing a cosine learning rate schedule.

All language models were trained on OpenWebText,  using GPT-2 tokenizer. The training dataset contains 9B tokens, with a validation set of 4.4M tokens, following the train-validation split from nanoGPT.
We employed distributed data parallel training with gradient accumulation. All models were trained using bfloat16 precision. The 124M models were trained on machines with 8 GPUs, 774M models were trained with 16 A800 GPUs, while 1436M models were trained with 32 GPUs.  Our global batch sizes for 125M, 770M and 1436M models are 480, 512 and 512 individually. In Sophia~\citep{sophia_liu2023sophia}, they use 480 global batch size for all models.
For all language models, we used 2000 steps or learning rate warmup to the maximum learning rate, and then used a cosine learning rate decay. It takes around four days to train 200K steps for the 1.4B model on 32 GPUs.

All vision models were trained on ImageNet dataset. We trained each 150 epoches as~\citep{adan_xie2024adan}. We used a learning rate warmup of 60 epochs to the maximum learning rate, and then used a cosine learning rate decay.

For our experiments, we focused on comparing AdamW and mSGDW optimizers. The hyperparameters for AdamW were carefully tuned, with $\beta_1 = 0.9$ and $\beta_2 = 0.95$, following the dominant configuration in LLM pre-training literature. For weight decay, we used 0.1 for AdamW as~\citep{nanogpt_Karpathy2022, sophia_liu2023sophia}. We used the recommended learning rate by nanoGPT for AdamW in GPT. Since our DNT is robust to large learning rate, we use 6e-4 for all our L-DNT models and 1e-3 for all our V-DNT models for AdamW.
For mSGDW, we simply use a rough grid search for the learning rate, we cannot search a fine-grained learning rate and weight decay due to its requiring a lot of resources. For the momentum in mSGDW, we used a default 0.9 for all experiments. We use the implementation~\footnote{\url{https://github.com/huggingface/pytorch-image-models/blob/main/timm/optim/sgdw.py}} of mSGDW from timm~\citep{timm_rw2019timm}. This implementation is a decoupled weight decay regularization used in AdamW~\citep{adamw_IlyaLoshchilov2018FixingWD}. Note that mSGDW is not directly to add a weight decay in the original implementation of mSGD~\footnote{\url{https://pytorch.org/docs/stable/generated/torch.optim.SGD.html}} in the official PyTorch, it will have performance problem.

\begin{table}[H]
\small
	\centering
	\caption{Training configurations for ViT and V-DNT.}
		\begin{tabular}{l|cccc} 
			\hline
			training config & ViT-L/H ($224^{2}$)  & ViT-L/H ($224^{2}$)  & V-DNT-L/H   ($224^{2}$) & V-DNT-L/H   ($224^{2}$)\\ 
			\hline
			optimizer & AdamW  & mSGDW & AdamW & mSGDW\\  
			learning rate schedule & \multicolumn{4}{c}{cosine decay} \\
			peak learning rate & 1e-3 & 0.5/0.1 & 1e-3 & 0.5/0.1\\ 
			minimum learning rate & 1e-8 & 1e-8 & 1e-8 & 1e-8\\ 
			weight decay & 0.1 & 2e-4/1e-3  & 0.1 & 2e-4/1e-3 \\ 
			optimizer momentum & \(\beta_{1}, \beta_{2}=0.9,0.99\)  & \(\mu = 0.9\) & \(\beta_{1}, \beta_{2}=0.9,0.99\) & \(\mu = 0.9\)\\ 
			warmup epoches & 60 & 60 & 60 & 60 \\ 
			\hline
			weight init & \multicolumn{4}{c}{Truncated Xavier}\\ 
			batch size & \multicolumn{4}{c}{1024} \\ 
			training epochs & \multicolumn{4}{c}{150}  \\ 
			randaugment  & \multicolumn{4}{c}{\((9,0.5)\)} \\ 
			mixup  & \multicolumn{4}{c}{0.8} \\ 
			cutmix & \multicolumn{4}{c}{1.0} \\
			random erasing  & \multicolumn{4}{c}{0} \\ 
			label smoothing  & \multicolumn{4}{c}{0.1} \\ 
			stochastic depth  & \multicolumn{4}{c}{\(0.1 / 0.5\)} \\ 
			%layer scale [74] & \(1 \mathrm{e}-6\) & \(1 \mathrm{e}-6\) \\ 
			%head init scale [74] & None & None \\ 
			gradient clip & \multicolumn{4}{c}{None} \\ 
			exp. mov. avg. (EMA)  & \multicolumn{4}{c}{no}\\ 
			\hline
		\end{tabular}
	\label{tab:training_configuration}
\end{table}

\begin{table}[H]
\small
	\centering
	\caption{Training configurations for GPT and L-DNT.}
	\begin{tabular}{l|cccc} 
		\hline
		training config & GPT2-S/L/XL  & GPT2-S/L/XL & L-DNT-S/L/XL & L-DNT-S/L/XL \\ 
		\hline
		optimizer & AdamW  & mSGDW & AdamW & mSGDW\\  
		learning rate schedule & \multicolumn{4}{c}{cosine decay} \\
		peak learning rate & 6e-4/2.5e-4/1.5e-4 & 1.0 & 6e-4 & 1.0\\ 
		minimum learning rate & 6e-5 & 6e-5 & 6e-5 & 6e-5\\ 
		weight decay & 0.1 & 1e-4  & 0.1 & 1e-4 \\ 
		optimizer momentum & \(\beta_{1}, \beta_{2}=0.9,0.95\)  & \(\mu = 0.9\) & \(\beta_{1}, \beta_{2}=0.9,0.95\) & \(\mu = 0.9\)\\ 
		warmup steps & 2000 & 0 & 2000 & 0 \\ 
		\hline
		weight init &  \multicolumn{4}{c}{Xavier}  \\ 
		tokens seen each update &  \multicolumn{4}{c}{480K/512K/512K} \\
		max iters &  \multicolumn{4}{c}{200K} \\
		batch size &  \multicolumn{4}{c}{480/512/512} \\ 
		sequence length &  \multicolumn{4}{c}{1024} \\ 
		dropout &  \multicolumn{4}{c}{0.0} \\
		bfloat16 &  \multicolumn{4}{c}{True}  \\
		gradient clipping &  \multicolumn{4}{c}{1.0} \\
		\hline
	\end{tabular}     
	\label{tab:training_configuration}
\end{table}

\ 

\

\newpage

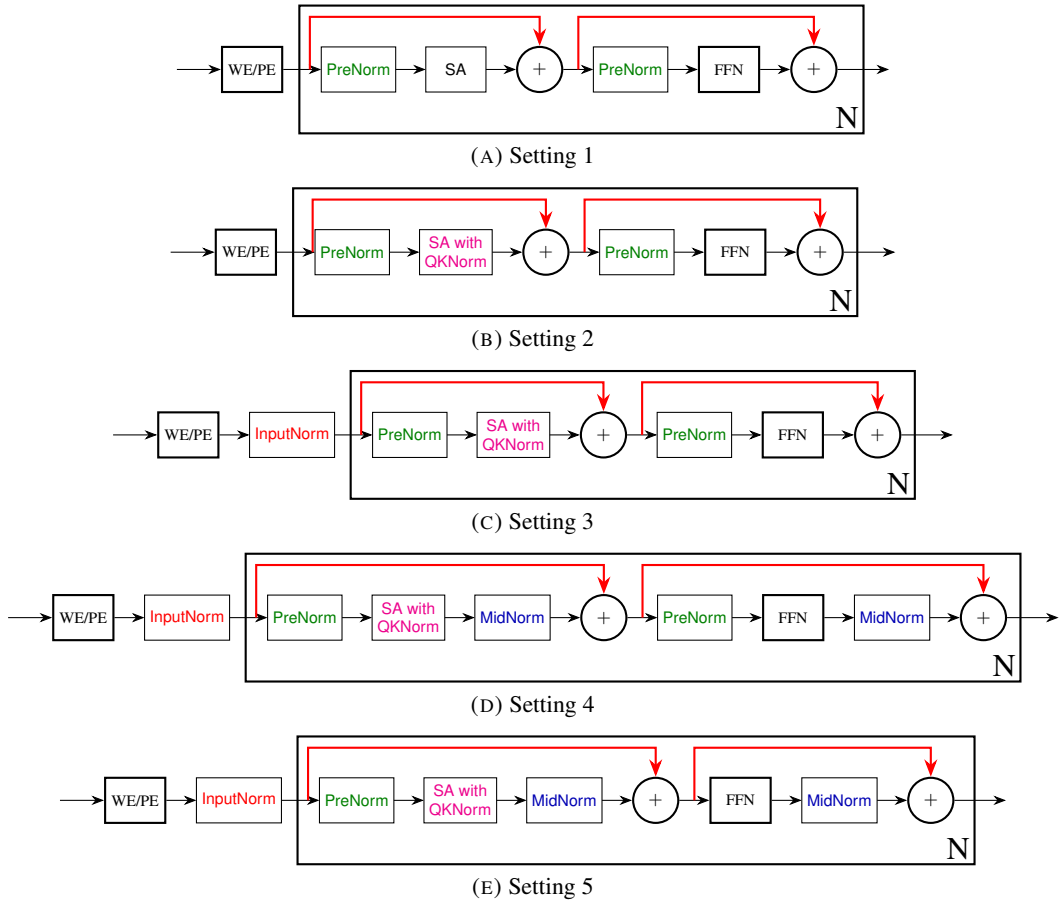
\begin{figure}[H]
	\centering
	\tiny

	\caption{Five different settings of normalizations evaluated in the ablation study.}

	\label{fig:five_settings}

\begin{tikzpicture}[
	box/.style={draw, minimum width=0.8cm, minimum height=0.6cm, thick},
	inputnorm/.style={draw, minimum width=0.8cm, minimum height=0.6cm, 
		text=red, font=\sffamily, align=center,   
		outer sep=0pt},
	prenorm/.style={draw, 
		minimum width=0.8cm, minimum height=0.6cm, 
		text=darkgreen, font=\sffamily, align=center,   
		outer sep=0.0pt},
	prenorm2/.style={draw, 
		minimum width=0.8cm, minimum height=0.6cm, 
		text=darkgreen, font=\sffamily, align=center,   
		outer sep=0.0pt},
	prenorm3/.style={draw,  rounded corners=3mm,
		minimum width=0.8cm, minimum height=0.6cm, 
		text=darkgreen, font=\sffamily, align=center,   
		outer sep=0.0pt},    
	% prenorm3/.style={draw opacity=0.2, fill=darkgreen, dashed, minimum width=0.8cm, minimum height=0.8cm, text=white, font=\sffamily, think inner sep=1pt}, 
	midnorm/.style={draw, 
		minimum width=0.8cm, minimum height=0.6cm, 
		text=blue, font=\sffamily, align=center,   
		outer sep=0pt},
	qknorm/.style={draw, 
		minimum width=0.8cm, minimum height=0.6cm, 
		text=black, font=\sffamily, align=center,   
		outer sep=0pt},
	bigbox/.style={draw, minimum width=7.5cm, minimum height=1.7cm, thick},
	circ/.style={draw, circle, minimum size=0.6cm, thick},
	>=Stealth
	]
	%rounded corners, rounded corners, 
	% Input modules

	\node[box] (we) at (0,0) {WE/PE};
	%\node[inputnorm, right=0.4cm of we] (n1) {InputNorm};
	
	% Large enclosing box positioned relative to first elements
	\node[bigbox, right=0.2cm of we, anchor=west] (enclosure) {};
	\node[anchor=south east] at (enclosure.south east) {\large N};
	
	% First group inside large box
	\node[prenorm] (n2) at ($(enclosure.west) + (0.8,-0.0)$) {PreNorm};
	\node[qknorm, right=0.4cm of n2] (sa) {SA};
	%\node[midnorm, right=0.4cm of sa] (n3) {MidNorm};
	\node[circ, right=0.4cm of sa] (plus1) {$+$};
	
	% Second group inside large box
	\node[prenorm, right=0.4cm of plus1] (n4) {PreNorm};
	\node[box, right=0.4cm of n4] (ffn) {FFN};
	%\node[midnorm, right=0.4cm of ffn] (n5) {MidNorm};
	\node[circ, right=0.4cm of ffn] (plus2) {$+$};
	
	% Connect input to first group

	\draw[->] ++(-1.0,0)  -- (we);
	% Connect the components with arrows
	%\draw[->] (we) -- (n1);
	\draw[->] (we) --  (n2);

	\draw[->] (n2) -- (sa);
	%\draw[->] (sa) -- (n3);
	\draw[->] (sa) -- (plus1);
	\draw[->] (plus1) -- (n4);
	\draw[->] (n4) -- (ffn);
	%\draw[->] (ffn) -- (n5);
	\draw[->] (ffn) -- (plus2);
	\draw[->] (plus2) -- ++(1.0,0);
	
	% Add the feedback loops
	% First feedback loop for first group
	\path (we) -- (n2) coordinate[pos=0.7] (midpoint);
	\draw[thick, red, <-] 
	(plus1) -- ++(0,0.7) -| (midpoint);
	%\draw[<-] (plus1) -- ++(0,0.8) -| (n2);
	
	% Second feedback loop for second group
	%\draw[->] (plus2) -- ++(0,0.8) -| (n4);
	
	% Large feedback loop around the entire complex
	\path (plus1) -- (n4) coordinate[pos=0.5] (midpoint);
	\draw[thick, red, <-] 
	(plus2) -- ++(0,0.7) -| (midpoint);
	%\draw[->] (enclosure.east) ++(0.5,0) -- ++(0,1.5) -| ($(enclosure.west) + (-0.5,0)$) -- (enclosure.west);
\end{tikzpicture}
\subcaption{Setting 1}

\begin{tikzpicture}[
	box/.style={draw, minimum width=0.8cm, minimum height=0.6cm, thick},
	inputnorm/.style={draw, minimum width=0.8cm, minimum height=0.6cm, 
		text=red, font=\sffamily, align=center,   
		outer sep=0pt},
	prenorm/.style={draw, 
		minimum width=0.8cm, minimum height=0.6cm, 
		text=darkgreen, font=\sffamily, align=center,   
		outer sep=0.0pt},
	prenorm2/.style={draw, 
		minimum width=0.8cm, minimum height=0.6cm, 
		text=darkgreen, font=\sffamily, align=center,   
		outer sep=0.0pt},
	prenorm3/.style={draw,  rounded corners=3mm,
		minimum width=0.8cm, minimum height=0.6cm, 
		text=darkgreen, font=\sffamily, align=center,   
		outer sep=0.0pt},    
	% prenorm3/.style={draw opacity=0.2, fill=darkgreen, dashed, minimum width=0.8cm, minimum height=0.8cm, text=white, font=\sffamily, think inner sep=1pt}, 
	midnorm/.style={draw, 
		minimum width=0.8cm, minimum height=0.6cm, 
		text=blue, font=\sffamily, align=center,   
		outer sep=0pt},
	qknorm/.style={draw, 
		minimum width=0.8cm, minimum height=0.6cm, 
		text=magenta, font=\sffamily, align=center,   
		outer sep=0pt},
	bigbox/.style={draw, minimum width=7.5cm, minimum height=1.7cm, thick},
	circ/.style={draw, circle, minimum size=0.6cm, thick},
	>=Stealth
	]
	%rounded corners, rounded corners, 
	% Input modules

	\node[box] (we) at (0,0) {WE/PE};
	%\node[inputnorm, right=0.4cm of we] (n1) {InputNorm};
	
	% Large enclosing box positioned relative to first elements
	\node[bigbox, right=0.2cm of we, anchor=west] (enclosure) {};
	\node[anchor=south east] at (enclosure.south east) {\large N};
	
	% First group inside large box
	\node[prenorm] (n2) at ($(enclosure.west) + (0.8,-0.0)$) {PreNorm};
	\node[qknorm, right=0.4cm of n2] (sa) {SA with\\QKNorm};
	%\node[midnorm, right=0.4cm of sa] (n3) {MidNorm};
	\node[circ, right=0.4cm of sa] (plus1) {$+$};
	
	% Second group inside large box
	\node[prenorm, right=0.4cm of plus1] (n4) {PreNorm};
	\node[box, right=0.4cm of n4] (ffn) {FFN};
	%\node[midnorm, right=0.4cm of ffn] (n5) {MidNorm};
	\node[circ, right=0.4cm of ffn] (plus2) {$+$};
	
	% Connect input to first group

	\draw[->] ++(-1.0,0)  -- (we);
	% Connect the components with arrows
	%\draw[->] (we) -- (n1);
	\draw[->] (we) --  (n2);

	\draw[->] (n2) -- (sa);
	%\draw[->] (sa) -- (n3);
	\draw[->] (sa) -- (plus1);
	\draw[->] (plus1) -- (n4);
	\draw[->] (n4) -- (ffn);
	%\draw[->] (ffn) -- (n5);
	\draw[->] (ffn) -- (plus2);
	\draw[->] (plus2) -- ++(1.0,0);
	
	% Add the feedback loops
	% First feedback loop for first group
	\path (n1) -- (n2) coordinate[pos=0.7] (midpoint);
	\draw[thick, red, <-] 
	(plus1) -- ++(0,0.7) -| (midpoint);
	%\draw[<-] (plus1) -- ++(0,0.8) -| (n2);
	
	% Second feedback loop for second group
	%\draw[->] (plus2) -- ++(0,0.8) -| (n4);
	
	% Large feedback loop around the entire complex
	\path (plus1) -- (n4) coordinate[pos=0.5] (midpoint);
	\draw[thick, red, <-] 
	(plus2) -- ++(0,0.7) -| (midpoint);
	%\draw[->] (enclosure.east) ++(0.5,0) -- ++(0,1.5) -| ($(enclosure.west) + (-0.5,0)$) -- (enclosure.west);
\end{tikzpicture}
\subcaption{Setting 2}

\begin{tikzpicture}[
	box/.style={draw, minimum width=0.8cm, minimum height=0.6cm, thick},
	inputnorm/.style={draw, minimum width=0.8cm, minimum height=0.6cm, 
		text=red, font=\sffamily, align=center,   
		outer sep=0pt},
	prenorm/.style={draw, 
		minimum width=0.8cm, minimum height=0.6cm, 
		text=darkgreen, font=\sffamily, align=center,   
		outer sep=0.0pt},
	prenorm2/.style={draw, 
		minimum width=0.8cm, minimum height=0.6cm, 
		text=darkgreen, font=\sffamily, align=center,   
		outer sep=0.0pt},
	prenorm3/.style={draw,  rounded corners=3mm,
		minimum width=0.8cm, minimum height=0.6cm, 
		text=darkgreen, font=\sffamily, align=center,   
		outer sep=0.0pt},    
	% prenorm3/.style={draw opacity=0.2, fill=darkgreen, dashed, minimum width=0.8cm, minimum height=0.8cm, text=white, font=\sffamily, think inner sep=1pt}, 
	midnorm/.style={draw, 
		minimum width=0.8cm, minimum height=0.6cm, 
		text=blue, font=\sffamily, align=center,   
		outer sep=0pt},
	qknorm/.style={draw, 
		minimum width=0.8cm, minimum height=0.6cm, 
		text=magenta, font=\sffamily, align=center,   
		outer sep=0pt},
	bigbox/.style={draw, minimum width=7.5cm, minimum height=1.7cm, thick},
	circ/.style={draw, circle, minimum size=0.6cm, thick},
	>=Stealth
	]
	%rounded corners, rounded corners, 
	% Input modules

	\node[box] (we) at (0,0) {WE/PE};
	\node[inputnorm, right=0.4cm of we] (n1) {InputNorm};
	
	% Large enclosing box positioned relative to first elements
	\node[bigbox, right=0.2cm of n1, anchor=west] (enclosure) {};
	\node[anchor=south east] at (enclosure.south east) {\large N};
	
	% First group inside large box
	\node[prenorm] (n2) at ($(enclosure.west) + (0.8,-0.0)$) {PreNorm};
	\node[qknorm, right=0.4cm of n2] (sa) {SA with\\QKNorm};
	%\node[midnorm, right=0.4cm of sa] (n3) {MidNorm};
	\node[circ, right=0.4cm of sa] (plus1) {$+$};
	
	% Second group inside large box
	\node[prenorm, right=0.4cm of plus1] (n4) {PreNorm};
	\node[box, right=0.4cm of n4] (ffn) {FFN};
	%\node[midnorm, right=0.4cm of ffn] (n5) {MidNorm};
	\node[circ, right=0.4cm of ffn] (plus2) {$+$};
	
	% Connect input to first group
	\draw[->] (n1) --  (n2);

	\draw[->] ++(-1.0,0)  -- (we);
	% Connect the components with arrows
	\draw[->] (we) -- (n1);

	\draw[->] (n2) -- (sa);
	%\draw[->] (sa) -- (n3);
	\draw[->] (sa) -- (plus1);
	\draw[->] (plus1) -- (n4);
	\draw[->] (n4) -- (ffn);
	%\draw[->] (ffn) -- (n5);
	\draw[->] (ffn) -- (plus2);
	\draw[->] (plus2) -- ++(1.0,0);
	
	% Add the feedback loops
	% First feedback loop for first group
	\path (n1) -- (n2) coordinate[pos=0.7] (midpoint);
	\draw[thick, red, <-] 
	(plus1) -- ++(0,0.7) -| (midpoint);
	%\draw[<-] (plus1) -- ++(0,0.8) -| (n2);
	
	% Second feedback loop for second group
	%\draw[->] (plus2) -- ++(0,0.8) -| (n4);
	
	% Large feedback loop around the entire complex
	\path (plus1) -- (n4) coordinate[pos=0.5] (midpoint);
	\draw[thick, red, <-] 
	(plus2) -- ++(0,0.7) -| (midpoint);
	%\draw[->] (enclosure.east) ++(0.5,0) -- ++(0,1.5) -| ($(enclosure.west) + (-0.5,0)$) -- (enclosure.west);
\end{tikzpicture}
\subcaption{Setting 3}

	\begin{tikzpicture}[
	box/.style={draw, minimum width=0.8cm, minimum height=0.6cm, thick},
	inputnorm/.style={draw, minimum width=0.8cm, minimum height=0.6cm, 
		text=red, font=\sffamily, align=center,   
		outer sep=0pt},
	prenorm/.style={draw, 
		minimum width=0.8cm, minimum height=0.6cm, 
		text=darkgreen, font=\sffamily, align=center,   
		outer sep=0.0pt},
	prenorm2/.style={draw, 
		minimum width=0.8cm, minimum height=0.6cm, 
		text=darkgreen, font=\sffamily, align=center,   
		outer sep=0.0pt},
	prenorm3/.style={draw,  rounded corners=3mm,
		minimum width=0.8cm, minimum height=0.6cm, 
		text=darkgreen, font=\sffamily, align=center,   
		outer sep=0.0pt},    
	% prenorm3/.style={draw opacity=0.2, fill=darkgreen, dashed, minimum width=0.8cm, minimum height=0.8cm, text=white, font=\sffamily, think inner sep=1pt}, 
	midnorm/.style={draw, 
		minimum width=0.8cm, minimum height=0.6cm, 
		text=blue, font=\sffamily, align=center,   
		outer sep=0pt},
	qknorm/.style={draw, 
		minimum width=0.8cm, minimum height=0.6cm, 
		text=magenta, font=\sffamily, align=center,   
		outer sep=0pt},
	bigbox/.style={draw, minimum width=10.3cm, minimum height=1.7cm, thick},
	circ/.style={draw, circle, minimum size=0.6cm, thick},
	>=Stealth
	]
	%rounded corners, rounded corners, 
	% Input modules

	\node[box] (we) at (0,0) {WE/PE};
	\node[inputnorm, right=0.4cm of we] (n1) {InputNorm};
	
	% Large enclosing box positioned relative to first elements
	\node[bigbox, right=0.2cm of n1, anchor=west] (enclosure) {};
	\node[anchor=south east] at (enclosure.south east) {\large N};
	
	% First group inside large box
	\node[prenorm] (n2) at ($(enclosure.west) + (0.8,-0.0)$) {PreNorm};
	\node[qknorm, right=0.4cm of n2] (sa) {SA with\\QKNorm};
	\node[midnorm, right=0.4cm of sa] (n3) {MidNorm};
	\node[circ, right=0.4cm of n3] (plus1) {$+$};
	
	% Second group inside large box
	\node[prenorm, right=0.4cm of plus1] (n4) {PreNorm};
	\node[box, right=0.4cm of n4] (ffn) {FFN};
	\node[midnorm, right=0.4cm of ffn] (n5) {MidNorm};
	\node[circ, right=0.4cm of n5] (plus2) {$+$};
	
	% Connect input to first group
	\draw[->] (n1) --  (n2);

	\draw[->] ++(-1.0,0)  -- (we);
	% Connect the components with arrows
	\draw[->] (we) -- (n1);

	\draw[->] (n2) -- (sa);
	\draw[->] (sa) -- (n3);
	\draw[->] (n3) -- (plus1);
	\draw[->] (plus1) -- (n4);
	\draw[->] (n4) -- (ffn);
	\draw[->] (ffn) -- (n5);
	\draw[->] (n5) -- (plus2);
	\draw[->] (plus2) -- ++(1.0,0);
	
	% Add the feedback loops
	% First feedback loop for first group
	\path (n1) -- (n2) coordinate[pos=0.7] (midpoint);
	\draw[thick, red, <-] 
	(plus1) -- ++(0,0.7) -| (midpoint);
	%\draw[<-] (plus1) -- ++(0,0.8) -| (n2);
	
	% Second feedback loop for second group
	%\draw[->] (plus2) -- ++(0,0.8) -| (n4);
	
	% Large feedback loop around the entire complex
	\path (plus1) -- (n4) coordinate[pos=0.5] (midpoint);
	\draw[thick, red, <-] 
	(plus2) -- ++(0,0.7) -| (midpoint);
	%\draw[->] (enclosure.east) ++(0.5,0) -- ++(0,1.5) -| ($(enclosure.west) + (-0.5,0)$) -- (enclosure.west);
\end{tikzpicture}
\subcaption{Setting 4}

   	\begin{tikzpicture}[
		box/.style={draw, minimum width=0.8cm, minimum height=0.6cm, thick},
		inputnorm/.style={draw, minimum width=0.8cm, minimum height=0.6cm, 
			text=red, font=\sffamily, align=center,   
			outer sep=0pt},
		prenorm/.style={draw, 
			minimum width=0.8cm, minimum height=0.6cm, 
			text=darkgreen, font=\sffamily, align=center,   
			outer sep=0.0pt},
		prenorm2/.style={draw, 
			minimum width=0.8cm, minimum height=0.6cm, 
			text=darkgreen, font=\sffamily, align=center,   
			outer sep=0.0pt},
		prenorm3/.style={draw,  rounded corners=3mm,
			minimum width=0.8cm, minimum height=0.6cm, 
			text=darkgreen, font=\sffamily, align=center,   
			outer sep=0.0pt},    
		% prenorm3/.style={draw opacity=0.2, fill=darkgreen, dashed, minimum width=0.8cm, minimum height=0.8cm, text=white, font=\sffamily, think inner sep=1pt}, 
		midnorm/.style={draw, 
			minimum width=0.8cm, minimum height=0.6cm, 
			text=blue, font=\sffamily, align=center,   
			outer sep=0pt},
		qknorm/.style={draw, 
			minimum width=0.8cm, minimum height=0.6cm, 
			text=magenta, font=\sffamily, align=center,   
			outer sep=0pt},
		bigbox/.style={draw, minimum width=9.0cm, minimum height=1.7cm, thick},
		circ/.style={draw, circle, minimum size=0.6cm, thick},
		>=Stealth
		]
		%rounded corners, rounded corners, 
		% Input modules

		\node[box] (we) at (0,0) {WE/PE};
		\node[inputnorm, right=0.4cm of we] (n1) {InputNorm};
		
		% Large enclosing box positioned relative to first elements
		\node[bigbox, right=0.2cm of n1, anchor=west] (enclosure) {};
		\node[anchor=south east] at (enclosure.south east) {\large N};
		
		% First group inside large box
		\node[prenorm] (n2) at ($(enclosure.west) + (0.8,-0.0)$) {PreNorm};
		\node[qknorm, right=0.4cm of n2] (sa) {SA with\\QKNorm};
		\node[midnorm, right=0.4cm of sa] (n3) {MidNorm};
		\node[circ, right=0.4cm of n3] (plus1) {$+$};
		
		% Second group inside large box
		%\node[prenorm3, right=0.4cm of plus1] (n4) {PreNorm};
		\node[box, right=0.4cm of plus1] (ffn) {FFN};
		\node[midnorm, right=0.4cm of ffn] (n5) {MidNorm};
		\node[circ, right=0.4cm of n5] (plus2) {$+$};
		
		% Connect input to first group
		\draw[->] (n1) --  (n2);

		\draw[->] ++(-1.0,0)  -- (we);
		% Connect the components with arrows
		\draw[->] (we) -- (n1);

		\draw[->] (n2) -- (sa);
		\draw[->] (sa) -- (n3);
		\draw[->] (n3) -- (plus1);
		\draw[->] (plus1) -- (ffn);
		%\draw[->] (n4) -- (ffn);
		\draw[->] (ffn) -- (n5);
		\draw[->] (n5) -- (plus2);
		\draw[->] (plus2) -- ++(1.0,0);
		
		% Add the feedback loops
		% First feedback loop for first group
		\path (n1) -- (n2) coordinate[pos=0.7] (midpoint);
		\draw[thick, red, <-] 
		(plus1) -- ++(0,0.7) -| (midpoint);
		%\draw[<-] (plus1) -- ++(0,0.8) -| (n2);
		
		% Second feedback loop for second group
		%\draw[->] (plus2) -- ++(0,0.8) -| (n4);
		
		% Large feedback loop around the entire complex
		\path (plus1) -- (n4) coordinate[pos=0.5] (midpoint);
		\draw[thick, red, <-] 
		(plus2) -- ++(0,0.7) -| (midpoint);
		%\draw[->] (enclosure.east) ++(0.5,0) -- ++(0,1.5) -| ($(enclosure.west) + (-0.5,0)$) -- (enclosure.west);
		
	\end{tikzpicture}
\subcaption{Setting 5}

\end{figure}

\section{Five different network settings}
\label{appendix:five_network_setting}
Figure~\ref{fig:five_settings} illustrates five different network settings. We have conducted a ablation study for these five settings in the main body part.

\ 

\

\section{Comparision of mSGDW and AdamW on larger models}
We further compare larger V-DNT and L-DNT models and original ViT and GPT2 models on ImageNet and OpenWebText using mSGDW and AdamW. The results are shown in Figures~\ref{fig:lheat_large_200k}, ~\ref{fig:xlheat_large_200k} and~\ref{fig:vheat_huge}.

We see that on OpenWebText, L-DNT-large with mSGDW achieves a comparable performance with L-DNT-large with AdamW and achieves a much better performance than GPT2-large with mSGDW. Meanwhile,  we find out that V-DNT-huge with mSGDW achieves a comparable performance with V-DNT-huge with AdamW and obtains a much better performance than ViT-huge with mSGDW.

% \begin{figure}[H]
% 	\centering
% 	\includegraphics[width=0.8\textwidth,height=0.60\linewidth]{figures/lHeat/l_dnt_large_100k.pdf}
% 	\caption{L-DNT-Large (774M) training with 100K in total on OpenWebText.}
% \label{fig:lheat_large_100k}
% \end{figure}

\begin{figure}[H]
	\centering
	\includegraphics[width=0.8\textwidth,height=0.60\linewidth]{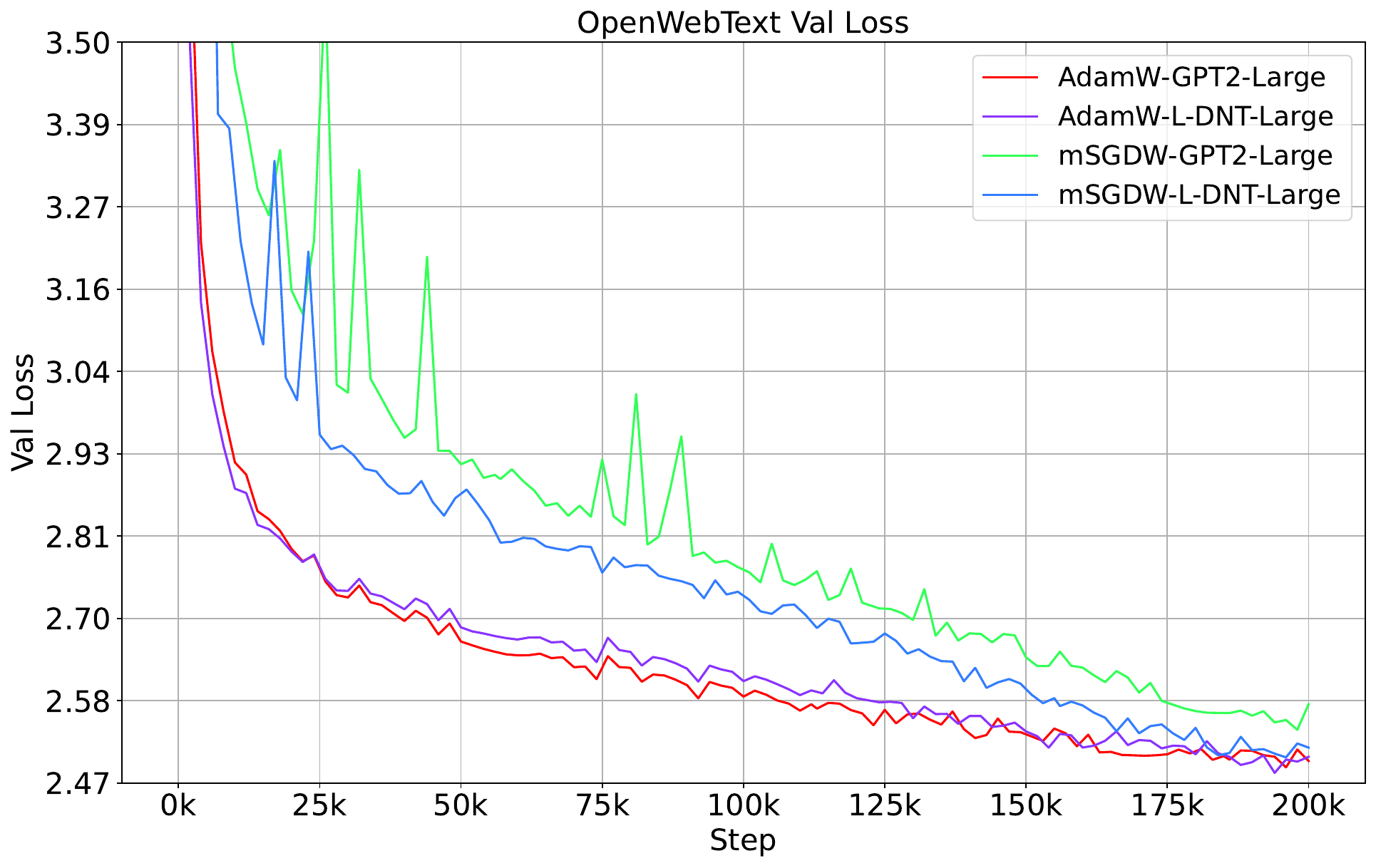}
	\caption{L-DNT-Large (774M) training with 200K in total on OpenWebText.}
\label{fig:lheat_large_200k}
\end{figure}

\begin{figure}[H]
	\centering
	\includegraphics[width=0.8\textwidth,height=0.60\linewidth]{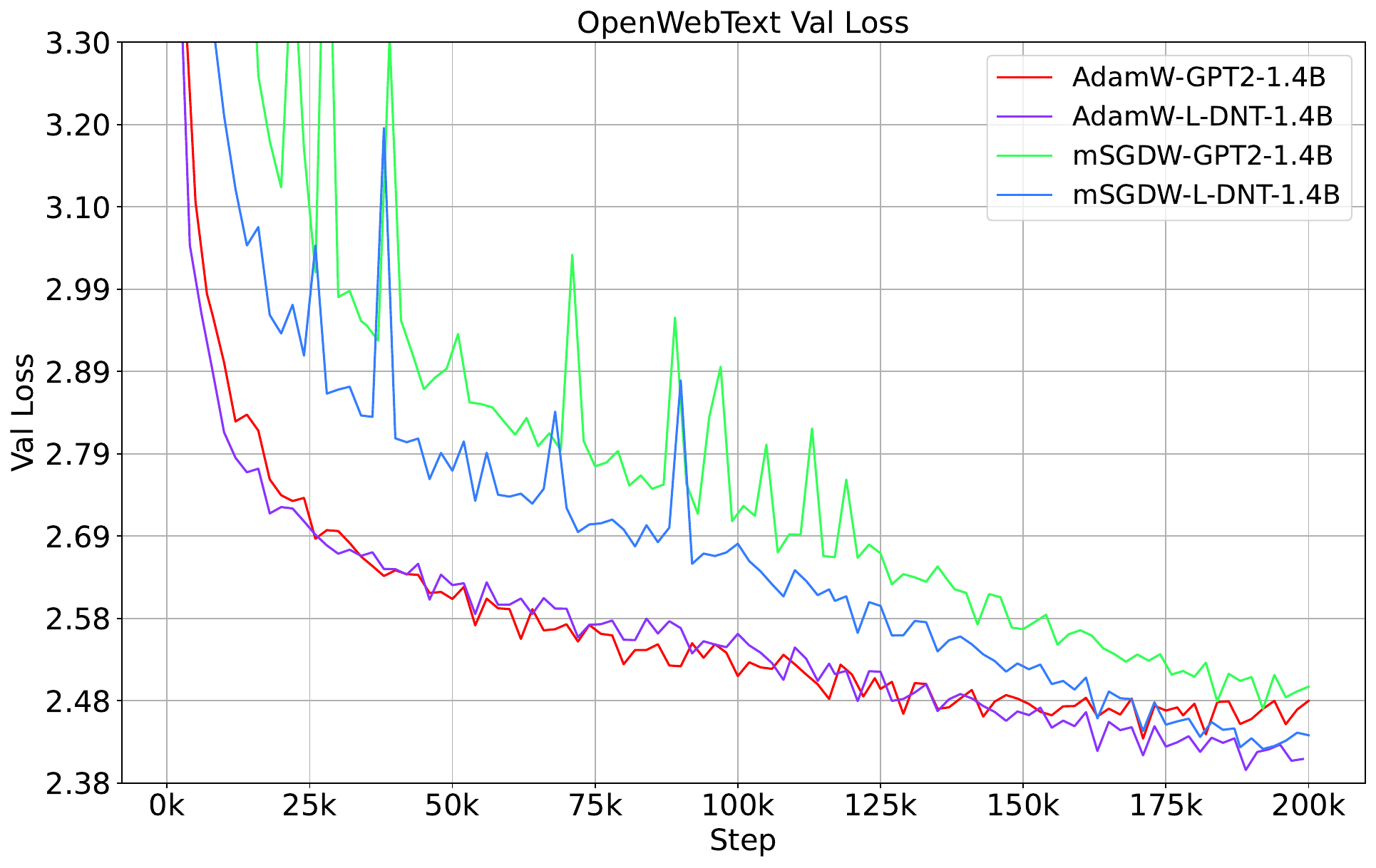}
	\caption{L-DNT-XL (1436M) training with 200K in total on OpenWebText.}
\label{fig:xlheat_large_200k}
\end{figure}

\begin{figure}[H]
	\centering
	\includegraphics[width=0.8\textwidth,height=0.60\linewidth]{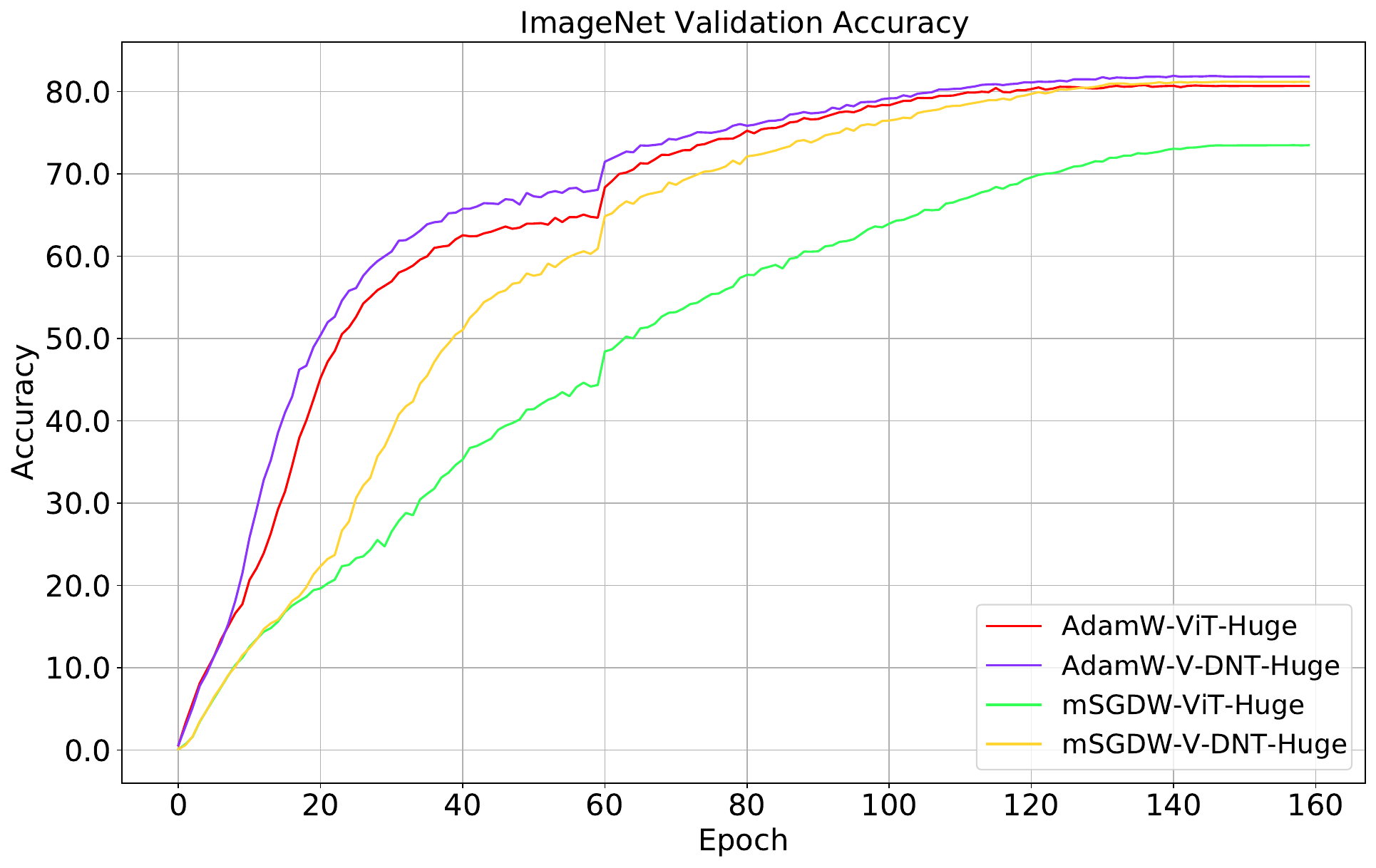}
	\caption{V-DNT-Huge (632M) on ImageNet}
\label{fig:vheat_huge}
\end{figure}

%
%\begin{figure}[htbp]
%	\centering
%	\begin{minipage}{0.495\linewidth}
%		\centering
%		\includegraphics[width=0.99\linewidth,height=0.75\linewidth]{figures/ab_140k.jpg}
%		\label{chutian1}%文中引用该图片代号
%	\end{minipage}
%	%\qquad
%	\begin{minipage}{0.495\linewidth}
%		\centering
%		\includegraphics[width=0.99\linewidth,height=0.75\linewidth]{figures/ab_original.jpg}
%		\label{chutian2}%文中引用该图片代号
%	\end{minipage}
%	\caption{Ablation study of different normalizations with momentum SGD optimizer.}
%\end{figure}

% \ 

% \

% \section{Comparision of GPU memory used by mSGDW and AdamW}

% \begin{table}[ht]
% 	\centering
% 	\caption{Comparision of GPU memory used by mSGDW and AdamW. DNT+AdamW means the network usage and the optimizer usage of GPU memory.}
% 	\begin{tabular}{ccccc}
%           \hline
%                    &       AdamW  & mSGDW       & DNT+AdamW & DNT+mSGDW \\ 
% 		Memory       & 22.9  GB      & 14.5 GB & 67 GB & 61 GB \\
		
% 		\bottomrule
% 	\end{tabular}
% 	\label{tab:experimental_results}
% \end{table}

\

\

\section{A brief introduction to some existing optimizers}

Optimization methods in deep learning can be broadly categorized into first-order methods and second-order methods, each with distinct characteristics and applications. First-order optimization algorithms dominate deep learning due to their computational efficiency, particularly for high-dimensional and large-scale problems.
First-order methods rely primarily on gradient information to find the minimum or maximum of a function. Based on learning rate selection strategies, these methods can be divided into optimizers with fixed step size and optimizers with adaptive learning rate.

Stochastic Gradient Descent (SGD)~\citep{sgd_robbins1951stochastic} serves as the foundational algorithm for neural network optimization. It updates parameters in the opposite direction of the gradient of the objective function. While simple and effective, vanilla SGD can struggle with navigating ravines and saddle points in the loss landscape.
Momentum SGD (mSGD)~\citep{nesterov1983method} addresses the limitations of vanilla SGD by accelerating gradient descent in relevant directions while dampening oscillations. This method augments the gradient direction with a fraction of the update vector from the previous step, allowing faster convergence and helping escape local minima.
Other notable variants include signSGD~\citep{signsgd_bernstein2018signsgd}, which uses only the sign of gradients for updates; SVRG~\citep{svrg_johnson2013accelerating}, which reduces variance in stochastic gradients; LARS~\citep{lars_you2017large}, which adjusts learning rates layer-wise.

Adaptive methods revolutionized gradient-based optimization by incorporating two key innovations. First, they implement parameter-specific learning rate adaptation, performing smaller updates for frequently occurring features and larger updates for infrequent features. Second, they incorporate historical gradient information, often approximating second-order properties of the loss landscape.
AdaGrad~\citep{adagrad_duchi2011adaptive} adapts learning rates based on historical gradient information and is particularly effective for sparse data. RMSprop~\citep{rmsprop_hinton2012rmsprop} addresses AdaGrad's radically diminishing learning rates by using an exponentially weighted moving average. Adam~\citep{adam_kingma2014adam} combines momentum with adaptive learning rates, incorporating both first and second moments of gradients. AdamW~\citep{adamw_IlyaLoshchilov2018FixingWD} modifies Adam with more effective weight decay regularization, while Adafactor~\citep{adafactor_shazeer2018adafactor} provides a memory-efficient adaptive method.
Défossez et al. provides a unified formulation for adaptive methods like AdaGrad, Adam, and AdaDelta.

The field continues to evolve with recent innovations including MUON~\citep{muon_jordan2024muon}, LION~\citep{lion_chen2024symbolic},  Sophia~\citep{sophia_liu2023sophia}, and Mars~\citep{mars_yuan2024mars}. These methods represent the cutting edge of adaptive optimization techniques, further advancing efficiency and performance in training deep learning models.

\citet{survey_of_optimization_wang2025survey} give a detailed analysis and survey of optimization methods, we would like to recommend the audience to refer to their paper for a full reference.

\textbf{Remark.} This paper is orthogonal to these works discussed in this section. Our primary contribution is to demonstrate that vanilla mSGD can achieve strong performance on a Transformer architecture when it does not have a heavy-tail problem in gradients. Notably, the optimizers discussed here can also be effectively applied to our proposed DNT network.

\end{document}